\documentclass[conference]{IEEEtran}
\IEEEoverridecommandlockouts
\pdfoutput=1
\usepackage{cite}
\usepackage{amsmath,amssymb,amsfonts}
\usepackage{algorithmic}
\usepackage{graphicx}
\usepackage{textcomp}
\usepackage{xcolor}
\usepackage[numbers]{natbib}
\usepackage{amsmath}
\usepackage{amsthm}
\usepackage{amssymb}
\usepackage{cite}
\usepackage{hyperref}
\usepackage{algorithm}
\usepackage{subfigure}
\usepackage{booktabs}
\usepackage{makecell}
\newcommand{\RNum}[1]{\uppercase\expandafter{\romannumeral #1\relax}}

\def\BibTeX{{\rm B\kern-.05em{\sc i\kern-.025em b}\kern-.08em
    T\kern-.1667em\lower.7ex\hbox{E}\kern-.125emX}}
\begin{document}

\title{Accelerating Federated Learning via\\
	Momentum Gradient Descent
}
\author{Wei Liu, Li Chen, Yunfei Chen, and Wenyi Zhang      
	
	\thanks{
		
		W. Liu, L. Chen and W. Zhang are with Department of Electronic Engineering and Information Science, University of Science and Technology of China. 
		(e-mail: liuwei93@mail.ustc.edu.cn, \{chenli87, wenyizha\}@ustc.edu.cn).
		Y. Chen is with the School of Engineering, University of Warwick, Coventry
		CV4 7AL, U.K. (e-mail: Yunfei.Chen@warwick.ac.uk).
}}
\maketitle

\begin{abstract}
Federated learning (FL) provides a communication-efficient approach to solve machine learning problems concerning distributed data, without sending raw data to a central server. However, existing works on FL only utilize first-order gradient descent (GD) and do not consider the preceding iterations to gradient update which can potentially accelerate convergence. In this paper, we consider momentum term which relates to the last iteration. The proposed momentum federated learning (MFL) uses momentum gradient descent (MGD) in the local update step of FL system. We establish global convergence properties of MFL and derive an upper bound on MFL convergence rate.
Comparing the upper bounds on MFL and FL convergence rate, we provide conditions in which MFL accelerates the convergence. For different machine learning models, the convergence performance of MFL is evaluated based on experiments with MNIST dataset.  Simulation results comfirm that MFL is globally convergent and further  reveal significant convergence improvement over FL. 
\end{abstract}

\begin{IEEEkeywords}
accelerated convergence, distributed machine learning, federated learning, momentum gradient descent.
\end{IEEEkeywords}
\section{Introduction}
Recently, data-intensive machine learning has been applied in various fields, such as autonomous driving \cite{chen2015deepdriving}, speech recognition \cite{deng2013recent}, image classification \cite{krizhevsky2012imagenet} and disease detection \cite{esteva2017dermatologist} since this technique provides beneficial solutions to extract the useful information hidden in data. It now becomes a common tendency that machine-learning
systems are deploying in architectures that include ten of thousands of
processors \cite{jordan2015machine}. 
Great amount of data is generated by various parallel and distributed physical objects.

Collecting data from edge devices to the central server is necessary for distributed machine learning scenarios. 
In the process of distributed data collection, there exist significant challenges such as energy efficiency problems and system latency problems.
The energy efficiency of distributed data collection was considered in wireless sensor networks (WSNs) due to limited battery capacity of sensors \cite{subramanian2006sleep}; 
In fifth-generation (5G) cellular networks, a round-trip delay from terminals through the network back to terminals demands much lower latencies, potentially down to 1 ms, to facilitate human tactile to visual feedback control \cite{fettweis2014tactile}.
Thus, the challenges of data aggregation in distributed system  urgently require communication-efficient solutions.

In order to overcome these challenges, cutting down transmission distance and reducing the amount of uploaded data from edge devices to the network center are two effective ways. To reduce transmission distance, mobile edge computing (MEC) in \cite{patel2014mobile} is an emerging technique where the computation and storage resources are pushed to proximity of edge devices where the local task and data offloaded by users can be processed. 
In this way, the distance of large-scale data transmission is greatly shortened and the latency has a significant reduction \cite{mao2017survey}. Using machine learning for the prediction of uploaded task execution time achieves a shorter processing delay \cite{hu2019learning}, and dynamic resource scheduling was studied to optimize resources allocation of MEC system in \cite{wang2018dynamic}.   
Moreover, using machine learning to offload computation for MEC can further reduce the computation and communication overhead \cite{yu2017computation}. 
To reduce the uploaded data size, model-based compression approaches, where raw data are compressed and represented by well-established model parameters, demonstrate significant compression performance \cite{hung2012evaluation}. Lossy compression  is also an effective strategy to decrease the uploaded data size \cite{di2018efficient}, \cite{di2017optimization}.    
Compressed sensing, where the sparse data of the edge can be efficiently sampled and reconstructed with transmitting a much smaller data size, was applied to data acquisition of Internet of Things (IoT) network \cite{yang2013wireless}, \cite{li2012compressed}. Further, the work in \cite{xu2018making} proposed a blockchain-based data sharing to enhance the communication efficiency across resource-limited edges. All the aforementioned works need to collect raw data from individual device.

To avoid collecting raw data for machine learning in distributed scenarios, a novel approach named \textit{Federated Learning} (FL) has emerged as a promising solution \cite{mcmahan2016communication}. 
The work in \cite{wang2018edge} provided a fundamental architecture design of FL.
Considering the growing computation capability of edge nodes (devices), FL decentralizes the centralized machine learning task and assigns the decomposed computing tasks to the edge nodes where the raw data are stored and learned at the edge nodes. After a fixed iteration interval, each edge node transmits its learned model parameter to the central server.
This strategy can substantially decrease consumption of communication resources and improve communication-efficiency. To further improve the energy efficiency of FL, the work in \cite{wang2019adaptive} proposed
an adaptive FL approach, where the aggregation frequency can be adjusted adaptively to minimize the loss function under a fixed resource budget.
To reduce the uplink communication costs, the work in \cite{konevcny2016federated} proposed structured and sketched updates method, and compression techniques were adopted to reduce parameter dimension in this work. 
A novel algorithm was proposed to compress edge node updates in \cite{hardy2017distributed}. In this algorithm, gradient selection and adaptive adjustment of learning rate were used for efficient compression. 
For security aggregation of high-dimensional data, the works in \cite{bonawitz2016practical}, \cite{bonawitz2017practical} provided a communication-efficient approach, where the server can compute the sum of model parameters from edge nodes without knowing the contribution of each individual node.  In \cite{nishio2018client}, under unbalanced resource distribution in
network edge, FL with client (edge node) selection was proposed for actively managing the clients aggregation according to their resources condition.
In \cite{zhao2018federated}, non-iid data distribution was studied and a strategy was proposed to improve FL learning performance under this situation.

However, existing FL solutions generally use gradient descent (GD) for loss function minimization. GD is a one-step method where the next iteration depends only on the current gradient. Convergence rate of GD can be improved by accounting for more preceding iterations \cite{nemirovsky1983problem}. Thus, by introducing the last iteration, which is named momentum term, momentum gradient descent (MGD) can accelerate the convergence. The work in \cite{polyak1964some} gave a theoretical explanation that under the assumption of strong convexity, MGD speeds up the convergence in comparison with GD. Further work in \cite{ghadimi2015global} provided conditions in which MGD is globally convergent.

Motivated by the above observation, we propose a new federated learning design of \textit{Momentum Federated Learning} (MFL) in this paper.
In the proposed MFL design, we introduce momentum term in  FL local update and leverage MGD to perform local iterations. 
Further, the global convergence of the proposed MFL is proven. We derive the theoretical convergence bound of MFL. Compared with FL \cite{wang2019adaptive}, the proposed MFL has an accelerated convergence rate under certain conditions. On the basis of MNIST dataset, we numerically study the proposed MFL and obtain its loss function curve. The experiment results show that MFL converges faster than FL for different machine learning models. The contributions of this paper are summarized as follows: 
\begin{itemize} 
	\item \textit{MFL design}: According to the characteristic that MGD facilitates machine learning convergence in the centralized situation, we propose MFL design where MGD is adopted to optimize loss function in local update. The proposed MFL can improve the convergence rate of distributed learning problem significantly.
	\item \textit{Convergence analysis for MFL}:
	We prove that the proposed MFL is globally convergent on convex optimization problems, and derive its theoretical  upper bound on convergence rate. 
	We make a comparative analysis of convergence performance between the proposed MFL and FL. It is proven that MFL improves convergence rate of FL under certain conditions.
	\item \textit{Evaluation based on MNIST dataset}:
	We evaluate the proposed MFL's convergence performance via simulation based on MNIST dataset with different machine learning models. Then an experimental comparison is made between FL and the proposed MFL. The simulation results show that MFL is globally convergent and
	confirm that MFL provides a significant improvement of convergence rate.
\end{itemize}

The remaining part of this paper is organized as follows. We introduce the system model to solve the learning problem in distributed scenarios in Section \RNum{2} and subsequently elaborate the existing solutions in Section \RNum{3}. In Section \RNum{4}, we describe the design of MFL in detail. 
Then in Section \RNum{5} and \RNum{6}, we present the convergence analysis of MFL and the comparison between FL and MFL, respectively.
Finally, we show experimentation results in Section \RNum{7} and draw a conclusion in Section \RNum{8}.

\begin{figure}[!t]
	
	\centering
	\includegraphics[scale=0.34]{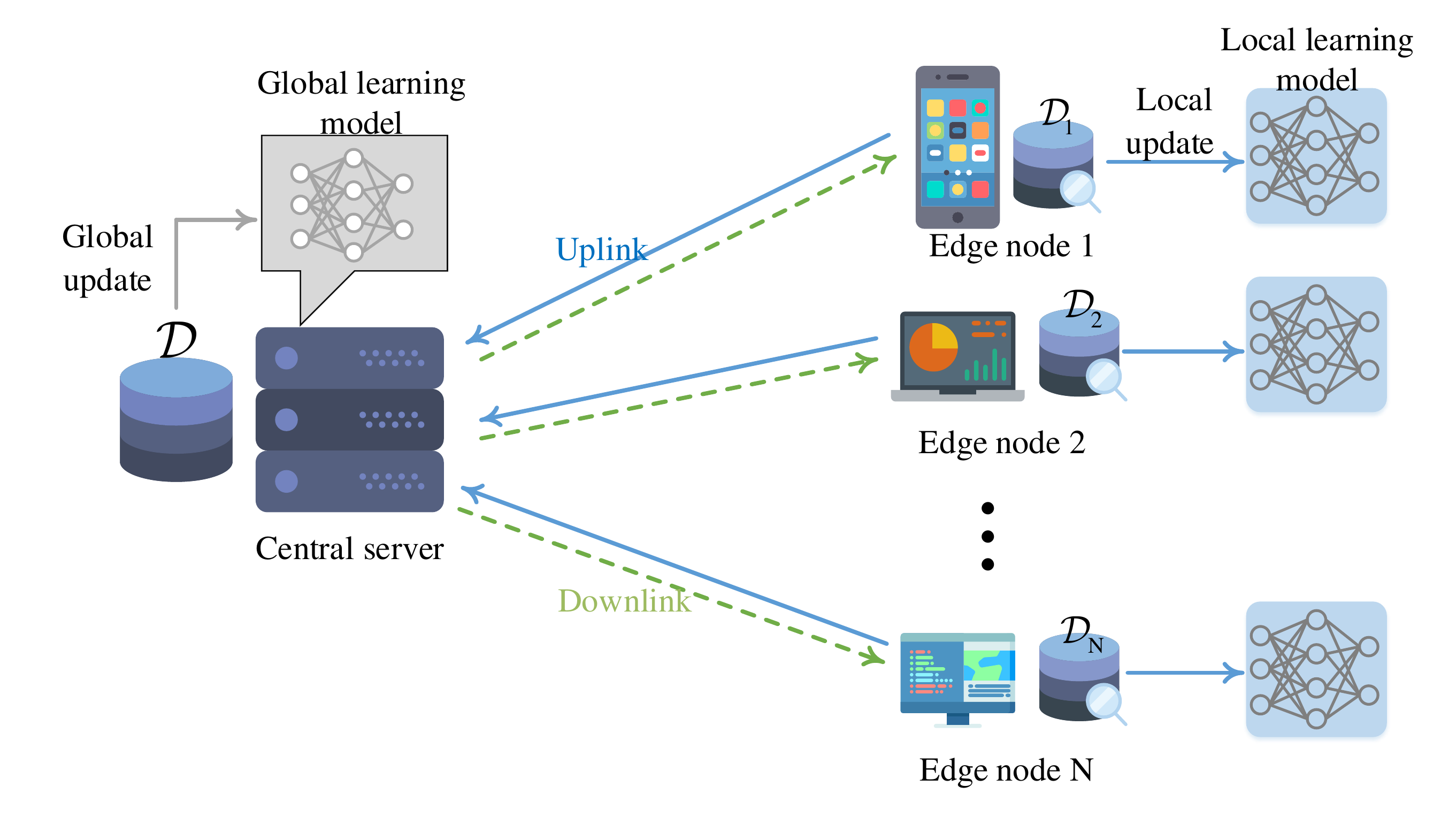}        
	\caption{The simplified structure of learning system for distributed user data}
	\label{fig2}
	
\end{figure}

\section{System Model}

In this paper, considering a simplified system model, we discuss the distributed network as shown in Fig. \ref{fig2}. This model has $N$ edge nodes and a central server.  
This $N$ edge nodes, which have limited communication and computation resources, contain local datasets $\mathcal{D}_1, \mathcal{D}_2,..., \mathcal{D}_i,..., \mathcal{D}_N$, respectively. So the global dataset is $\mathcal{D}\triangleq\mathcal{D}_1\cup\mathcal{D}_2\cup\cdots\cup\mathcal{D}_N$. Assume that $\mathcal{D}_i\cap\mathcal{D}_j=\emptyset$ for $i\neq j$. We define the number of samples in node $i$ as $|\mathcal{D}_i|$ where $|\cdot|$ denotes the size of the set. The total number of all nodes' samples is $|\mathcal{D}|$, and $|\mathcal{D}|=\sum_{i=1}^{N}|\mathcal{D}_i|$. 
The central server connects all the edge nodes for information transmission.  

We define \textit{the global loss function} at the central server as $F(\textbf{w})$, where $\textbf{w}$ denotes the model parameter. Different machine learning models correspond to different $F(\cdot)$ and $\textbf{w}$. We use $\textbf{w}^{*}$ to represent the optimal parameter for minimizing the value of $F(\textbf{w})$. Based on the presented model, the learning problem is to minimize $F(\textbf{w})$ and it can be formulated as follows:
\begin{align}\label{miniF}
\textbf{w}^{*}\triangleq \mathop {\arg} \mathop {\min} F(\textbf{w}).
\end{align}
Because of the complexity of machine learning model and original dataset, finding a closed-form solution of the above optimization problem is usually impossible. So algorithms based on gradient iterations are used to solve \eqref{miniF}. If raw user data are collected and stored in the central server, we can use centralized learning solutions to \eqref{miniF} while if raw user data are distributed over the edge nodes, FL and the proposed MFL can be applied to optimize this learning problem. 

Under the situation where FL or MFL solutions are used, the local loss function of node $i$ is denoted by $F_i(\textbf{w})$ which is defined merely on $\mathcal{D}_i$. Then we define the global loss function $F(\textbf{w})$ on $\mathcal{D}$ as follows:
\newtheorem{definition}{Definition}
\begin{definition}[Global loss function]\label{def1}
	Given the loss function $F_i(\textup{\textbf{w}})$ of edge node $i$, we define the global loss function on all the distributed datasets as
	\begin{align}\label{defF}
	F(\textup{\textbf{w}})\triangleq\frac{\sum_{i=1}^{N}|\mathcal{D}_i| F_i(\textup{\textbf{w}})}{|\mathcal{D}|}.
	\end{align}
\end{definition}

\section{Existing Solutions}

In this section, we introduce two existing solutions to solve the learning problem expressed by \eqref{miniF}. These two solutions are centralized learning solution and FL solution, respectively.

\subsection{Centralized Learning Solution}

Centralized machine learning is for machine learning model embedded in the central server and each edge node needs to send its raw data to the central sever. In this situation, edge nodes will consume communication resources for data transmission, but without incurring computation resources consumption.

After the central server has collected all datasets from the edge nodes, a usual way to solve the learning problem expressed by \eqref{miniF} is GD which globally converges for convex optimization problem. However, due to the one-step nature of GD that each update relates only to the current iteration, the convergence performance of GD still has the potential to be improved by accounting for the history of iterations. There exist multi-step methods where the next update relies not only on the current iteration but also on the preceding ones. MGD is the simplest multi-step method to solve the learning problem of \eqref{miniF} with convergence acceleration.
In MGD, each gradient update includes the last iteration which is called the momentum term. It has been proven that MGD improves convergence rate of GD in \cite{polyak1964some}.

\subsubsection{GD}

The update rule for GD is as follows:
\begin{align}\label{CGD}
\textbf{w}(t)=\textbf{w}(t-1)-\eta \nabla F(\textbf{w}(t-1)).
\end{align}
In \eqref{CGD}, $t$ denotes the iteration index and $\eta>0$ is the learning step size. The model parameter $\textbf{w}$ is updated along the direction of negative gradient. Using the above update rule, GD can solve the learning problem with continuous iterations.

\subsubsection{MGD}
As an improvement of GD, MGD introduces the momentum term and we present its update rules as follows:
\begin{align}
\label{d} \textbf{d}(t)&=\gamma \textbf{d}(t-1)+\nabla F(\textbf{w}(t-1))\\
\label{w} \textbf{w}(t)&=\textbf{w}(t-1)-\eta \textbf{d}(t),
\end{align}
where $\textbf{d}(t)$ is the momentum term which has the same dimension as $\textbf{w}(t)$, $\gamma$ is the momentum attenuation factor, $\eta$ is the learning step size and $t$ is the iteration index. By iterations of \eqref{d} and \eqref{w} with $t$, $F(\textbf{w})$ can potentially converge to the minimum faster compared with GD. The convergence range of MGD is $-1<\gamma<1$ with a bounded $\eta$ and 
if $0<\gamma<1$, MGD has an accelerated convergence rate than GD under a small $\eta$ typically used in simulations \citep[Result 3]{qian1999momentum}. 

\subsection{FL Solution}
Collecting and uploading the distributed data are challenging for communication-constrained distributed network.  Due to the limited communication resources at edge nodes, the transmission latency of raw user data could be large which deteriorates network feedback performance. Also, the transmission of user data is prone to be at the risk of violating user privacy and security over the entire network. In order to overcome these challenges, FL was proposed as a communication-efficient learning solution without data collection and transmission \cite{mcmahan2016communication}.


In contrast with centralized learning solutions, FL decouples the machine learning task from the central server to each edge node to avoid storing user data  in the server and reduce the communication consumption.  All of edge nodes make up a federation in coordination with the central server. In FL, each edge node performs local update and transmits its model parameter to the central server after a certain update interval. The central server integrates the received parameters to obtain a globally updated parameter, and sends this parameter back to all the edge nodes for next update interval.

The FL design and convergence analysis are presented in \cite{wang2019adaptive} where FL network is studied thoroughly. 
In an FL system, each edge node uses the same machine learning model. We use $\tau$ to denote the global aggregation frequency, i.e., the update interval. Each node $i$ has its local model parameter $\widetilde{\textbf{w}}_i(t)$, where the iteration index is denoted by $t=0,1,2,...$ (in this paper, an iteration means a local update). We use $[k]$ to denote the aggregation interval $[(k-1)\tau,k\tau]$ for $k=1,2,3,...$. 
At $t=0$, local model parameters of all nodes are initialized to the same value. When $t>0$, $\widetilde{\textbf{w}}_i(t)$ is updated locally based on GD, which is the \textit{local update}. After $\tau$ local updates, \textit{global aggregation} is performed and all edge nodes send the updated model parameters to the centralized server synchronously.

The learning process of FL is described as follows. 
\subsubsection{Local Update}
When $t\in[k]$, local updates are performed in each edge node by $$\widetilde{\textbf{w}}_i(t)=\widetilde{\textbf{w}}_i(t-1)-\eta\nabla F_i(\widetilde{\textbf{w}}_i(t-1)),$$ 
which follows GD exactly.
\subsubsection{Global Aggregation}
When $t=k\tau$, global aggregation is performed.
Each node sends $\widetilde{\textbf{w}}_i(k\tau)$ to the central server synchronously. 
The central server takes a weighted average of the received parameters from $N$ nodes to obtain the globally updated parameter $\textbf{w}(k\tau)$ by $$\textbf{w}(k\tau)=\frac{\sum_{i=1}^{N}|\mathcal{D}_i| \widetilde{\textbf{w}}_i(k\tau)}{|\mathcal{D}|}.$$ 
Then $\textbf{w}(k\tau)$ is sent back to all edge nodes as their new parameters and edge nodes perform local update for the next iteration interval.

In \citep[Lemma 2]{wang2019adaptive}, the FL solution has been proven to be globally convergent for convex optimization problems and exhibits good convergence performance. So FL is an effective solution to the distributed learning problem presented in \eqref{miniF}.
\begin{table}[!t]  
	\caption{MFL notation summary}
	\centering
	\label{table1}
	\renewcommand\arraystretch{1.5}
	\begin{tabular}{cp{6.4cm}}  
		
		\toprule[1pt]   
		
		\textbf{Notation} &\multicolumn{1}{c}{\textbf{Definition}}  \\  
		
		\midrule   
		
		$T$; $K$; $N$ & number of total local iterations; number of global aggregations/number of intervals; number of edge nodes     \\  
		
		$t$; $k$; $\tau$; $[k]$  & iteration index; interval index; aggregation frequency with $\tau=T/K$; the interval $[(k-1)\tau, k\tau]$  \\    
		
		$\textbf{w}^*$; $\textbf{w}^{\mathrm{f}}$&global optimal parameter of $F(\cdot)$; the optimal parameter that MFL can obtain in Algorithm \ref{alg1}\\
		
		$\eta$; $\beta$; $\rho$; $\gamma$ &the learning step size of MGD or GD; the $\beta$-smooth parameter of $F_i(\cdot)$; the $\rho$-Lipschitz parameter of $F_i(\cdot)$; the momentum attenuation factor which decides the proportion of momentum term in MGD \\
		
		$\mathcal{D}_i$; $\mathcal{D}$ &the local dataset of node $i$; the global dataset\\
		
		$\delta_i$; $\delta$ &the upper bound between $\nabla F(\textbf{w})$ and $\nabla F_i(\textbf{w})$; the average of $\delta_i$ over all nodes\\
		
		$F_i(\cdot)$; $F(\cdot)$& the loss function of node $i$; the global loss function \\ 
		
		$\textbf{d}(t)$; $\textbf{w}(t)$&  the global momentum parameter at iteration round $t$; the global model parameter at iteration round $t$\\
		
		
		$\widetilde{\textbf{d}}_i(t)$; $\widetilde{\textbf{w}}_i(t)$& the local momentum parameter of node $i$ at iteration round $t$; the local model parameter at iteration round $t$ \\
		
		$\textbf{d}_{[k]}(t)$; $\textbf{w}_{[k]}(t)$& the momentum parameter of centralized MGD at iteration round $t$ in $[k]$; the model parameter of centralized MGD at iteration round $t$ in $[k]$\\
		
		$\theta_{[k]}(t)$; $\theta$; $p$&the angle between vector $\nabla F(\textbf{w}_{[k]}(t))$ and $\textbf{d}_{[k]}(t)$; $\theta$ is the maximum of $\theta_{[k]}(t)$ for $1\leq k\leq K$ with $t\in[k]$; $p$ is the maximum ratio of $\|\textbf{d}_{[k]}(t)\|$ and $\|\nabla F(\textbf{w}_{[k]}(t))\|$ for $1\leq k\leq K$ with $t\in[k]$\\

		\bottomrule[1pt]  
		
	\end{tabular}
	
\end{table}

\section{Design of MFL}
In this section, we introduce the design of MFL to solve the distributed learning problem shown in \eqref{miniF}. We first discuss the motivation of our work. Then we present the design of MFL detailedly and the learning problem based on federated system. The main notations of MFL design and analysis are summarized in Table \ref{table1}.

\subsection{Motivation}
Since MGD improves the convergence rate of GD \cite{polyak1964some}, we want to apply MGD to local update steps of FL and hope that the proposed MFL will accelerate the convergence rate for federated networks. 
\begin{figure}[!t]
	
	\centering
	\includegraphics[scale=0.5]{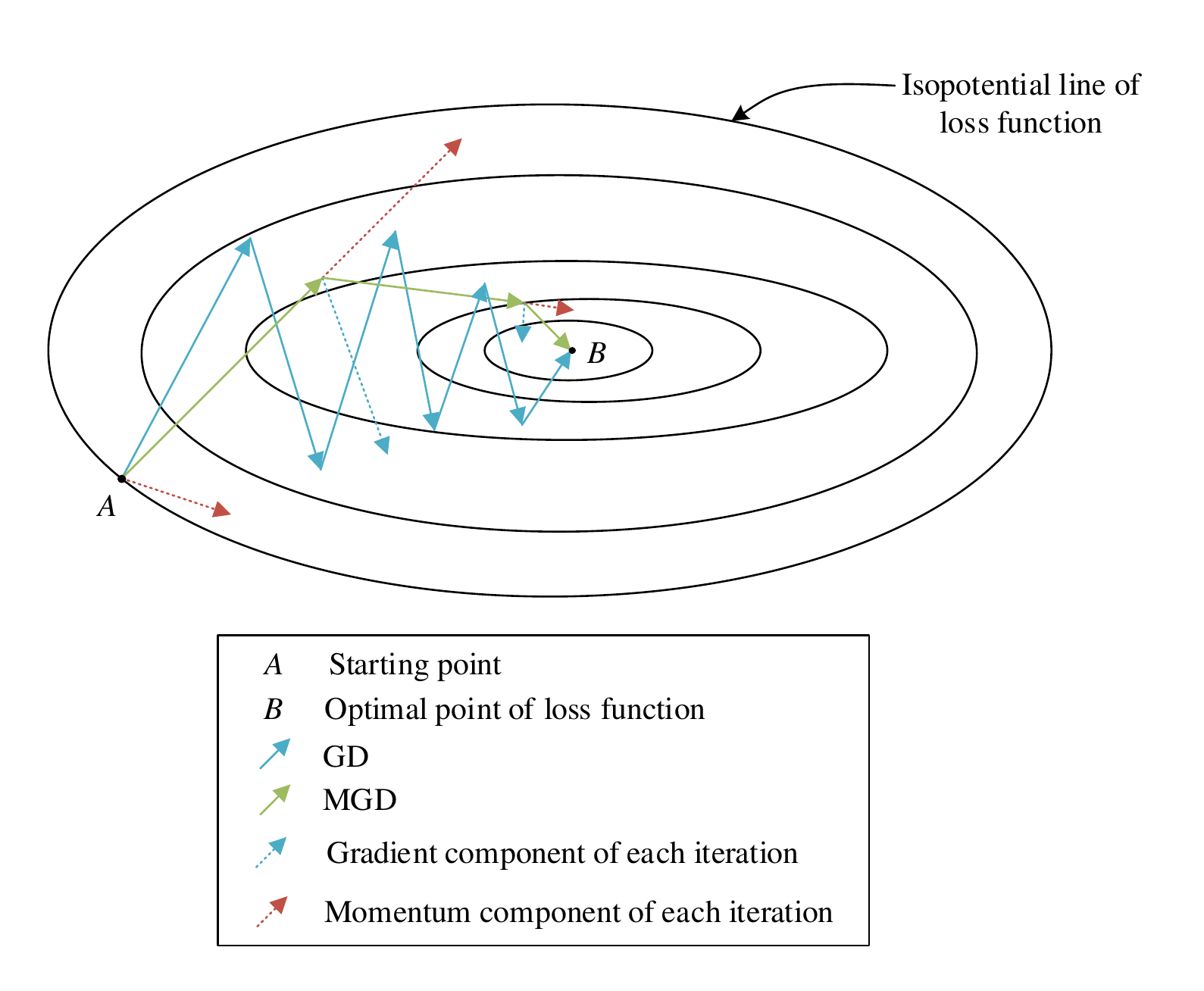}        
	\caption{Comparison of MGD and GD}
	\label{fig1}
	
\end{figure}

Firstly, we illustrate the intuitive influence on optimization problem after introducing the momentum term into gradient updating methods. 
Considering GD, the update reduction of the parameter is $\eta\nabla F(\textbf{w}(t-1))$ which is only proportional to the gradient of $\textbf{w}(t-1)$. The update direction of GD is always along gradient descent so that an oscillating update path could be caused, as shown by the GD update path in Fig. \ref{fig1}.
However, the update reduction of parameter for MGD is a superposition of $\eta\nabla F(\textbf{w}(t-1))$ and $\gamma(\textbf{w}(t-2)-\textbf{w}(t-1))$ which is the momentum term. As shown by the MGD update path in Fig. \ref{fig1}, utilizing the momentum term can deviate the direction of parameter update to the optimal decline significantly and mitigate the oscillation caused by GD. 
In Fig. \ref{fig1}, GD has an oscillating update path and costs seven iterations to reach the optimal point while MGD only needs three iterations to do that, which demonstrates mitigating the oscillation by MGD leads to a faster convergence rate. 

Because edge nodes of distributed networks are usually resource-constrained, solutions to convergence acceleration can attain higher resources utilization efficiency.
Thus, motivated by the property that MGD improves convergence rate, we use MGD to perform local update of FL and this approach is named MFL.

In the following subsections, we design the MFL learning paradigm and propose the learning problem based on the MFL design.

\subsection{MFL}



\begin{figure*}[!t]
	
	\centering
	\includegraphics[scale=0.4]{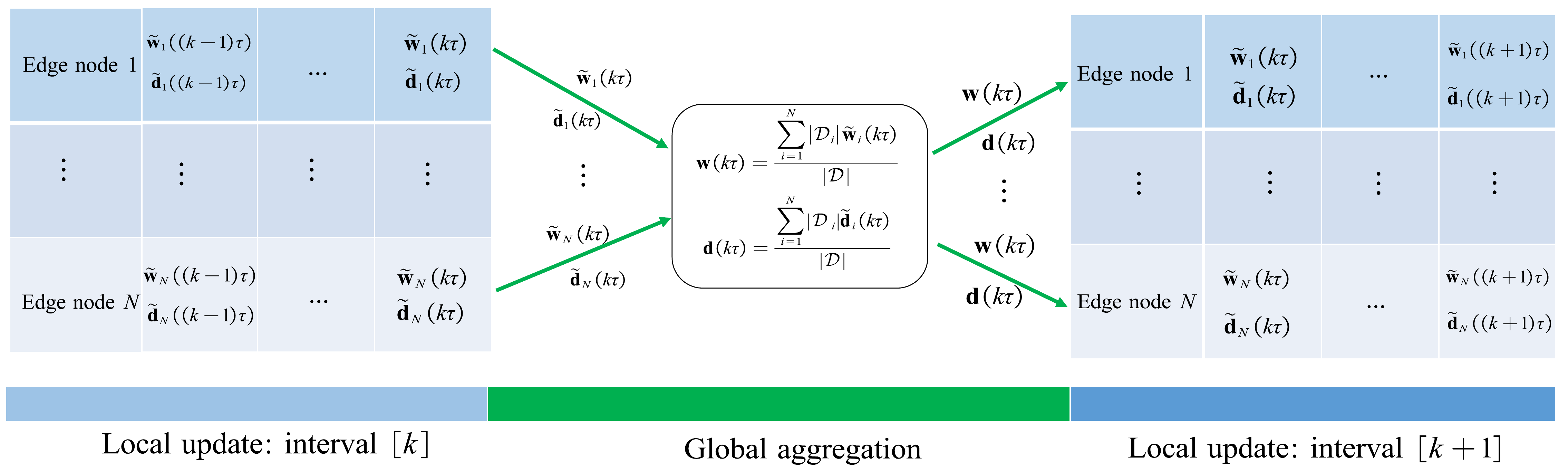}        
	\caption{Illustration of MFL local update and global aggregation steps from interval $[k]$ to $[k+1]$.} 
	\label{fig3}
	
\end{figure*}

In the MFL design, we use $\widetilde{\textbf{d}}_i(t)$ and $\widetilde{\textbf{w}}_i(t)$ to denote momentum parameter and model parameter for node $i$, respectively. 
All edge nodes are set to embed the same machine learning models. So the local loss functions $F_i(\textbf{w})$ is the same for all nodes, and the dimension of both the model parameters and the momentum parameters are consistent. 
The parameters setup of MFL is similar to that of FL.
We use $t$ to denote the local iteration index for $t=0,1,...$, $\tau$ to denote the aggregation frequency and $[k]$ to denote the interval $[(k-1)\tau,k\tau]$ where $k$ denotes the interval index for $k=1,2,...$. 
At $t=0$, the momentum parameters and the model parameters of all nodes are initialized to the same values, respectively. When $t\in[k]$, $\widetilde{\textbf{d}}_i(t)$ and $\widetilde{\textbf{w}}_i(t)$ are updated based on MGD, which is called \textit{local update steps}. When $t=k\tau$, MFL performs \textit{global aggregation steps} where $\widetilde{\textbf{d}}_i(t)$ and $\widetilde{\textbf{w}}_i(t)$ are sent to the central server synchronously. Then in the central server, the global momentum parameter $\textbf{d}(t)$ and the global model parameter $\textbf{w}(t)$ are obtained by taking a weighted average of the received parameters, respectively, and are sent back to all edge nodes for the next interval. 

The learning rules of MFL include the local update and  the global aggregation steps. By continuous alternations of local update and global aggregation, MFL can perform its learning process to minimize the global loss function $F(\textbf{w})$. We describe the MFL learning process as follows. 

First of all, we set  initial values for $\widetilde{\textbf{d}}_i(0)$ and $\widetilde{\textbf{w}}_i(0)$. Then 

\textit{1) Local Update}: When $t\in[k]$, local update is performed at each edge node by
\begin{align}
\label{di} \widetilde{\textbf{d}}_i(t)&=\gamma \widetilde{\textbf{d}}_i(t-1)+\nabla F_i(\widetilde{\textbf{w}}_i(t-1))\\
\label{wi} \widetilde{\textbf{w}}_i(t)&=\widetilde{\textbf{w}}_i(t-1)-\eta \widetilde{\textbf{d}}_i(t).
\end{align} 
According to \eqref{di} and \eqref{wi}, node $i$ performs MGD to optimize the loss function $F_i(\cdot)$ defined on its own dataset.

\textit{2) Global Aggregation}: When $t=k\tau$, node $i$ transmits $\widetilde{\textbf{w}}_i(k\tau)$ and $\widetilde{\textbf{d}}_i(k\tau)$ to the central server which takes weighted averages of the received parameters from $N$ nodes to obtain the global parameters $\textbf{w}(k\tau)$ and $\textbf{d}(k\tau)$, respectively. The aggregation rules are presented as follows:
\begin{align}
\label{dd} \textbf{d}(t)&=\frac{\sum_{i=1}^{N}|\mathcal{D}_i| \widetilde{\textbf{d}}_i(t)}{|\mathcal{D}|}\\
\label{ww} \textbf{w}(t)&=\frac{\sum_{i=1}^{N}|\mathcal{D}_i| \widetilde{\textbf{w}}_i(t)}{|\mathcal{D}|}.
\end{align}
Then the central server sends $\textbf{d}(k\tau)$ and $\textbf{w}(k\tau)$ back to all edge nodes where $\widetilde{\textbf{d}}_i(k\tau)=\textbf{d}(k\tau)$ and $\widetilde{\textbf{w}}_i(k\tau)=\textbf{w}(k\tau)$ are set to enable the local update in the next interval $[k+1]$. Note that only if $t=k\tau$, the value of the global parameters $\textbf{w}(t)$ and $\textbf{d}(t)$ can be observed. But we define $\textbf{d}(t)$ and $\textbf{w}(t)$ for all $t$ to facilitate the following analysis. A typical alternation are shown in Fig. \ref{fig3} which intuitively illustrates the learning steps of MFL in interval $[k]$ and $[k+1]$. 

\begin{algorithm}[!t]
	\caption{\textit{MFL} The dataset in each node has been set, and the machine learning model embedded in edge nodes has been chosen. We have set appropriate model parameters $\eta$ and $\gamma$.} 
	\label{alg1}
	\begin{algorithmic}[1]
		\REQUIRE ~~\\ 
		The limited number of local updates in each node $T$\\
		A given aggregation frequency $\tau$
		\ENSURE ~~\\ 
		The final global model weight vector $\textbf{w}^\mathrm{f}$
		\STATE Set the initial value of $\textbf{w}^\mathrm{f}$, $\widetilde{\textbf{w}}_i(0)$ and $\widetilde{\textbf{d}}_i(0)$. 
		\label{ code:fram:extract }
		\FOR{$t=1,2,...,T$}
		\label{code:fram:trainbase}
		\STATE Each node $i$ performs local update in parallel according to \eqref{di} and \eqref{wi}.$//$\textit{Local update} 
		\label{code:fram:add}
		\IF{$t==k\tau$ where $k$ is a positive integer} 
		\STATE	Set $\widetilde{\textbf{d}}_i(t)\gets\textbf{d}(t)$ and $\widetilde{\textbf{w}}_i(t)\gets\textbf{w}(t)$ for all nodes where $\textbf{d}(t)$ and $\textbf{w}(t)$ is obtained by \eqref{dd} and \eqref{ww} respectively. $//$\textit{Global aggregation}\\
		Update $\textbf{w}^\mathrm{f}\gets \mathop {\arg} \mathop {\min}_{\textbf{w}\in\{\textbf{w}^\mathrm{f},\textbf{w}(k\tau)\}} F(\textbf{w})$
		\ENDIF
		\ENDFOR
	\end{algorithmic}
\end{algorithm}
The learning problem of MFL to attain the optimal model parameter is presented as \eqref{miniF}. However, the edge nodes have limited computation resources with a finite number of local iterations. We assume that $T$ is the number of local iterations and $K$ is the corresponding number of global aggregations. Thus, we have $t\leq T$ and $k\leq K$ with $T=K\tau$. Considering that $\textbf{w}(t)$ is unobservable for $t\neq k\tau$, we use $\textbf{w}^\mathrm{f}$ to denote the achievable optimal model parameter defined on resource-constrained MFL network. Hence, the learning problem is to obtain $\textbf{w}^\mathrm{f}$ within $T$ local iterations particularly, i.e.,
\begin{align}
\label{limited_Fmin}\textbf{w}^\mathrm{f} \triangleq \mathop {\arg \min}_{\textbf{w}\in\{\textbf{w}(k\tau):k=1,2,...,K\}} F(\textbf{w}).
\end{align}
The optimization algorithm of MFL is explained in Algorithm \ref{alg1}. 

\section{Convergence Analysis}
In this section, we firstly make some definitions and assumptions for MFL convergence analysis. Then based on these preliminaries, global convergence properties of MFL following Algorithm \ref{alg1} are established and an upper bound on MFL convergence rate is derived. Also MFL convergence performance  with related parameters is analyzed.

\subsection{Preliminaries}
First of all, to facilitate the analysis, we assume that $F_i(\textbf{w})$ satisfies the following conditions:  

\newtheorem{assumption}{Assumption}
\begin{assumption}\label{ass1}
	For $F_i(\textup{\textbf{w}})$ in node $i$, we assume the following conditions:\\
	1) $F_i(\textup{\textbf{w}})$ is convex\\
	2) $F_i(\textup{\textbf{w}})$ is $\rho$-Lipschitz, i.e., $|F_i(\textup{\textbf{w}}_1)-F_i(\textup{\textbf{w}}_2)|\leq \rho\|\textup{\textbf{w}}_1-\textup{\textbf{w}}_2\|$ for some $\rho>0$ and any $\textup{\textbf{w}}_1$, $\textup{\textbf{w}}_2$\\
	3) $F_i(\textup{\textbf{w}})$ is $\beta$-smooth, i.e., $\|\nabla F_i(\textup{\textbf{w}}_1)-\nabla F_i(\textup{\textbf{w}}_2)\|\leq \beta \|\textup{\textbf{w}}_1-\textup{\textbf{w}}_2\|$ for some $\beta>0$ and any $\textup{\textbf{w}}_1$, $\textup{\textbf{w}}_2$\\
	4) $F_i(\textup{\textbf{w}})$ is $\mu$-strong, i.e., $aF_i(\textup{\textbf{w}}_1)+(1-a)F_i(\textup{\textbf{w}}_2)\geq F_i(a\textup{\textbf{w}}_1+(1-a)\textup{\textbf{w}}_2)+\frac{a(1-a)\mu}{2}\|\textup{\textbf{w}}_1-\textup{\textbf{w}}_2\|^2$, $a\in[0,1]$ for some $\mu>0$ and any $\textup{\textbf{w}}_1$, $\textup{\textbf{w}}_2$ \textup{\citep[Theorem 2.1.9]{nesterov2018lectures}}
\end{assumption}

Because guaranteeing the global convergence of centralized MGD requires that the objective function is strongly convex \cite{polyak1964some}, it is necessary to assume the condition 4.    
Assumption \ref{ass1} is satisfied for some learning models such as support vector machine (SVM), linear regression and logistic regression whose loss functions are presented in Table \ref{table_time}.
From Assumption \ref{ass1}, we can obtain the following lemma:
\newtheorem{lemma}{Lemma}
\begin{lemma} \label{lemma1}
	$F(\textup{\textbf{w}})$ is convex, $\rho$-Lipschitz, $\beta$-smooth and $\mu$-strong.
\end{lemma}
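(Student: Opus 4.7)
The plan is to observe that $F(\mathbf{w})$ is just a convex combination of the $F_i(\mathbf{w})$: writing $p_i \triangleq |\mathcal{D}_i|/|\mathcal{D}|$, we have $p_i \geq 0$ and $\sum_{i=1}^N p_i = 1$ by the disjointness assumption on the local datasets, and $F(\mathbf{w}) = \sum_{i=1}^N p_i F_i(\mathbf{w})$. The entire lemma then reduces to the well-known fact that each of the four properties in Assumption~\ref{ass1} is preserved under convex combinations. I would dispatch the four claims in order, each as a short direct computation using the corresponding inequality in Assumption~\ref{ass1}.

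For convexity, I would apply Jensen's inequality term by term: for any $a\in[0,1]$, $F(a\mathbf{w}_1+(1-a)\mathbf{w}_2) = \sum_i p_i F_i(a\mathbf{w}_1+(1-a)\mathbf{w}_2) \leq \sum_i p_i\bigl(aF_i(\mathbf{w}_1)+(1-a)F_i(\mathbf{w}_2)\bigr) = aF(\mathbf{w}_1)+(1-a)F(\mathbf{w}_2)$. For the $\rho$-Lipschitz bound, I would use the triangle inequality on the scalar differences: $|F(\mathbf{w}_1)-F(\mathbf{w}_2)| \leq \sum_i p_i |F_i(\mathbf{w}_1)-F_i(\mathbf{w}_2)| \leq \rho\|\mathbf{w}_1-\mathbf{w}_2\|$. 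For $\beta$-smoothness, I would first use linearity of the gradient to write $\nabla F(\mathbf{w}) = \sum_i p_i \nabla F_i(\mathbf{w})$, and then apply the triangle inequality in $\|\cdot\|$ together with the $\beta$-smoothness of each $F_i$. Finally, for $\mu$-strong convexity, I would take the strong-convexity inequality in Assumption~\ref{ass1}(4) for each $F_i$, multiply by $p_i$, and sum over $i$; since the quadratic term $\frac{a(1-a)\mu}{2}\|\mathbf{w}_1-\mathbf{w}_2\|^2$ does not depend on $i$ and $\sum_i p_i = 1$, it carries through unchanged.

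There is no real obstacle here: the content of the lemma is exactly the closure of these four properties under nonnegative convex combinations, and each verification is a one-line inequality. The only care needed is to invoke the disjointness of the $\mathcal{D}_i$ to justify $\sum_i p_i = 1$, so that the convex-combination arguments (rather than merely positive-combination arguments) go through cleanly and the same constants $\rho$, $\beta$, $\mu$ are inherited by $F$ without any rescaling.
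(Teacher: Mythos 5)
Your proposal is correct and follows essentially the same route as the paper: the paper's proof simply invokes the definition of $F$ as a weighted average of the $F_i$ together with the triangle inequality and the defining inequalities of each property, which is exactly the convex-combination argument you spell out. Your version just makes explicit the term-by-term verifications that the paper leaves as "directly."
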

\begin{proof}
	According to the definition of $F(\textup{\textbf{w}})$ from \eqref{defF}, triangle inequality and the definition of $\rho$-Lipschitz, $\beta$-smooth and $\mu$-strong, we can derive that $F(\textup{\textbf{w}})$ is convex, $\rho$-Lipschitz, $\beta$-smooth and $\mu$-strong directly. 
\end{proof}

Then we introduce the gradient divergence between $\nabla F(\textbf{w})$ and $\nabla F_i(\textbf{w})$ for any node $i$. It comes from the nature of the difference in datasets distribution. 
\begin{definition}[Gradient divergence]
	We define $\delta_i$ as the upper bound between 
	$\nabla F(\textup{\textbf{w}})$ and $\nabla F_i(\textup{\textbf{w}})$ for any node $i$, i.e.,
	\begin{align}\label{grad_gap}
	\|\nabla F(\textup{\textbf{w}})-\nabla F_i(\textup{\textbf{w}})\|\leq \delta_i.
	\end{align}
	Also, we define the average gradient divergence 
	\begin{align}\label{ave_gra_gap}
	\delta\triangleq\frac{\sum_i |\mathcal{D}_i| \delta_i}{|\mathcal{D}|}.
	\end{align}
\end{definition}
\textbf{Boundedness of $\delta_i$ and $\delta$}:
Based on condition 3 of Assumption \ref{ass1}, we let $\textbf{w}_2=\textbf{w}_i^*$ where $\textbf{w}_i^*$ is the optimal value for minimizing $F_i(\textbf{w})$.  
Because $F_i(\textbf{w})$ is convex, we have $\|\nabla F_i(\textbf{w}_1)\|\leq \beta\|\textbf{w}_1-\textbf{w}_i^*\|$ for any $\textbf{w}_1$, which means $\|\nabla F_i(\textbf{w})\|$ is finite for any $\textbf{w}$. According to Definition \ref{def1} and the linearity of gradient operator, global loss function $\nabla F(\textbf{w})$ is obtained by taking a weighted average of $\nabla F_i(\textbf{w})$. Therefore, $\|\nabla F(\textbf{w})\|$ is finite, and $\|\nabla F(\textup{\textbf{w}})-\nabla F_i(\textup{\textbf{w}})\|$ has an upper bound, i.e., $\delta_i$ is bounded. Further, $\delta$ is still bounded from the linearity in \eqref{ave_gra_gap}.

Since local update steps of MFL perform MGD, the upper bounds of MFL and MGD convergence rate exist certain connections in the same interval.
For the convenience of analysis, we use variables $\textbf{d}_{[k]}(t)$ and $\textbf{w}_{[k]}(t)$ to denote the momentum parameter and the model parameter of \textit{centralized MGD} in each interval $[k]$, respectively. This centralized MGD is defined on global dataset and updated based on global loss function $F(\textbf{w})$. 
In interval $[k]$, the update rules of centralized MGD follow:
\begin{align}
\label{CMGDd}\textbf{d}_{[k]}(t)&=\gamma \textbf{d}_{[k]}(t-1)+\nabla F(\textbf{w}_{[k]}(t-1))\\
\label{CMGDv}\textbf{w}_{[k]}(t)&=\textbf{w}_{[k]}(t-1)-\eta \textbf{d}_{[k]}(t).
\end{align}
At the beginning of interval $[k]$, the momentum parameter $\textbf{d}_{[k]}(t)$ and the model parameter $\textbf{w}_{[k]}(t)$ of centralized MGD  are synchronized with the corresponding parameters of MFL, i.e., 
\begin{align*}
&\textbf{d}_{[k]}((k-1)\tau)\triangleq\textbf{d}((k-1)\tau)\\
&\textbf{w}_{[k]}((k-1)\tau)\triangleq\textbf{w}((k-1)\tau).
\end{align*}
For each interval $[k]$,
the centralized MGD is performed by iterations of \eqref{CMGDd} and \eqref{CMGDv}. In the Fig. \ref{interval}, we illustrate the distinctions  between $F(\textbf{w}(t))$ and $F(\textbf{w}_{[k]}(t))$ intuitively.

\begin{figure}[!t]
	
	\centering
	\includegraphics[scale=0.86]{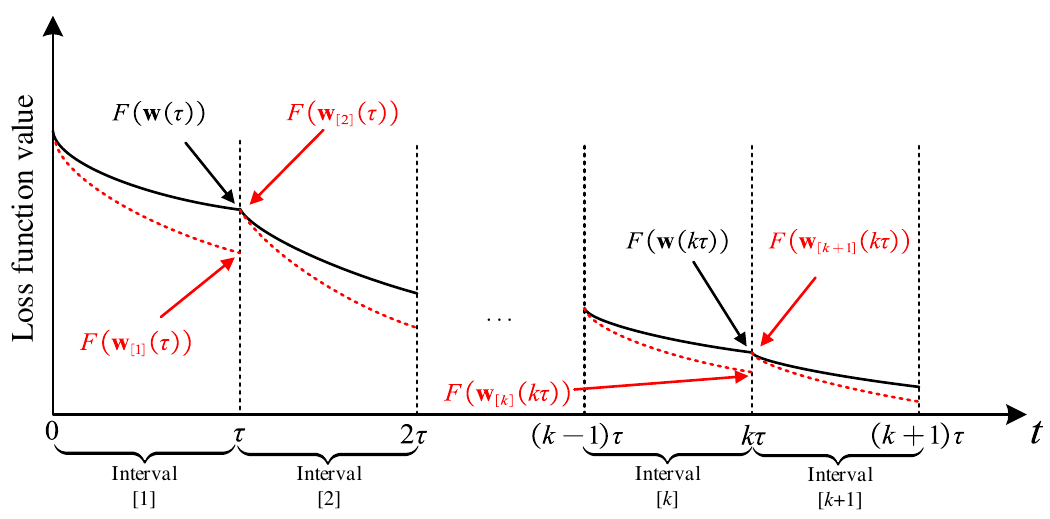}        
	\caption{Illustration of the difference between MGD and MFL in intervals}
	\label{interval}
	
\end{figure}

Comparing with centralized MGD, MFL aggregation interval with $\tau>1$ brings global update delay because of the fact that centralized MGD performs global update on every iteration while MFL is allowed to spread its global parameter to edge nodes after $\tau$ local updates. Therefore, the convergence performance of MFL is worse than that of MGD, which is essentially from the imbalance between several computation rounds and one communication round in MFL design.
The following subsection provides the resulting convergence performance gap between these two approaches.  


\newtheorem{thm}{\bf Theorem}
\subsection{Gap between MFL and Centralized MGD in Interval $[k]$}
Firstly, considering a special case, we consider the gap between MFL and centralized MGD for $\tau=1$. From physical perspective, MFL performs global aggregation after every local update and there does not exist global parameter update delay, i.e, the performance gap is zero. In Appendix \ref{A}, we prove that MFL is equivalent to MGD for $\tau=1$ from theoretical perspective.

Now considering general case for any $\tau \geq1$, the upper bound of gap between $\textbf{w}(t)$ and $\textbf{w}_{[k]}(t)$ can be derived as follows.
\newtheorem{proposition}{Proposition}
\begin{proposition}[Gap of MFL$\&$centralized MGD in intervals]\label{thm2}
	Given $t\in[k]$, the gap between $\textup{\textbf{w}}(t)$ and $\textup{\textbf{w}}_{[k]}(t)$ can be expressed by
	\begin{align}\label{gap}
	\|\textup{\textbf{w}}(t)-\textup{\textbf{w}}_{[k]}(t)\|\leq h(t-(k-1)\tau),
	\end{align}
	where we define 
	\begin{align*}
	A&\triangleq\frac{(1\!+\!\gamma+\!\eta\beta)\!+\!\sqrt{(1\!+\!\gamma\!+\!\eta\beta)^2\!-\!4\gamma}}{2\gamma},\\
	B&\triangleq\frac{(1\!+\!\gamma+\!\eta\beta)\!-\!\sqrt{(1\!+\!\gamma\!+\!\eta\beta)^2\!-\!4\gamma}}{2\gamma},	
	\end{align*}
	\begin{align*}
	E&\triangleq\frac{A}{(A-B)(\gamma A-1)},\\
	F&\triangleq\frac{B}{(A-B)(1-\gamma B)}
	\end{align*}
	and $h(x)$ yields 
	\begin{align}\label{hx}
	h(x)&\!=\!\eta\delta\left[ E (\gamma A)^x\!+\!F (\gamma B)^x\!-\!\frac{1}{\eta\beta}\!-\!\frac{\gamma(\gamma^x-1)-(\gamma-1)x}{(\gamma-1)^2}\right]
	\end{align}
	for $0<\gamma<1$ and any $x=0,1,2,...$. 
	
	Because $F(\textup{\textbf{w}})$ is $\rho$-Lipschitz from \textup{Lemma \ref{lemma1}}, it holds that 
	\begin{align}\label{gapff}
	F(\textup{\textbf{w}}(t))-F(\textup{\textbf{w}}_{[k]}(t))\leq \rho h(t-(k-1)\tau).
	\end{align}
\end{proposition}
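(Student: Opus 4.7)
The plan is to track the deviation $\bar{\textbf{e}}(t) \triangleq \textbf{w}(t) - \textbf{w}_{[k]}(t)$ and its momentum counterpart $\bar{\textbf{f}}(t) \triangleq \textbf{d}(t) - \textbf{d}_{[k]}(t)$ across interval $[k]$, starting from the synchronized state $\bar{\textbf{e}}((k-1)\tau) = \bar{\textbf{f}}((k-1)\tau) = \textbf{0}$. Averaging the local MGD updates \eqref{di}, \eqref{wi} with weights $|\mathcal{D}_i|/|\mathcal{D}|$ and subtracting the centralized updates \eqref{CMGDd}, \eqref{CMGDv} produces the coupled recurrence
\begin{align*}
\bar{\textbf{e}}(t) &= \bar{\textbf{e}}(t-1) - \eta\,\bar{\textbf{f}}(t),\\
\bar{\textbf{f}}(t) &= \gamma\,\bar{\textbf{f}}(t-1) + \bar{G}(t-1),
\end{align*}
with forcing $\bar{G}(t-1) \triangleq \tfrac{1}{|\mathcal{D}|}\sum_i |\mathcal{D}_i|\nabla F_i(\widetilde{\textbf{w}}_i(t-1)) - \nabla F(\textbf{w}_{[k]}(t-1))$. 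Invoking Definition \ref{def1} to write $\nabla F(\textbf{w}_{[k]}(t-1)) = \tfrac{1}{|\mathcal{D}|}\sum_i |\mathcal{D}_i|\nabla F_i(\textbf{w}_{[k]}(t-1))$, the forcing simplifies to a weighted average of $\nabla F_i(\widetilde{\textbf{w}}_i(t-1)) - \nabla F_i(\textbf{w}_{[k]}(t-1))$, hence by $\beta$-smoothness $\|\bar{G}(t-1)\| \leq \beta\,\bar{E}(t-1)$ with $\bar{E}(t-1) \triangleq \tfrac{1}{|\mathcal{D}|}\sum_i|\mathcal{D}_i|\|\widetilde{\textbf{w}}_i(t-1) - \textbf{w}_{[k]}(t-1)\|$.

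Next I would bound $\bar{E}$ by a separate auxiliary analysis. Running the same subtraction at the per-node level picks up an extra cross term $\nabla F_i(\textbf{w}_{[k]}) - \nabla F(\textbf{w}_{[k]})$ that is controlled by the gradient divergence $\delta_i$; averaging in $i$ with weights $|\mathcal{D}_i|/|\mathcal{D}|$ replaces $\delta_i$ by $\delta$. The resulting two-dimensional linear recurrence has constant forcing $(\eta\delta,\delta)^{\top}$ and characteristic polynomial $\lambda^2 - (1+\gamma+\eta\beta)\lambda + \gamma = 0$, whose roots are exactly $\gamma A$ and $\gamma B$ in view of the root--coefficient identities $A+B = (1+\gamma+\eta\beta)/\gamma$ and $AB = 1/\gamma$. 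Solving with $a(0)=0$, $a(1)=\eta\delta$ gives $\bar{E}(x) \leq a(x) = c_A(\gamma A)^x + c_B(\gamma B)^x - \delta/\beta$ in closed form. Feeding $\beta\,a(x-1)$ back into the $\bar{\textbf{f}}$-recurrence, the scalar inequality $v(x) \triangleq \|\bar{\textbf{f}}(t)\| \leq \gamma\,v(x-1) + \beta\,a(x-1)$ is solved by superposition: the $(\gamma A)^{x-1}$ and $(\gamma B)^{x-1}$ pieces of $a(x-1)$ produce particular modes $(\gamma A)^x$ and $(\gamma B)^x$, while the residual constant $-\delta/\beta$ piece generates a $\gamma^x$ mode (from the homogeneous $v$-dynamics) plus a constant. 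Cumulating $\|\bar{\textbf{e}}(t)\| \leq \eta\sum_{k=1}^{x} v(k)$ then introduces the corresponding geometric sums together with a linear-$x$ contribution.

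A direct algebraic simplification using $E + F = 1/(\eta\beta)$ (verified from $(\gamma A - 1)(1-\gamma B) = \eta\beta$, itself a consequence of the root identities) and the elementary identity $\sum_{k=1}^{x}\gamma^{k} = \gamma(\gamma^{x}-1)/(\gamma-1)$ collapses the cumulated expression exactly into $h(x)$: the coefficients of $(\gamma A)^x$ and $(\gamma B)^x$ reduce to $\eta\delta E$ and $\eta\delta F$, and the $\gamma^x$ and linear-$x$ pieces combine into $-\eta\delta[\gamma(\gamma^{x}-1) - (\gamma-1)x]/(\gamma-1)^{2}$. Inequality \eqref{gapff} then follows immediately from \eqref{gap} combined with the $\rho$-Lipschitz property of $F$ asserted in Lemma \ref{lemma1}. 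The main obstacle, I expect, is the algebraic bookkeeping of this last step: threading the auxiliary $a$-recurrence into the $v$-recurrence and then the cumulative sum, and verifying via the root--coefficient identities that the closed form collapses to the stated $h(x)$. A telling consistency check is that $h(0) = h(1) = 0$, which is sharper than the $h(1) \sim \eta\delta$ one would get from a naive per-node triangle inequality and reflects the cancellation $\tfrac{1}{|\mathcal{D}|}\sum_i|\mathcal{D}_i|[\nabla F_i(\textbf{w}_{[k]}) - \nabla F(\textbf{w}_{[k]})] = 0$ that only emerges after the weighted averaging.
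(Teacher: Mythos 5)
Your proposal is correct and follows essentially the same route as the paper's Appendix B: a per-node deviation recurrence forced by $\delta_i$, averaged into a global momentum-deviation recurrence whose forcing loses the $\delta$ offset (which is exactly why $h(1)=0$), followed by a cumulative sum simplified via $E+F=1/(\eta\beta)$. The only difference is cosmetic: you solve the per-node two-dimensional linear recurrence through its characteristic polynomial $\lambda^2-(1+\gamma+\eta\beta)\lambda+\gamma=0$, whereas the paper posits the closed form $f_i(t)=\frac{\delta_i}{\beta}\left(C(\gamma A)^t+D(\gamma B)^t-1\right)$ and verifies it by induction using an auxiliary sequence identity lemma.
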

\begin{proof}
	Firstly, we derive an upper bound of $\|\widetilde{\textbf{w}}_i(t)-\textbf{w}_{[k]}(t)\|$ for node $i$. On the basis of this bound, we extend this result from the local cases to the global one to obtain the final result. The detailed proving process is presented in Appendix \ref{B}.
\end{proof}

Because $h(1)=h(0)=0$ and $h(x)$ increases with $x$ for $x\geq1$, which are proven in Appendix \ref{C}, we always have $h(x)\geq0$ for $x=0,1,2,...$.

From Proposition \ref{thm2}, in any interval $[k]$,  
we have $h(0)=0$ for $t=(k-1)\tau$, which fits the definition $\textbf{w}_{[k]}((k-1)\tau)=\textbf{w}((k-1)\tau)$. 
We still have $h(1)=0$ for $t=(k-1)\tau+1$. This means that there is no gap between MFL and centralized MGD when local update is only performed once after the global aggregation.

It is easy to find that if $\tau=1$, $t-(k-1)\tau$ is either 0 or 1. Because $h(1)=h(0)=0$, the upper bound in \eqref{gap}
is zero, and there is no gap between $F(\textup{\textbf{w}}(t))$ and $F(\textup{\textbf{w}}_{[k]}(t))$ from \eqref{gapff}. This is consistent with Appendix \ref{A} where MFL yields centralized MGD for $\tau=1$. 
In any interval $[k]$, we have $t-(k-1)\tau\in[0, \tau]$. If $\tau>1$, $t-(k-1)\tau$ can be larger than 1. When $x>1$, we know that $h(x)$ increases with $x$. According to the definition of $A$, $B$, $E$ and $F$, we can obtain $\gamma A>1$, $\gamma B<1$ and $E,F>0$ easily.  
Because $0<\gamma<1$, the last term will linearly decrease with $x$ when $x$ is large. 
Therefore, the first exponential term $E(\gamma A)^x$ in \eqref{hx} will be dominant when $x$ is large and the gap between $\textbf{w}(t)$ and $\textbf{w}_{[k]}(t)$ increases exponentially with $t$. 

Also we find $h(x)$ is proportional to the average gradient gap $\delta$. 
It is because the greater the local gradient divergences at different nodes are, the larger the gap will be.
So considering the extreme situation where all nodes have the same data samples ($\delta=0$ because the local loss function are the same), the gap between $\textbf{w}(t)$ and $\textbf{w}(t)$ is zero and MFL is equivalent to centralized MGD.

\subsection{Global Convergence}
We have derived an upper bound between $F(\textup{\textbf{w}}(t))$ and $F(\textup{\textbf{w}}_{[k]}(t))$ for $t\in[k]$. According to the definition of MFL, in the beginning of each interval $[k]$, we set $\textbf{d}_{[k]}((k-1)\tau)=\textbf{d}((k-1)\tau)$ and $\textbf{w}_{[k]}((k-1)\tau)=\textbf{w}((k-1)\tau)$. The global  upper bound on the convergence rate of MFL can be derived based on Proposition \ref{thm2}.

The following definitions are made to facilitate analysis. Firstly, we use $\theta_{[k]}(t)$ to denote the angle between vector $\nabla F(\textbf{w}_{[k]}(t)$ and $\textbf{d}_{[k]}(t)$ for $t\in[k]$, i.e.,
$$\cos\theta_{[k]}(t)\triangleq\frac{\nabla F(\textbf{w}_{[k]}(t))^\mathrm{T}\textbf{d}_{[k]}(t)}{\|\nabla F(\textbf{w}_{[k]}(t)\|\|\textbf{d}_{[k]}(t)\|}$$
where $\theta$ is defined as the maximum value of $\theta_{[k]}(t)$ for $1\leq k\leq K$ with $t\in[k]$, i.e.,
$$\theta\triangleq\max_{1\leq k\leq K,t\in[k]}\theta_{[k]}(t).$$
Then we define 
$$p\triangleq\max_{1\leq k\leq K,t\in[k]}\frac{\|\textbf{d}_{[k]}(t)\|}{\|\nabla F(\textbf{w}_{[k]}(t))\|}$$
and $$\omega\triangleq\min_k \frac{1}{\|\textbf{w}((k-1)\tau)-\textbf{w}^*\|^2}.$$

Based on Proposition \ref{thm2} which gives an upper bound of loss function difference between MFL and centralized MGD, global convergence rate of MFL can be derived as follows.
\begin{lemma}\label{globalconvergence}
	If the following conditions are satisfied:\\
	\indent    1) $\cos\theta\geq0$, $0<\eta\beta<1$ and $0\leq\gamma<1$;\\
	\indent There   exists $\varepsilon>0$ which makes\\
	\indent    2) $F(\textup{\textbf{w}}_{[k]}(k\tau))-F(\textup{\textbf{w}}^*)\geq\varepsilon$ for all $k$;\\ 
	\indent    3) $F(\textup{\textbf{w}}(T))-F(\textup{\textbf{w}}^*)\geq\varepsilon$; \\
	\indent    4) $\omega\alpha-\frac{\rho h(\tau)}{\tau\varepsilon^2}>0$ hold,\\
	then we have 
	\begin{align}
	F(\textup{\textbf{w}}(T))-F(\textup{\textbf{w}}^*)\leq\frac{1}{T\left(\omega\alpha-\frac{\rho h(\tau)}{\tau\varepsilon^2}\right)}
	\end{align} where we defined $$\alpha\triangleq\!\eta(1\!-\!\frac{\beta\eta}{2})\!+\!\eta\gamma(1-\beta\eta)\cos\theta\!-\!\frac{\beta\eta^2\gamma^2 p^2}{2}.$$
\end{lemma}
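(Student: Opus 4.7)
The strategy is to run the convergence argument on the centralized-MGD reference trajectory $\{\textbf{w}_{[k]}(t)\}_{t\in[k]}$ inside each aggregation interval and then patch the intervals together using the MFL-vs-MGD gap from Proposition \ref{thm2}. Writing $G_k\triangleq F(\textbf{w}_{[k]}(k\tau))-F(\textbf{w}^*)$ and $H_k\triangleq F(\textbf{w}((k-1)\tau))-F(\textbf{w}^*)$, the overall target is the one-interval recursion
\begin{align*}
\frac{1}{H_{k+1}}-\frac{1}{H_{k}}\;\geq\;\tau\omega\alpha\;-\;\frac{\rho\,h(\tau)}{\varepsilon^{2}},
\end{align*}
which, telescoped over $k=1,\dots,K$ and inverted, delivers the stated $1/T$ bound once condition 4 is invoked to ensure that the right-hand side is positive and condition 3 to ensure that the inversion is valid.

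\textbf{Per-iteration decrease of centralized MGD.} For $t\in[k]$ I would apply $\beta$-smoothness of $F$ (Lemma \ref{lemma1}) to the update $\textbf{w}_{[k]}(t)=\textbf{w}_{[k]}(t-1)-\eta\textbf{d}_{[k]}(t)$, then substitute $\textbf{d}_{[k]}(t)=\gamma\textbf{d}_{[k]}(t-1)+\nabla F(\textbf{w}_{[k]}(t-1))$. The resulting upper bound contains three contributions: an $\|\nabla F(\textbf{w}_{[k]}(t-1))\|^{2}$ term, a cross term in $\nabla F(\textbf{w}_{[k]}(t-1))^{\mathrm{T}}\textbf{d}_{[k]}(t-1)$, and a $\|\textbf{d}_{[k]}(t-1)\|^{2}$ term. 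Invoking $\cos\theta_{[k]}(t-1)\geq\cos\theta\geq 0$ on the cross term, $\|\textbf{d}_{[k]}(t-1)\|\leq p\,\|\nabla F(\textbf{w}_{[k]}(t-1))\|$ on the quadratic term, and the sign conditions $0<\eta\beta<1$, $0\leq\gamma<1$ to keep the direction of every inequality intact, the goal is the per-step decrease
\begin{align*}
F(\textbf{w}_{[k]}(t-1))-F(\textbf{w}_{[k]}(t))\;\geq\;\alpha\,\|\nabla F(\textbf{w}_{[k]}(t-1))\|^{2},
\end{align*}
with precisely the $\alpha$ given in the statement.

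\textbf{Within-interval telescoping and inter-interval patching.} Convexity (Lemma \ref{lemma1}) plus Cauchy--Schwarz gives $F(\textbf{w}_{[k]}(t-1))-F(\textbf{w}^{*})\leq\|\nabla F(\textbf{w}_{[k]}(t-1))\|\cdot\|\textbf{w}_{[k]}(t-1)-\textbf{w}^{*}\|$; squaring and applying the definition of $\omega$ converts the per-step decrease into $x_{t-1}-x_{t}\geq\omega\alpha\,x_{t-1}^{2}$, where $x_{t}\triangleq F(\textbf{w}_{[k]}(t))-F(\textbf{w}^{*})$. Dividing by $x_{t}x_{t-1}$ and using $x_{t-1}\geq x_{t}$ yields $1/x_{t}-1/x_{t-1}\geq\omega\alpha$, which telescoped from $t=(k-1)\tau+1$ to $t=k\tau$ and combined with the synchronization $\textbf{w}_{[k]}((k-1)\tau)=\textbf{w}((k-1)\tau)$ produces $1/G_{k}-1/H_{k}\geq\tau\omega\alpha$. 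Now Proposition \ref{thm2} evaluated at $t=k\tau$ gives $H_{k+1}\leq G_{k}+\rho\,h(\tau)$; the elementary bound $1/(y+x)\geq 1/y-x/y^{2}$ for $x,y>0$ together with condition 2 ($G_{k}\geq\varepsilon$) upgrades this to $1/H_{k+1}\geq 1/G_{k}-\rho h(\tau)/\varepsilon^{2}$. Adding the within-interval and between-interval pieces gives the target recursion; telescoping over $k=1,\dots,K$, discarding $1/H_{1}\geq 0$, and using $T=K\tau$ finishes the proof.

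\textbf{Expected main obstacle.} The delicate step is the per-iteration decrease of centralized MGD. Both the cross term $-\eta\gamma(1-\beta\eta)\,\nabla F(\textbf{w}_{[k]}(t-1))^{\mathrm{T}}\textbf{d}_{[k]}(t-1)$ and the quadratic term $\tfrac{\beta\eta^{2}\gamma^{2}}{2}\|\textbf{d}_{[k]}(t-1)\|^{2}$ depend on the momentum vector $\textbf{d}_{[k]}(t-1)$, which is neither aligned with $\nabla F$ nor of a fixed magnitude, and both must be consolidated into clean multiples of $\|\nabla F(\textbf{w}_{[k]}(t-1))\|^{2}$ using only $\theta$ and $p$. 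All four sign conditions on the parameters have to cooperate here; a single misstep in the direction of an inequality collapses the constant $\alpha$ or flips its sign, after which conditions 4 and 2 become vacuous and the telescoping argument no longer yields a meaningful rate.
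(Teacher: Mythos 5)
Your plan reproduces the paper's proof essentially step for step: the $\beta$-smoothness expansion of the centralized-MGD update to get a per-iteration decrease with the stated $\alpha$, the convexity/Cauchy--Schwarz/$\omega$ conversion into the reciprocal recursion $\frac{1}{c_{[k]}(t+1)}-\frac{1}{c_{[k]}(t)}\geq\omega\alpha$, within-interval telescoping, and interval patching via Proposition~\ref{thm2} together with the $\varepsilon$-lower bounds. Your only departure is cosmetic: you patch intervals with the elementary bound $1/(y+x)\geq 1/y-x/y^{2}$ applied to $H_{k+1}\leq G_k+\rho h(\tau)$, which needs only $G_k\geq\varepsilon$, whereas the paper writes $\frac{1}{c_{[k+1]}(k\tau)}-\frac{1}{c_{[k]}(k\tau)}\geq-\rho h(\tau)/(c_{[k]}(k\tau)c_{[k+1]}(k\tau))$ and lower-bounds the product by $\varepsilon^{2}$ (which forces it to also argue $c_{[k+1]}(k\tau)\geq\varepsilon$ and to extend the construction to a $(K+1)$-th interval to invoke condition 3); your variant is marginally tidier on this point.

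There is, however, one place where the plan as written cannot close. You propose to consolidate the cross term using ``only $\theta$ and $p$,'' but $p$ is an \emph{upper} bound on $\|\textbf{d}_{[k]}(t)\|/\|\nabla F(\textbf{w}_{[k]}(t))\|$, while turning $\nabla F(\textbf{w}_{[k]}(t))^{\mathrm{T}}\textbf{d}_{[k]}(t)=\|\nabla F(\textbf{w}_{[k]}(t))\|\,\|\textbf{d}_{[k]}(t)\|\cos\theta_{[k]}(t)$ into the needed $\geq\cos\theta\,\|\nabla F(\textbf{w}_{[k]}(t))\|^{2}$ requires the \emph{lower} bound $\|\textbf{d}_{[k]}(t)\|\geq\|\nabla F(\textbf{w}_{[k]}(t))\|$; without it the $+\eta\gamma(1-\beta\eta)\cos\theta$ contribution to $\alpha$ does not appear with the right sign and the acceleration claim in Proposition~\ref{pro} evaporates. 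The paper supplies this missing piece by writing $\textbf{d}_{[k]}(t)=\gamma\textbf{d}_{[k]}(t-1)+\nabla F(\textbf{w}_{[k]}(t-1))$, observing $\|\textbf{d}_{[k]}(t)\|\geq\|\nabla F(\textbf{w}_{[k]}(t-1))\|$ because $\cos\theta_{[k]}(t-1)\geq0$ makes the inner product nonnegative, and then invoking monotone decrease of the gradient norm along the MGD trajectory to get $\|\nabla F(\textbf{w}_{[k]}(t-1))\|\geq\|\nabla F(\textbf{w}_{[k]}(t))\|$. You correctly flag the cross term as the delicate step, but this two-stage argument (one iterate back plus monotonicity), not $\theta$ and $p$ alone, is what actually completes it.
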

\begin{proof}
	The proof is presented in Appendix \ref{D}.
\end{proof}

On the basis of Lemma \ref{globalconvergence}, we further derive the following proposition which demonstrates the global convergence of MFL and gives its  upper bound on convergence rate.
\begin{proposition}[MFL global convergence]\label{thm3}
	Given $\cos\theta\geq0$, $0<\eta\beta<1$, $0\leq\gamma<1$ and $\alpha>0$, we have
	\begin{align}\label{19}
	F(\textup{\textbf{w}}^\mathrm{f})-F(\textup{\textbf{w}}^*)\leq\frac{1}{2T\omega\alpha}+\sqrt{\frac{1}{4T^2\omega^2\alpha^2 }+\frac{\rho h(\tau)}{\omega\alpha\tau}}+\rho h(\tau).
	\end{align}
\end{proposition}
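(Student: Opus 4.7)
The plan is to reduce Proposition \ref{thm3} to Lemma \ref{globalconvergence} by choosing the threshold $\varepsilon$ in a specific way and doing a case analysis on whether the hypotheses of Lemma \ref{globalconvergence} are actually in force. The central observation I will exploit is that $\textbf{w}^\mathrm{f}$ is the arg-min over all observed aggregation iterates $\{\textbf{w}(k\tau):1\le k\le K\}$, so any upper bound on $F(\textbf{w}(k\tau))-F(\textbf{w}^*)$ for \emph{even a single} $k$ transfers immediately to $F(\textbf{w}^\mathrm{f})-F(\textbf{w}^*)$.

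First I would define $\varepsilon^\star$ as the positive root of the quadratic
\begin{equation*}
T\omega\alpha\,\varepsilon^{2}-\varepsilon-\frac{T\rho h(\tau)}{\tau}=0,
\end{equation*}
which by the quadratic formula equals
\begin{equation*}
\varepsilon^\star=\frac{1}{2T\omega\alpha}+\sqrt{\frac{1}{4T^{2}\omega^{2}\alpha^{2}}+\frac{\rho h(\tau)}{\omega\alpha\tau}}.
\end{equation*}
This value is tailored so that $\varepsilon^\star = 1/[T(\omega\alpha-\rho h(\tau)/(\tau\varepsilon^{\star 2}))]$; that is, $\varepsilon^\star$ is a fixed point of the bound in Lemma \ref{globalconvergence} viewed as a function of $\varepsilon$. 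Since $\varepsilon^\star>0$ and the quadratic vanishes there, we get $T\omega\alpha\varepsilon^{\star 2}>T\rho h(\tau)/\tau$, which automatically supplies condition 4 of Lemma \ref{globalconvergence}.

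Next I would split into cases according to which of conditions 2 and 3 of Lemma \ref{globalconvergence} remain true for this $\varepsilon^\star$. If both hold, Lemma \ref{globalconvergence} yields $F(\textbf{w}(T))-F(\textbf{w}^*)\le\varepsilon^\star$ by the fixed-point property, and since $F(\textbf{w}^\mathrm{f})\le F(\textbf{w}(T))$ the desired bound follows. If condition 3 fails, then by definition $F(\textbf{w}(T))-F(\textbf{w}^*)<\varepsilon^\star$ and the same monotonicity of $\textbf{w}^\mathrm{f}$ gives the bound. If condition 2 fails, some $k$ satisfies $F(\textbf{w}_{[k]}(k\tau))-F(\textbf{w}^*)<\varepsilon^\star$; invoking Proposition \ref{thm2} at $t=k\tau$ gives $F(\textbf{w}(k\tau))-F(\textbf{w}^*)<\varepsilon^\star+\rho h(\tau)$, and the arg-min definition of $\textbf{w}^\mathrm{f}$ again transfers this bound. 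In every scenario, $F(\textbf{w}^\mathrm{f})-F(\textbf{w}^*)\le\varepsilon^\star+\rho h(\tau)$, which is exactly the inequality in \eqref{19}.

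I expect the only real obstacle to be the bookkeeping in the case split: ensuring that the specific $\varepsilon^\star$ defined by the quadratic simultaneously satisfies condition 4 of Lemma \ref{globalconvergence} and leaves enough slack, via the additive $\rho h(\tau)$ term from Proposition \ref{thm2}, to absorb the centralized-to-MFL discrepancy in the branches where Lemma \ref{globalconvergence} does not apply directly. No new analytic inequality beyond those two prior results is needed; the entire argument is an optimization in $\varepsilon$ together with an envelope argument over the observable iterates $\{\textbf{w}(k\tau)\}$.
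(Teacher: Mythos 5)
Your proposal is correct and follows essentially the same route as the paper's proof in Appendix E: the same fixed-point choice of $\varepsilon$ solving $T\omega\alpha\varepsilon^{2}-\varepsilon-T\rho h(\tau)/\tau=0$, the same appeal to Lemma \ref{globalconvergence} together with Proposition \ref{thm2} at $t=k\tau$, and the same transfer of the bound to $\textbf{w}^{\mathrm{f}}$ via the arg-min over $\{\textbf{w}(k\tau)\}$. The only cosmetic differences are that you run a direct case analysis at $\varepsilon=\varepsilon^{\star}$ where the paper argues by contradiction for $\varepsilon>\varepsilon_{0}$, and that your argument covers $\rho h(\tau)=0$ uniformly rather than as a separate case.
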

\begin{proof}
	The specific proving process is shown in Appendix \ref{E}.
\end{proof}

According to the above Proposition \ref{thm3}, we get an upper bound of $F(\textup{\textbf{w}}^\mathrm{f})-F(\textup{\textbf{w}}^*)$ which is a function of $T$ and $\tau$. From inequality \eqref{19}, we can find that MFL linearly converges to a lower bound $\sqrt{\frac{\rho h(\tau)}{\omega\alpha\tau}}+\rho h(\tau)$. Because $h(\tau)$ is related to $\tau$ and $\delta$, aggregation intervals ($\tau>1$) and different data distribution collectively lead to that MFL does not converge to the optimum. 

In the following, we discuss the influence of $\tau$ on the convergence bound. If $\tau=1$, we have $\rho h(\tau)=0$ so that $F(\textup{\textbf{w}}^\mathrm{f})-F(\textup{\textbf{w}}^*)$ linearly converges to zero as $T\to \infty$, and the convergence rate yields $\frac{1}{T\omega\alpha}$. Noting $h(\tau)>0$ if $\tau>1$, we can find that in this case, $F(\textup{\textbf{w}}^\mathrm{f})-F(\textup{\textbf{w}}^*)$ converges to a non-zero bound $\sqrt{\frac{\rho h(\tau)}{\omega\alpha\tau}}+\rho h(\tau)$ as $T\to \infty$. On the one hand, if there does not exist communication resources limit, setting aggregation frequency $\tau=1$ and performing global aggregation after each local update can reach the optimal convergence performance of MFL. On the other hand, aggregation interval ($\tau>1$) can let MFL effectively utilize the communication resources of each node, but bring about a decline of convergence performance.

\section{Comparison Between FL and MFL}
In this section, we make a  comparison of convergence performance between MFL and FL.

The closed-form solution of the upper bound on FL convergence rate has been derived in \citep[Theorem 2]{wang2019adaptive}. It is presented as follows.
\begin{align}\label{20}
F(\textbf{w}_{F\!L}^\mathrm{f})\!-\!F(\textup{\textbf{w}}^*)\!\leq\!\frac{1}{2\eta\varphi T}\!+\!\sqrt{\frac{1}{4\eta^2\varphi^2 T^2}\!+\!\frac{\rho h_{F\!L}(\tau)}{\eta\varphi\tau}}\!+\!\rho h_{F\!L}(\tau).
\end{align}
According to \cite{wang2019adaptive}, $$h_{F\!L}(\tau)=\frac{\delta}{\beta}((\eta\beta+1)^\tau-1)-\eta\delta\tau$$ and $\varphi=\omega_{F\!L}(1-\frac{\eta\beta}{2})$ where the expression of $\omega_{FL}$ is consistent with $\omega$. Differing from that of $\omega$, $\textbf{w}((k-1)\tau)$ in the definition of $\omega_{FL}$ is the global model parameter of FL.  


We assume that both MFL and FL solutions are applied in the system model proposed in Fig. \ref{fig2}. They are trained based on the same training dataset with the same machine learning model. The loss function $F_i(\cdot)$ and global loss function $F(\cdot)$ of MFL and FL are the same, respective. The corresponding parameters of MFL and FL are equivalent including $\tau$, $\eta$, $\rho$, $\delta$ and $\beta$.
We set the same initial value $\textbf{w}(0)$ of MFL and FL. Because both MFL and FL are  convergent, we have 
$\omega=\frac{1}{\|\textbf{w}(0)-\textbf{w}^*\|^2}$. Then according to the definitions of $\omega$ and $\omega_{F\!L}$, we have $w=w_{F\!L}$. 
Therefore, the corresponding parameters of MFL and FL are the same and we can compare the convergences between FL and MFL conveniently.

%

For convenience, we use $f_1(T)$ and $f_2(T)$ to denote the  upper bound on convergence rate of \textup{MFL} and \textup{FL}, respectively. Then we have
\begin{align}\label{f1}
f_1(T)\triangleq\frac{1}{2T\omega\alpha}+\sqrt{\frac{1}{4T^2\omega^2\alpha^2 }+\frac{\rho h(\tau)}{\omega\alpha\tau}}+\rho h(\tau)
\end{align}
and
\begin{align}\label{f2}
f_2(T)\triangleq\frac{1}{2\eta\varphi T}\!+\!\sqrt{\frac{1}{4\eta^2\varphi^2 T^2}\!+\!\frac{\rho h_{F\!L}(\tau)}{\eta\varphi\tau}}\!+\!\rho h_{F\!L}(\tau).
\end{align}
We consider the special case of $\gamma\to0$. For $\omega\alpha$ and $\eta\varphi$, we can obtain $\omega\alpha\to\omega\eta(1-\frac{\beta\eta}{2})=\eta\varphi$ from the definition of $\alpha$. Then for $h(\tau)$ and $h_{FL}(\tau)$, we have $\gamma A\to \eta\beta+1$ and $\gamma B\to0$. Because $\frac{A}{A-B}\to1$ and $\frac{B}{A-B}\to0$, we can further get $E\to\frac{1}{\eta\beta}$ and $F\to0$ from the definitions of $E$ and $F$. So, according to \eqref{hx}, we have 
\begin{align*}
\lim_{\gamma\to0} h(\tau)&=\eta\delta\left[ \frac{1}{\eta\beta}(\eta\beta+1)^\tau\!-\!\frac{1}{\eta\beta}\!-\tau\right]\\
&=\frac{\delta}{\beta}((1+\eta\beta)^\tau-1)-\eta\delta \tau=h_{FL}(\tau).
\end{align*}
Hence, by the above analysis under $\gamma\to0$, we can find MFL and FL have the same upper bound on convergence rate. This fact is consistent with the property that if $\gamma=0$, MFL degenerates into FL and has the same convergence rate with FL.

To avoid complicated calculations over the expressions of $f_1(T)$ and $f_2(T)$, we consider the special case of small $\eta$ ($\eta\to0$) which is typically used in simulations.
\begin{lemma}\label{lem}
	If $\eta\to0$, existing $T_1\geq1$, we have $\frac{1}{2T\omega\alpha}$ dominates in $f_1(T)$ and $\frac{1}{2\eta\varphi T}$ dominates in $f_2(T)$ for $T<T_1$,  
	i.e., 
	$$\frac{1}{2T\omega\alpha}\gg\max\left\{\rho h(\tau),\sqrt{\frac{\rho h(\tau)}{\omega\alpha\tau}}\right\}$$
	and
	$$\frac{1}{2\eta\varphi T}\gg\max\left\{\rho h_{FL}(\tau),\sqrt{\frac{\rho h_{FL}(\tau)}{\eta\varphi\tau}}\right\}.$$
	Then we have 
	$$f_1(T)\approx\frac{1}{T\omega\alpha}$$
	and
	$$f_2(T)\approx\frac{1}{T\eta\varphi}$$ for $T<T_1$.
\end{lemma}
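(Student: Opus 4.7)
The plan is to Taylor-expand every $\eta$-dependent quantity in $f_1(T)$ and $f_2(T)$ and check that, as $\eta\to 0$, only one summand survives at leading order. First I would extract the scalings of the coefficients: from $\alpha=\eta(1-\beta\eta/2)+\eta\gamma(1-\beta\eta)\cos\theta-\beta\eta^{2}\gamma^{2}p^{2}/2$ the leading order is $\alpha=(1+\gamma\cos\theta)\eta+O(\eta^{2})$, so $\omega\alpha=\Theta(\eta)$; likewise $\eta\varphi=\omega_{FL}\eta(1-\eta\beta/2)=\Theta(\eta)$. Next I would establish $h_{FL}(\tau)=O(\eta^{2})$ and $h(\tau)=O(\eta^{2})$. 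The FL case is a one-liner: expanding $(1+\eta\beta)^{\tau}$ in $\eta\beta$ gives $h_{FL}(\tau)=\delta\beta\binom{\tau}{2}\eta^{2}+O(\eta^{3})$.

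For the MFL case I would switch to a recurrence argument. Because $\gamma A$ and $\gamma B$ are the two roots of $y^{2}-(1+\gamma+\eta\beta)y+\gamma=0$, the sum $s_{n}\triangleq E(\gamma A)^{n}+F(\gamma B)^{n}$ obeys $s_{n}=(1+\gamma+\eta\beta)s_{n-1}-\gamma s_{n-2}$, with initial values $s_{0}=1/(\eta\beta)$ and $s_{1}=1/(\eta\beta)+1$ obtained by simplifying $E+F$ and $EA+FB$ via Vieta's formulas on the same quadratic. Setting $B_{n}\triangleq s_{n}-1/(\eta\beta)-L_{n}$, where $L_{n}\triangleq[\gamma(\gamma^{n}-1)-(\gamma-1)n]/(\gamma-1)^{2}$ is the closed-form polynomial part inside \eqref{hx}, a short manipulation yields the cleaner recurrence $B_{n}=(1+\gamma+\eta\beta)B_{n-1}-\gamma B_{n-2}+\eta\beta L_{n-1}$ with $B_{0}=B_{1}=0$. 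Because $L_{n-1}$ is $\eta$-independent, induction on $n$ immediately gives $B_{n}=O(\eta)$, and therefore $h(\tau)=\eta\delta B_{\tau}=O(\eta^{2})$.

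Combining the two scalings, $1/(2T\omega\alpha)=\Theta(\eta^{-1})$ while $\rho h(\tau)=O(\eta^{2})$ and $\sqrt{\rho h(\tau)/(\omega\alpha\tau)}=O(\sqrt{\eta})$, so the first summand dominates the other two as $\eta\to 0$; by continuity this domination is uniform on some bounded interval $T<T_{1}$. The approximation of $f_{1}$ then follows from the identity $\sqrt{a^{2}+b}=a\sqrt{1+b/a^{2}}\to a$ applied with $a=1/(2T\omega\alpha)$ and $b=\rho h(\tau)/(\omega\alpha\tau)$, since $b/a^{2}=4T^{2}\omega\alpha\rho h(\tau)/\tau=O(T^{2}\eta^{3})\to 0$; adding the standalone first term gives $f_{1}(T)\approx 2a=1/(T\omega\alpha)$. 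The same argument with $(\omega\alpha,h)$ replaced by $(\eta\varphi,h_{FL})$ handles $f_{2}$. The main obstacle is establishing $h(\tau)=O(\eta^{2})$: the factor $E\sim 1/(\eta\beta)$ is singular as $\eta\to 0$ and multiplies the growing power $(\gamma A)^{\tau}$, so the $O(\eta^{-1})$, $O(1)$ and $O(\eta)$ contributions of $s_{\tau}$ have to cancel exactly against $1/(\eta\beta)$ and the $\eta$-free polynomial $L_{\tau}$. The recurrence route above is the cleanest way I see to make those cancellations structural rather than the outcome of painful direct expansion of $E$, $F$, $(\gamma A)^{\tau}$ and $(\gamma B)^{\tau}$.
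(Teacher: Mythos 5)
Your proposal is correct, and its overall skeleton matches the paper's: both arguments show that as $\eta\to0$ the two $h$-dependent summands become negligible relative to the $\Theta(1/\eta)$ term, then invoke this for $T$ below some $T_1$. The decisive step, however, is handled by a genuinely different argument. The paper only needs the qualitative fact $h(\tau)\to0$ as $\eta\to0$, which it gets in Appendix~\ref{hospital} by computing $\lim E(\gamma A)^\tau+F(\gamma B)^\tau-\tfrac{1}{\eta\beta}-L_\tau$ directly from the limits of $A$, $B$, $E$, $F$ and an application of L'Hospital's rule to the singular factor $\eta/(\gamma A-1)$; combined with $\omega\alpha\to0$ this gives $\omega\alpha\,\rho h(\tau)\to0$ and $\sqrt{\omega\alpha\,\rho h(\tau)/\tau}\to0$, which is all that is needed after multiplying the claimed inequalities through by $\omega\alpha$. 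You instead exploit that $\gamma A$ and $\gamma B$ are the roots of $y^2-(1+\gamma+\eta\beta)y+\gamma=0$, so that the bracket $B_n=s_n-\tfrac{1}{\eta\beta}-L_n$ satisfies a second-order linear recurrence with $O(\eta)$ forcing term and zero initial data (your $s_0=\tfrac{1}{\eta\beta}$ and $s_1=\tfrac{1}{\eta\beta}+1$ agree with the identities $E+F=\tfrac{1}{\eta\beta}$ and $EA+FB=\tfrac{1+\eta\beta}{\gamma\eta\beta}$ used in Appendices~\ref{B} and~\ref{C}), yielding the sharper quantitative bound $h(\tau)=O(\eta^2)$ by induction; this makes the cancellation of the $1/(\eta\beta)$ singularity in $E$ structural rather than the outcome of a limit computation, and it gives explicit rates ($\rho h(\tau)=O(\eta^2)$, $\sqrt{\rho h(\tau)/(\omega\alpha\tau)}=O(\sqrt{\eta})$ against $\Theta(\eta^{-1})$) that the paper's qualitative limits do not provide. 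What the paper's route buys is brevity; what yours buys is a rate of domination and a consistency check against $h_{FL}(\tau)=\delta\beta\binom{\tau}{2}\eta^2+O(\eta^3)$. Both share the same residual informality in that $T_1$, and the $\eta$-dependence of $\theta$, $p$, $\omega$ through the trajectory, are not pinned down precisely.
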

\begin{proof}
	Considering \eqref{f1},	if $\eta\to0$,  we have $\alpha\to 0$ from the definition of $\alpha$  and $h(\tau)\to 0$ from Appendix \ref{hospital}. So we can easily derive 
	$\omega\alpha\rho h(\tau)\to 0$ and $\sqrt{\frac{\omega\alpha\rho h(\tau)}{\tau}}\to 0$. Then we can find $T_1\geq1$ which satisfies $\frac{1}{2T}\gg\omega\alpha\rho h(\tau)$ and $\frac{1}{2T}\gg\sqrt{\frac{\omega\alpha\rho h(\tau)}{\tau}}$ for $T<T_1$. Hence, we have  $\frac{1}{2T\omega\alpha}$ dominates in $f_1(T)$ and $f_1(T)\approx\frac{1}{T\omega\alpha}$. 
	
	For the same reason, considering \eqref{f2}, if $\eta\to0$, we have $\eta\varphi\to 0$ from the definition of $\varphi$ and $h_{FL}(\tau)\to 0$ from its definition. So we can easily derive 
	$\eta\varphi\rho h_{FL}(\tau)\to 0$ and $\sqrt{\frac{\eta\varphi\rho h_{FL}(\tau)}{\tau}}\to 0$. Then for $T<T_1$, $\frac{1}{2T}\gg\eta\varphi\rho h_{FL}(\tau)$ and $\frac{1}{2T}\gg\sqrt{\frac{\eta\varphi\rho h_{FL}(\tau)}{\tau}}$. Hence, we have  $\frac{1}{2\eta\varphi T}$ dominates in $f_2(T)$ and $f_2(T)\approx\frac{1}{T\eta\varphi}$. 
\end{proof}
From Lemma \ref{lem}, we can find that if the learning step size is small and the local update index is not too large, the influence of the update intervals on convergence rate can be ignored. So taking this property into account, we can set a small learning step size to cut down communication resources consumption by increasing computation resources cost.

\begin{proposition}[Accelerated convergence of MFL]\label{pro}
	If the following conditions are satisfied:\\
	\indent	1) $\eta\to0$;\\
	\indent	2) $T<T_1$;\\
	\indent	3) $0<\gamma<1$,\\
	\textup{MFL} converges faster than \textup{FL}, i.e.,
	$$f_1(T)<f_2(T).$$ 
\end{proposition}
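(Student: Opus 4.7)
The plan is to reduce the comparison of the two upper bounds to a comparison of their dominant coefficients in $T$, and then show that the momentum contribution makes this coefficient strictly favorable for MFL in the small-$\eta$ regime.

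First, I would invoke Lemma \ref{lem}: because $\eta \to 0$ and $T < T_1$, the square-root and additive-constant terms are negligible, so that
\begin{align*}
f_1(T) \approx \frac{1}{T\omega\alpha}, \qquad f_2(T) \approx \frac{1}{T\eta\varphi}.
\end{align*}
Thus proving $f_1(T) < f_2(T)$ amounts to proving $\omega\alpha > \eta\varphi$. Using the hypothesis that both systems share the same initialization and optimum (so that $\omega = \omega_{F\!L}$) together with $\varphi = \omega_{F\!L}(1-\tfrac{\eta\beta}{2})$, we get $\eta\varphi = \omega\eta(1-\tfrac{\eta\beta}{2})$. Substituting the definition of $\alpha$ from Lemma \ref{globalconvergence} yields
\begin{align*}
\omega\alpha - \eta\varphi
= \omega\eta\gamma\left[(1-\beta\eta)\cos\theta - \tfrac{\beta\eta\gamma p^2}{2}\right].
\end{align*}

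Next, I would analyze the sign of the bracketed factor. By the standing hypotheses of Proposition \ref{thm3} (which are needed for $f_1$ to be a valid bound) we have $\cos\theta \geq 0$ and $0<\eta\beta<1$, and here we additionally assume $0 < \gamma < 1$. As $\eta \to 0$, the term $(1-\beta\eta)\cos\theta$ tends to $\cos\theta \geq 0$, whereas $\tfrac{\beta\eta\gamma p^2}{2}$ tends to $0$. Since $p$ is finite (it is a maximum over a finite iteration set of bounded quantities, the momentum parameter being the running sum of bounded gradients under Assumption \ref{ass1}), there exists a threshold on $\eta$ below which the bracket is strictly positive. Combined with $\omega,\eta,\gamma > 0$, this gives $\omega\alpha - \eta\varphi > 0$, hence $\omega\alpha > \eta\varphi$ and $f_1(T) < f_2(T)$.

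The main obstacle is subtle rather than computational: the Lemma \ref{lem} approximation hides error terms whose signs are not controlled, so one has to be careful that the strict inequality at the level of dominant coefficients actually implies a strict inequality at the level of $f_1(T)$ and $f_2(T)$. I would handle this by noting that both residuals $f_j(T) - \tfrac{1}{T\omega\alpha},\,\tfrac{1}{T\eta\varphi}$ are of the same order $o(1/T)$ as $\eta \to 0$ (since $h(\tau)$ and $h_{F\!L}(\tau)$ both vanish in that limit, as shown in Appendix \ref{hospital}), so the sign of $f_1(T) - f_2(T)$ is eventually determined by the sign of $\tfrac{1}{T\omega\alpha} - \tfrac{1}{T\eta\varphi}$. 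A secondary technical point is the edge case $\cos\theta = 0$: one would either restrict to $\cos\theta > 0$ (which is natural since $\theta$ being exactly a right angle throughout training is non-generic) or observe that even for $\cos\theta = 0$, the $\gamma$-dependent negative term in $\alpha$ makes the inequality $f_1<f_2$ fail, meaning the proposition implicitly requires the gradient and momentum directions to be at least weakly aligned on average.
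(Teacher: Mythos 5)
Your proposal follows essentially the same route as the paper: approximate $f_1$ and $f_2$ by their dominant $1/T$ terms via Lemma~\ref{lem}, reduce the claim to $\omega\alpha>\eta\varphi$, and observe that $\omega\alpha-\eta\varphi=\omega\eta\gamma\left[(1-\beta\eta)\cos\theta-\tfrac{\beta\eta\gamma p^2}{2}\right]$ is positive for sufficiently small $\eta$ (the paper phrases this as $0<\gamma<\tfrac{2(1-\beta\eta)\cos\theta}{\beta\eta p^2}$ with the right-hand side exceeding $1$ as $\eta\to0$). Your two additional remarks --- that the $o(1/T)$ residuals hidden by the approximation must be controlled before the strict inequality transfers to $f_1(T)<f_2(T)$, and that the edge case $\cos\theta=0$ makes the bracket negative so the proposition implicitly requires $\cos\theta>0$ --- are both correct and are points the paper's own proof glosses over.
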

\begin{proof}
	From condition 1 and condition 2, we have $f_1(T)\approx\frac{1}{T\omega\alpha}$ and $f_2(T)\approx\frac{1}{T\eta\varphi}$.  Due to the definition of $\alpha$ and $\varphi$, inequality $0<\gamma<\frac{2(1-\beta\eta)\cos\theta}{\beta \eta p^2}$ is equivalent to $\omega\alpha>\eta\varphi$. So if $\omega\alpha>\eta\varphi$,
	it is obvious that $\frac{1}{T\omega\alpha}<\frac{1}{T\eta\varphi}$, i.e.,
	$f_1(T)<f_2(T)$. From condition 1,  $\frac{2(1-\beta\eta)\cos\theta}{\beta \eta p^2}$ is larger than 1 definitely. So  condition 3 is the range of MFL convergence acceleration after combining with MGD convergence guarantee $0<\gamma<1$.

\end{proof}


\begin{figure*}[!t]
	\centering
	\subfigure[SVM]{\includegraphics[scale=0.44]{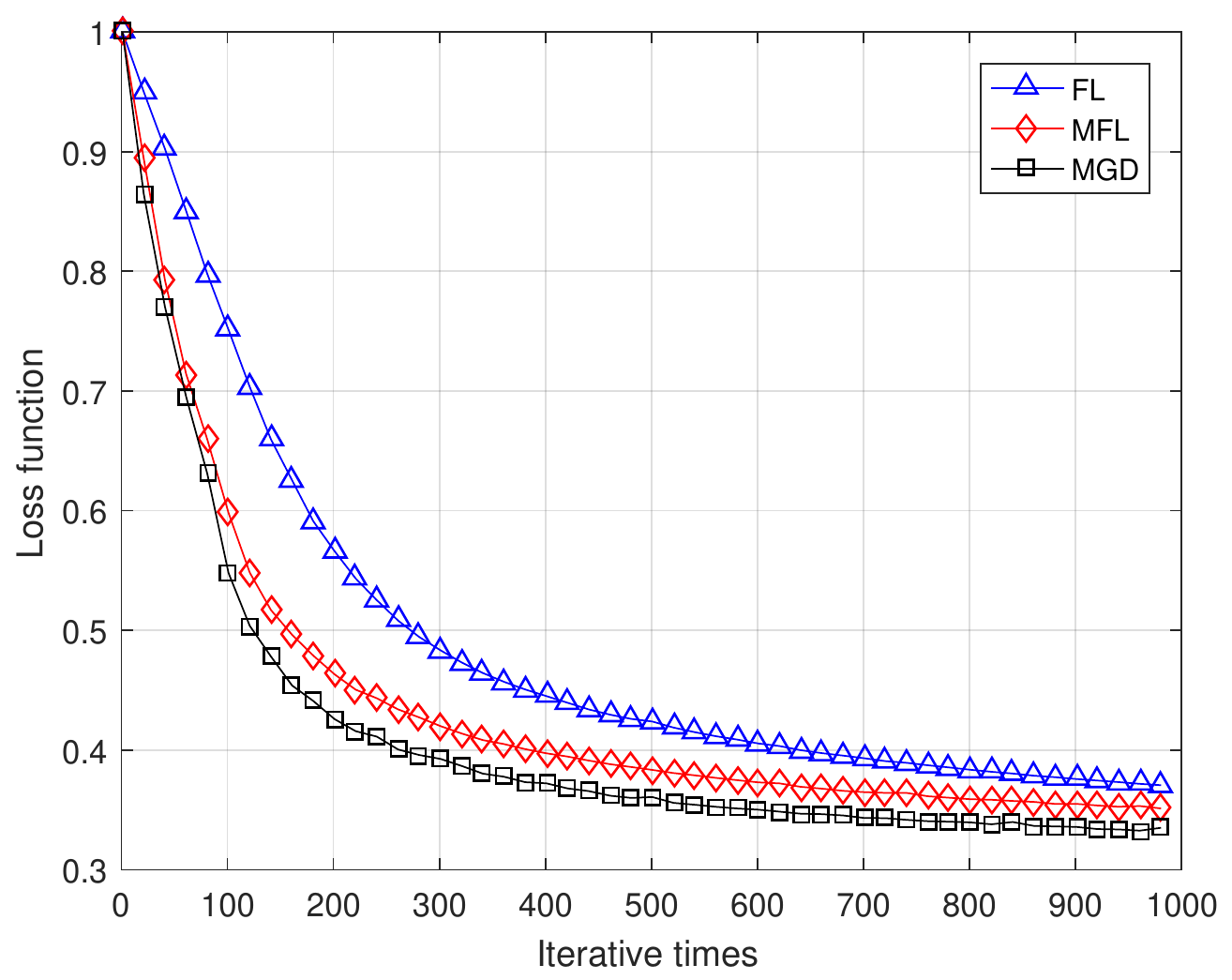}}
	\subfigure[SVM]{\includegraphics[scale=0.44]{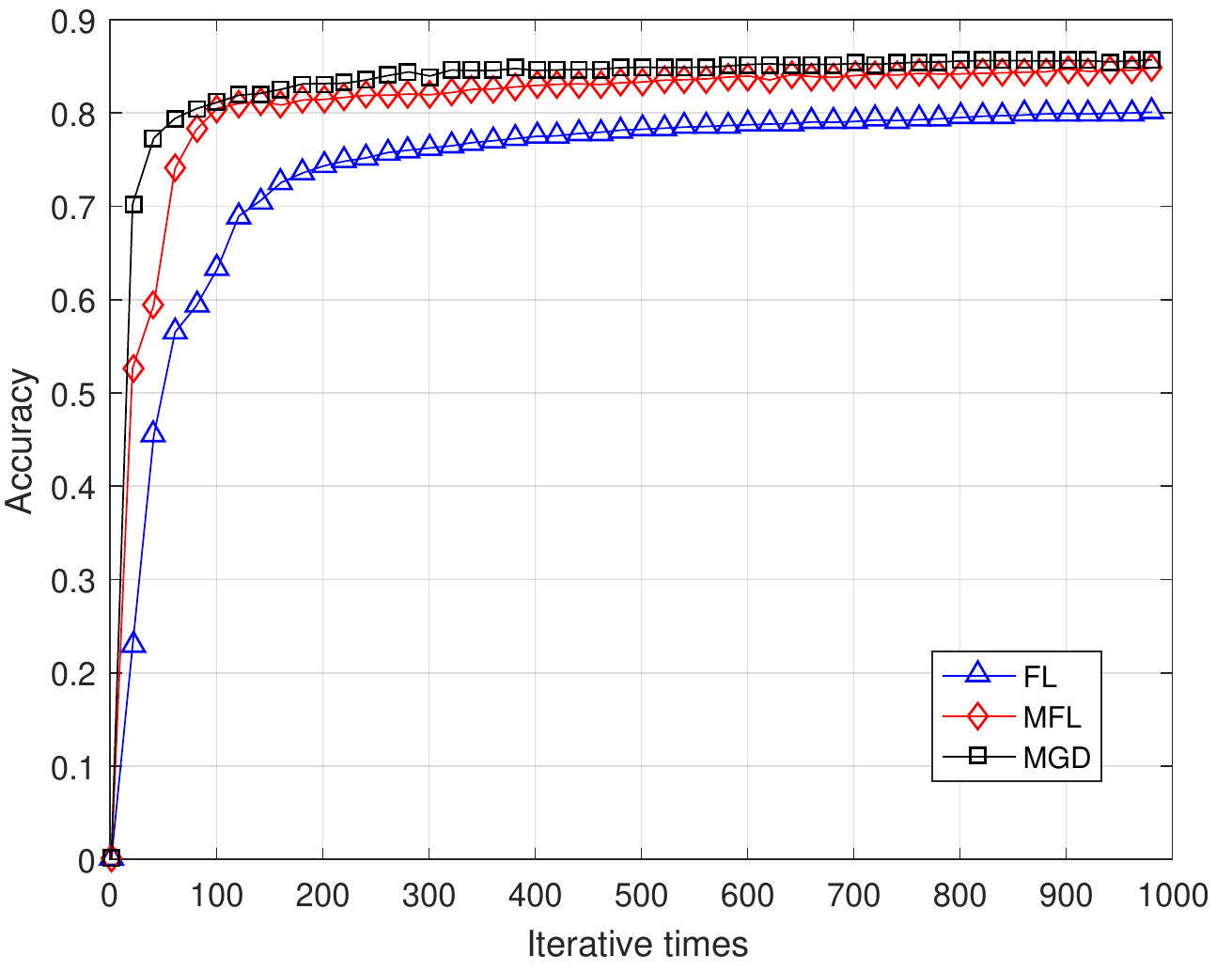}}
	
	\subfigure[Linear regression]{\includegraphics[scale=0.44]{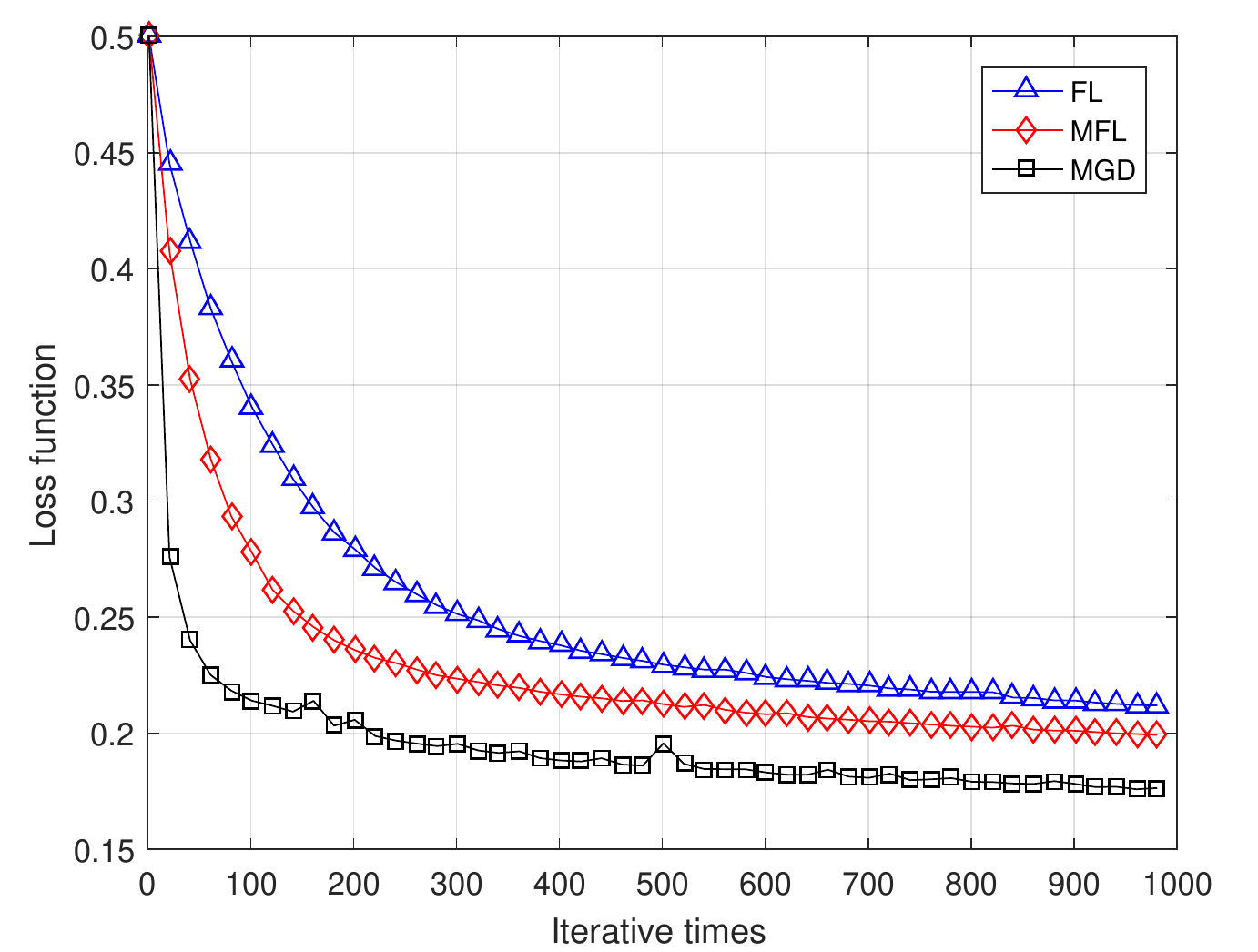}}
	\subfigure[Logistic regression]{\includegraphics[scale=0.44]{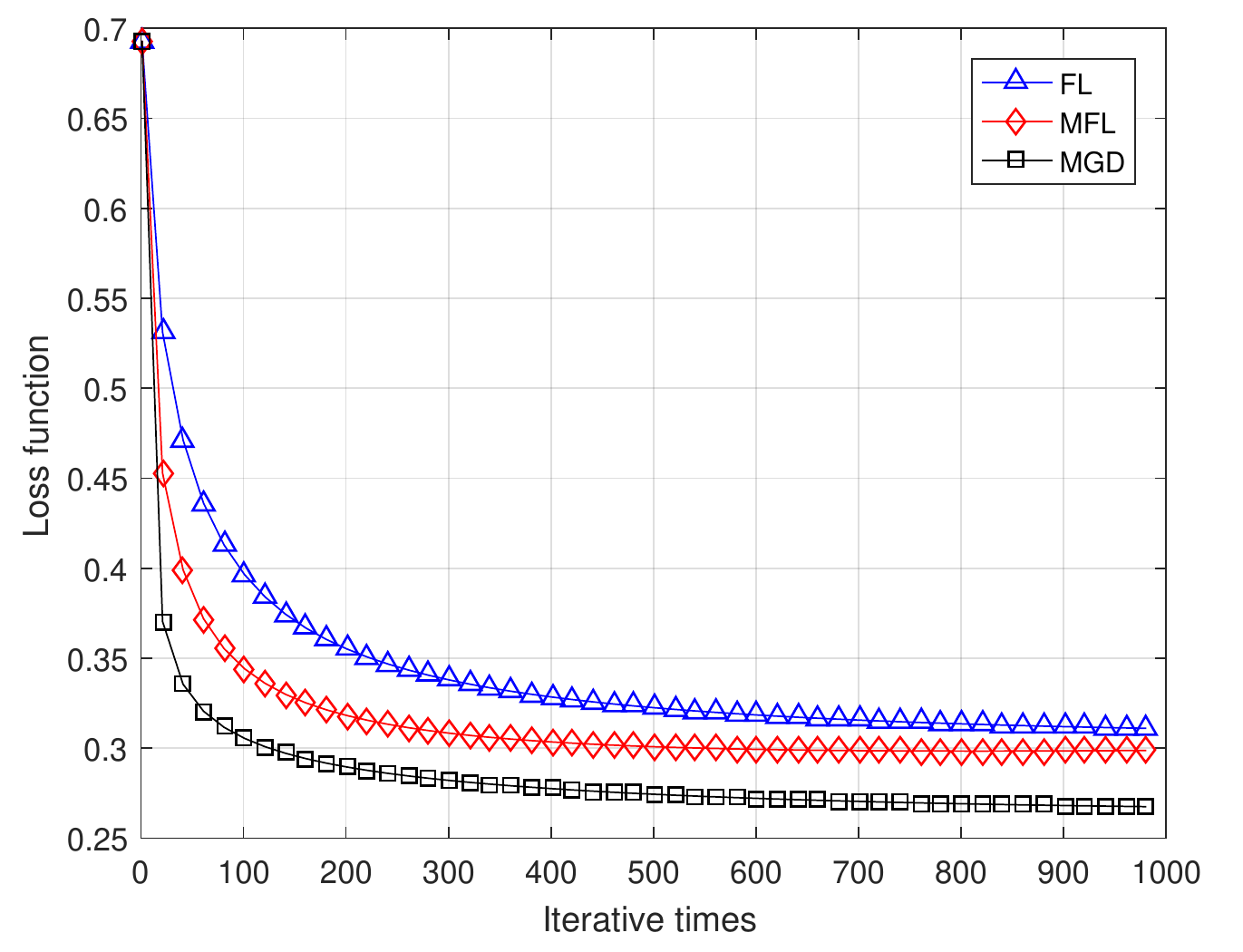}}
	\caption{Loss function values and testing accuracy under FL, MFL and MGD.
		(a) and (b) are the loss function and test accuracy curve of SVM, respectively;
		(c) and (d) are the loss function curves of linear regression and logistic regression, respectively.
	}
	\label{Convergence}
\end{figure*}

\section{Simulation and discussion}
In this section, we build and evaluate MFL system based on MNIST dataset. We first describe the simulation environment and the relevant sets of parameters. Then, we present and evaluate the comparative simulation results of MFL, FL and MGD under different machine learning models. Finally, the extensive experiments are implemented to explore the impact of $\gamma$ and $\tau$ on MFL convergence performance.

\subsection{Simulation Setup}

Using the Python, we build a federated network framework where distributed edge nodes coordinate with the central server. In our network, the number of edge nodes can be chosen arbitrarily. SVM, linear regression and logistic regression can be chosen for model training. Their loss functions at node $i$ are presented as in Table \ref{table_time} \cite{shalev2014understanding}. Note that $|\mathcal{D}_i|$ is the number of training samples in node $i$ and the loss function of logistic regression is cross-entropy. For logistic regression, model output $\sigma(\textbf{w}, \textbf{x}_j)$ is sigmoid function for non-linear transform. It is defined by
\begin{align}\label{sigma}
\sigma(\textbf{w},\textbf{x}_j)\triangleq\frac{1}{1+e^{-\textbf{w}^\mathrm{T}\textbf{x}_j}}.
\end{align}

\renewcommand\arraystretch{1.5}
\begin{table}[!t]  
	\caption{Loss function of three machine learning models}
	\centering
	\label{table_time}
	\renewcommand\arraystretch{1.5}
	\begin{tabular}{cp{5.8cm}}  
		
		\toprule[1pt]   
		
		\multicolumn{1}{c}{	\textbf{Model}} & \multicolumn{1}{c}{\textbf{Loss function}}  \\  
		
		\midrule 
		
		\multicolumn{1}{c}{SVM} &  $\frac{\lambda}{2}\|\textbf{w}\|^2+\frac{1}{2|\mathcal{D}_i|}\sum_j\max\{0; 1-y_j\textbf{w}^{\mathrm{T}}\textbf{x}_j\}$   \\  
		
		\multicolumn{1}{c}{Linear regression} &  $\frac{1}{2|\mathcal{D}_i|}\sum_j\|y_j-\textbf{w}^\mathrm{T}\textbf{x}_j\|^2$    \\    
		
		\multicolumn{1}{c}{Logistic regression} & $-\!\frac{1}{|\mathcal{D}_i|}\!\sum_j\!\|\!y_j\!\log \sigma(\textbf{w},\textbf{x}_j)\!+\!(1\!-\!y_j)\log(1\!-\!\sigma(\textbf{w},\textbf{x}_j))\|$ where $\sigma(\textbf{w},\textbf{x}_j)$ is given as \eqref{sigma}\\
		
		\bottomrule[1pt]    
		
	\end{tabular}
	
\end{table}
\begin{figure*}[!t]
	\centering
	\subfigure[]{\includegraphics[scale=0.42]{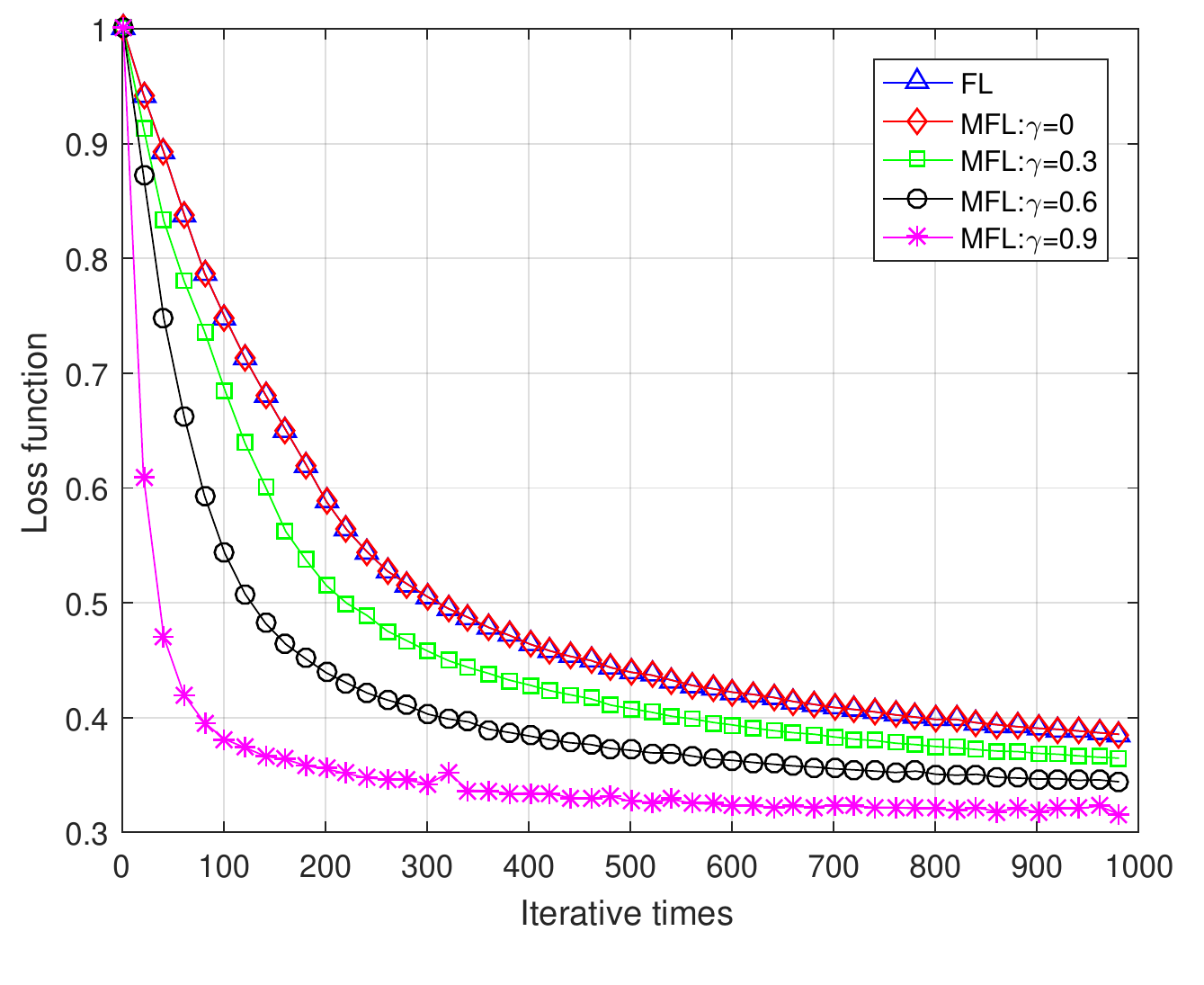}}
	\subfigure[]{\includegraphics[scale=0.42]{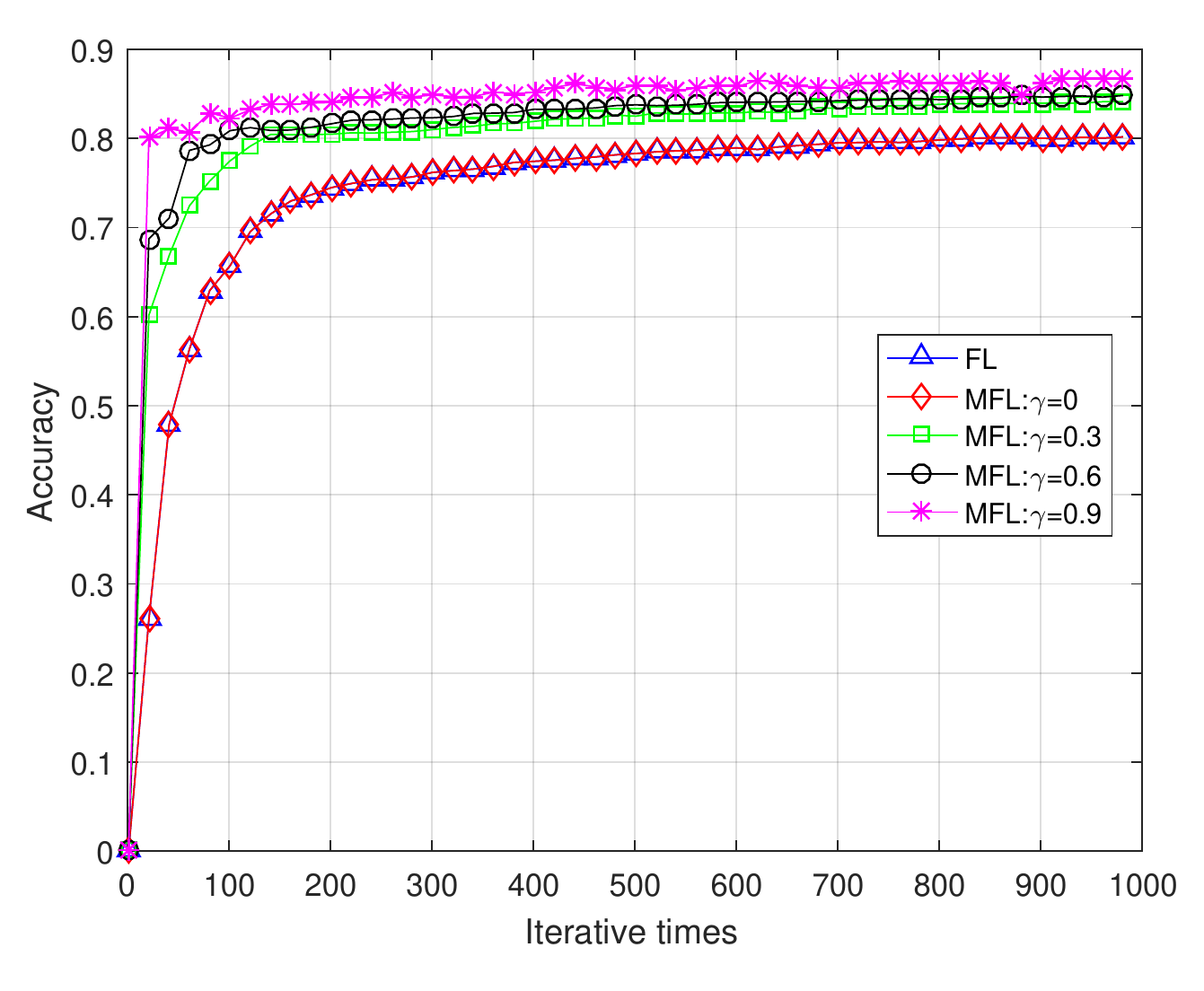}}
	\subfigure[]{\includegraphics[scale=0.42]{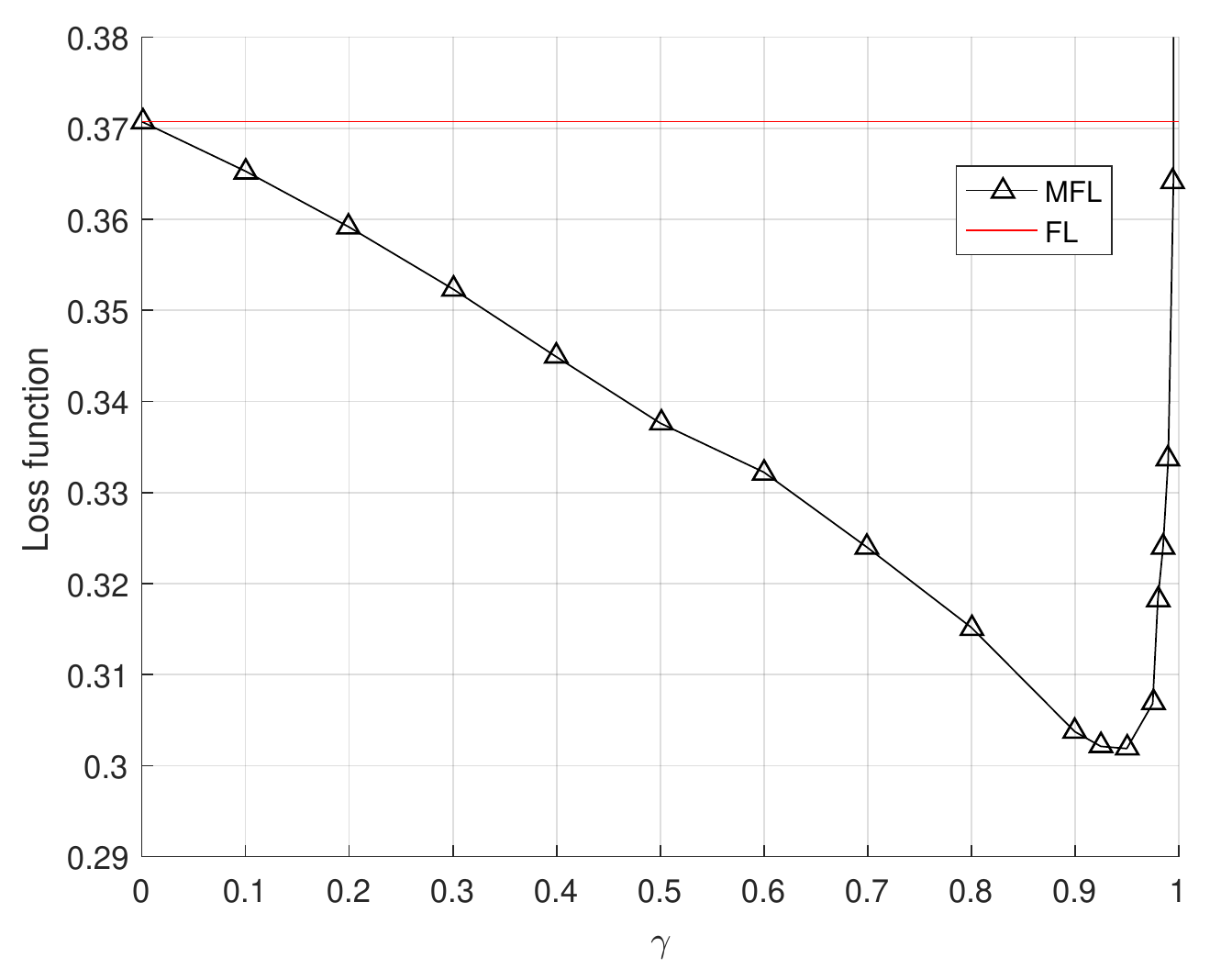}}
	\caption{The influence of $\gamma$ on MFL convergence.
		(a)Loss function values with iterative times under different $\gamma$;
		(b)Testing accuracy with iterative times under different $\gamma$;
		(c)Loss function values with $\gamma$ when $T=1000$.}
	\label{gamma}
\end{figure*}

These three models are trained and tested on MNIST dataset which contains 50,000 training handwritten digits and 10,000 testing handwritten digits. In our experiments, we only utilize 5,000 training samples and 5,000 testing samples because of the limited processing capacities of GD and MGD. 
In this dataset, the $j$-th sample $\textbf{x}_j$ is a 784-dimensional input vector which is vectorized from $28\times28$ pixel matrix and $y_j$ is the scalar label  corresponding to $\textbf{x}_j$. SVM, linear and logistic regression are used to classify whether the digit is even or odd. 

In our experimentation, training and testing samples are randomly allocated to each node, which means the information of each node is uniform. We also simulate FL and centralized MGD as benchmarks. For experimental sets, the system model is set to 4 edge nodes. If the image of $\textbf{x}_j$ represents an even number, then we set $y_j=1$. Otherwise, $y_j=-1$. But for logistic regression, we set $y_j=1$ for the even number and $y_j=0$ for the odd.
The same initializations of model parameters are performed and the same data distributions are set for MFL and FL. Also $\textbf{d}_i(0)=\textbf{0}$ is set for node $i$. We set the  learning step size $\eta=0.002$ which is sufficiently small, SVM parameter $\lambda=0.3$ and the total number of local iterations $T=1,000$ for the following simulations.

\begin{figure}[!t]	
	
	\centering
	\includegraphics[scale=0.6]{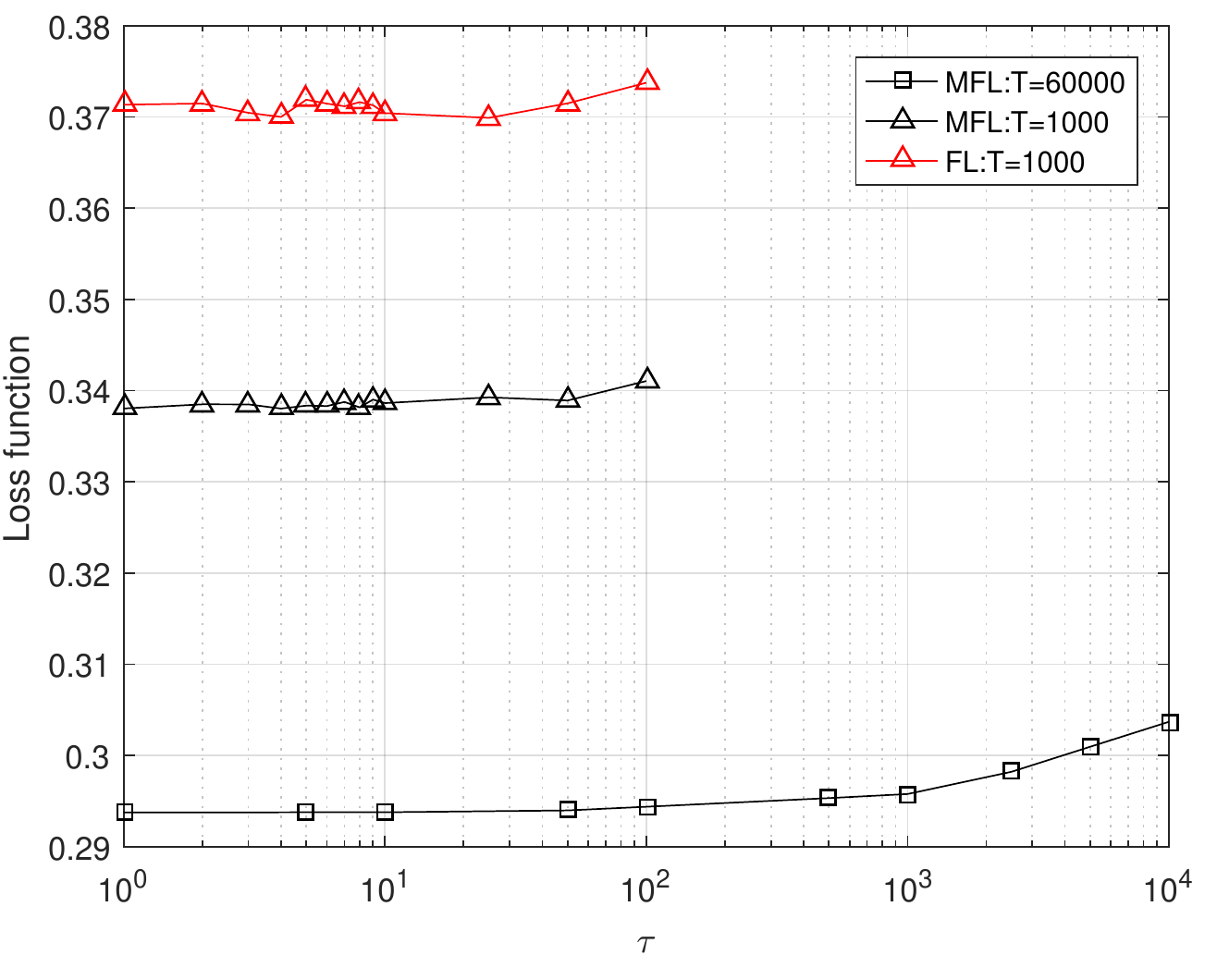}        
	\caption{Loss function values with $\tau$}
	\label{tau}	
	
\end{figure}

\subsection{Simulation Evaluation}
In this subsection, we verify the convergence acceleration of MFL and explore the effects of $\gamma$ and $\tau$ on MFL convergence by simulation evaluations.

\subsubsection{Convergence}
In our first simulation, the models of SVM, linear regression and logistic regression are trained and we verify the convergence of MFL. We set aggregation frequency $\tau=4$ and momentum attenuation factor $\gamma=0.5$. MFL, FL and MGD are performed based on the three machine learning model. 
MGD is implemented based on the global dataset which is obtained by gathering the distributed data on all nodes. The global loss functions of this three solutions are defined based on the same global training and testing data.  

The curves of loss function values and accuracy  with iterative times are presented in Fig. \ref{Convergence}. We can see that the loss function curves for all the learning models are gradually convergent with iterative times. Similarly, the test accuracy curves for SVM  gradually rise until convergence with iterative times.  Therefore, convergence of MFL is verified. 
We also see that the descent speeds of MFL loss function curves on three learning models are always faster than those of FL while the centralized  MGD convergence speeds are fastest. So compared with FL, MFL provides a significant improvement on convergence rate. MGD converges with the fastest speed because MFL and FL suffer the delay in global gradient update for $\tau=4$.

Because linear and logistic regression can not provide the testing accuracy curves, we focus on the SVM model in the following experiments and further explore the impact of MFL parameters on convergence rate.

\subsubsection{Effect of $\gamma$}

We evaluate the impact of $\gamma$ on the convergence rate of loss function. In this simulation, we still set aggregation frequency $\tau=4$. 

The experimental results are shown in Fig. \ref{gamma}. Subfigure (a) and (b) show that how different values of $\gamma$ affect the convergence curves of loss function and testing accuracy, respectively.
We can see that if $\gamma=0$, the loss function and accuracy curves of  MFL  overlap with the corresponding ones of FL because MFL is equivalent to FL for $\gamma=0$. When $\gamma$ increases from 0 to 0.9, we can see the convergence rates on both loss function curves and accuracy curves also gradually increase.
Subfigure (c) shows the change of final loss function value ($T=1000$) with $0<\gamma<1$. From this subfigure, we can find the final loss function values of MFL are always smaller than FL with $0<\gamma<1$. Compared with FL, convergence performance of MFL is improved. This is because $\frac{2(1-\beta\eta)\cos\theta}{\beta \eta p^2}>1$ and according to Proposition \ref{pro}, the accelerated convergence range of MFL is $0<\gamma<1$.
We can see that when $0<\gamma<0.95$, the loss function values decrease monotonically  with $\gamma$ so the convergence rate of MFL increases with $\gamma$. While $\gamma>0.95$, the loss function values of MFL start to increase with a gradual deterioration of MFL convergence performance, and in this situation, MFL can not remain convergence. If the $\gamma$ values are chosen to be close to 1, best around 0.9, MFL reaches the optimal convergence rate.


\subsubsection{Effect of $\tau$}
Finally, we evaluate the effect of different $\tau$ on loss function of MFL. 
We record the final loss function values with $\tau$ based on the three cases of $T=1,000$ for FL, $T=1,000$ for MFL and $T=60,000$ for MFL. We set $\gamma=0.5$ for MFL.
The curves for the three cases are presented in Fig. \ref{tau}. 
Comparing FL with MFL for $T=1,000$, we see that the final loss function values of MFL are smaller than those of FL for any $\tau$. As declared in Proposition \ref{pro}, under a small magnitude of $T$ and $\eta=0.002$ which is close to 0, MFL always converges much faster than FL. Further, for $T=1,000$, the effect of  $\tau$ on convergence is slight because the curves of FL and MFL are relatively plain. This can be explained by Lemma \ref{lem}, where $\frac{1}{2\eta\varphi T}$ and $\frac{1}{2\omega\alpha T}$ dominate the convergence upper-bound when the magnitude of $T$ is small. While $T=60,000$, change of $\tau$  affects convergence significantly and the final loss function values gradually increase with $\tau$. As the cases of $T=1,000$ for MFL and FL, the case of $T=60,000$ for MFL has a slight effect on convergence if $\tau<100$. But if $\tau>100$, MFL convergence performance is getting worse with $\tau$. According to the above analysis of $\tau$, setting an appropriate aggregation frequency will reduce convergence performance slightly with a decline of communication cost (in our cases, $\tau=100$).

\section{Conclusion}
In this paper, we have proposed MFL which performs MGD in local update step to solve the distributed machine learning problem. Firstly,
we have established global convergence properties of MFL and derived an upper bound on MFL convergence rate. 
This theoretical upper bound shows 
that the sequence generated by MFL linearly converges to the global optimum point under certain conditions. Then, compared with FL, MFL provides accelerated convergence performance under the given conditions as presented in Proposition \ref{pro}. Finally, based on MNIST dataset, our simulation results have verified the MFL convergence  and confirmed the accelerated convergence of MFL. 

\small
\bibliographystyle{unsrt} 
\bibliography{conference_041818} 
\normalsize
\appendices
\section{MFL vs. Centralized MGD}\label{A}
\begin{proposition}\label{thm1}
	When $\tau=1$, MFL is equivalent to centralized MGD, i.e., the update rules of MFL yield the following equations:
	\begin{align*}
	\textup{\textbf{d}}(t)&=\gamma \textup{\textbf{d}}(t-1)+\nabla F(\textup{\textbf{w}}(t-1))\\
	\textup{\textbf{w}}(t)&=\textup{\textbf{w}}(t-1)-\eta \textup{\textbf{d}}(t)	
	\end{align*}
\end{proposition}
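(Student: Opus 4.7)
The plan is to argue by direct substitution: when $\tau=1$, global aggregation is executed after every single local iteration, so the local momentum and parameter vectors at all nodes are resynchronized to the global values at the start of every step. Thus at iteration $t$, before the local update, I can replace $\widetilde{\textbf{d}}_i(t-1)$ and $\widetilde{\textbf{w}}_i(t-1)$ by $\textbf{d}(t-1)$ and $\textbf{w}(t-1)$ for every node $i$. The equivalence with centralized MGD then follows by pushing the weighted average used in aggregation through the (affine) local update rules.

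Concretely, I would proceed as follows. First, fix $t\geq 1$ and observe that because $\tau=1$, the aggregation step at iteration $t-1$ has set $\widetilde{\textbf{d}}_i(t-1)=\textbf{d}(t-1)$ and $\widetilde{\textbf{w}}_i(t-1)=\textbf{w}(t-1)$ for all $i$ (for $t=1$, this holds by the common initialization of local parameters stated in the MFL description). Plugging these into the local update rules \eqref{di}, \eqref{wi} yields
\begin{align*}
\widetilde{\textbf{d}}_i(t) &= \gamma\, \textbf{d}(t-1) + \nabla F_i(\textbf{w}(t-1)),\\
\widetilde{\textbf{w}}_i(t) &= \textbf{w}(t-1) - \eta\, \widetilde{\textbf{d}}_i(t).
\end{align*}
Next, I would insert these expressions into the aggregation rules \eqref{dd} and \eqref{ww}. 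Using $\sum_i |\mathcal{D}_i|/|\mathcal{D}|=1$ and the linearity of $\nabla$, the first aggregation becomes
\begin{align*}
\textbf{d}(t) \;=\; \gamma\,\textbf{d}(t-1) \;+\; \frac{\sum_i |\mathcal{D}_i|\,\nabla F_i(\textbf{w}(t-1))}{|\mathcal{D}|} \;=\; \gamma\,\textbf{d}(t-1) + \nabla F(\textbf{w}(t-1)),
\end{align*}
where the second equality invokes Definition~\ref{def1}. The same averaging applied to the parameter update gives $\textbf{w}(t) = \textbf{w}(t-1) - \eta\,\textbf{d}(t)$, which are exactly the centralized MGD rules \eqref{d}--\eqref{w}.

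There is really no hard step here; the only thing to be careful about is the base case and the commutation of the weighted average with the update. In particular, the crucial identity is that the linear combination of the per-node momentum updates equals the momentum update applied to the linear combination, which works because the local update is affine in $(\widetilde{\textbf{d}}_i,\widetilde{\textbf{w}}_i)$ and because the synchronization makes the affine term $\gamma\,\widetilde{\textbf{d}}_i(t-1)+\nabla F_i(\widetilde{\textbf{w}}_i(t-1))$ reduce, after averaging, to $\gamma\,\textbf{d}(t-1)+\nabla F(\textbf{w}(t-1))$. A one-line induction on $t$ then establishes the claim for all iterations.
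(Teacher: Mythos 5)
Your proof is correct and takes essentially the same route as the paper's: since $\tau=1$ forces synchronization after every iteration, you replace the local parameters by the global ones, push the weighted average through the affine local update, and invoke the linearity of the gradient operator to recover the centralized MGD recursion. The paper does exactly this, merely folding the momentum and parameter updates into a single chain of equalities for $\textbf{w}(t)$.
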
 

\begin{proof}
	When $\tau=1$, according to the definition of MFL, we have $\widetilde{\textbf{d}}_i(t)=\textbf{d}(t)$ and $\widetilde{\textbf{w}}_i(t)=\textbf{w}(t)$ for node $i$. Then
	\begin{align*}
	\textbf{w}(t)&=\frac{\sum_{i=1}^{N} |\mathcal{D}_i| \textbf{w}_i(t)}{|\mathcal{D}|}\\
	&=\frac{\sum_{i=1}^{N} |\mathcal{D}_i| (\widetilde{\textbf{w}}_i(t-1)-\eta\textbf{d}_i(t))}{|\mathcal{D}|}\\
	&=\textbf{w}(t-1)-\eta \frac{\sum_{i=1}^{N} |\mathcal{D}_i| (\gamma \widetilde{\textbf{d}}_i(t-1)+\nabla F_i(\widetilde{\textbf{w}}_i(t-1))}{|\mathcal{D}|}\\
	&=\textbf{w}(t-1)-\eta \gamma \textbf{d}(t-1) - \eta \frac{\sum_{i=1}^{N} |\mathcal{D}_i| \nabla F_i(\textbf{w}(t-1))}{|\mathcal{D}|}\\
	&=\textbf{w}(t-1)-\eta \gamma \textbf{d}(t-1) -\eta \nabla F(\textbf{w}(t-1))
	\end{align*}
	where the last equality is derived because
	\begin{align*}
	\frac{\sum_{i=1}^{N} |\mathcal{D}_i| \nabla F_i(\textbf{w})}{|\mathcal{D}|}=\nabla \left(\frac{\sum_{i=1}^{N} |\mathcal{D}_i| F_i(\textbf{w})}{|\mathcal{D}|}\right)=\nabla F(\textbf{w})
	\end{align*}
	\smallskip
	from the linearity of the gradient operator. So we can get 
	\begin{align*}
	\textbf{d}(t)&=\gamma \textbf{d}(t-1)-\nabla F(\textbf{w}(t-1))\\
	\textbf{w}(t)&=\textbf{w}(t-1)-\eta\textbf{d}(t)
	\end{align*}
	finally.
\end{proof}

\section{Proof of Proposition 1}\label{B}
\textbf{We first introduce a lemma about sequence for the following proof preparation. The following lemma \ref{lemma2} provides the general formulas which can be used to match the inequality in Lemma \ref{lemma3}.}
\begin{lemma}\label{lemma2}
	\textit{Given
		\begin{align}
		&a_t=\frac{\delta_i}{\beta}\left (\frac{\frac{1+\eta \beta }{\gamma }-B}{A-B}A^{t}-\frac{\frac{1+\eta \beta }{\gamma }-A}{A-B}B^{t}\right )\\
		&\label{a+b}A+B=\frac{1+\gamma+\eta\beta}{\gamma}\\
		&\label{ab}AB=\frac{1}{\gamma}
		\end{align}
		where $t=0,1,2,..., 0<\gamma<1, \eta\beta>0$, we have
		\begin{align}
		a_{t-1}+\eta\beta\sum_{i=0}^{t-1}a_i=\gamma a_{t}.
		\end{align}}
\end{lemma}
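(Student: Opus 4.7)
The plan is to rewrite the claimed identity as a second-order linear recurrence whose characteristic polynomial has $A$ and $B$ as its roots, and then observe that the closed-form $a_t$ both satisfies the recurrence automatically and matches a single base case.

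First I would apply the identity at indices $t$ and $t+1$ and subtract the two instances from each other. The summation telescopes away and what remains is
$$\gamma a_{t+1} = (1+\gamma+\eta\beta)\,a_t - a_{t-1}.$$
Dividing through by $\gamma$ and invoking the Vieta relations \eqref{a+b} and \eqref{ab}, the coefficients $(1+\gamma+\eta\beta)/\gamma$ and $1/\gamma$ are exactly $A+B$ and $AB$, so this recurrence factors as $(x-A)(x-B)=0$. Since the given $a_t$ is manifestly a linear combination of the form $c_A A^t + c_B B^t$, it satisfies the recurrence identically for every $t\geq 1$ without any further work.

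It then remains only to pin down one base case, namely $t=1$, i.e.\ $(1+\eta\beta)a_0 = \gamma a_1$. Direct substitution into the closed form gives $a_0 = \delta_i/\beta$: the two bracketed fractions share the $A-B$ denominator, and their numerators differ only by the factor $A-B$, which cancels. For $a_1$ the two cross-terms of the form $AB$ (coming from $-B\cdot A$ and $-A\cdot B$) cancel using $AB=1/\gamma$, and what is left simplifies to $a_1 = \delta_i(1+\eta\beta)/(\beta\gamma)$, matching $(1+\eta\beta)a_0/\gamma$.

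A one-line induction then closes the argument: assuming the identity at index $t$, adding $(a_t-a_{t-1})+\eta\beta a_t$ to both sides converts the left side into $a_t+\eta\beta\sum_{i=0}^{t}a_i$, while the right side becomes $(1+\gamma+\eta\beta)\,a_t - a_{t-1}$, which the recurrence identifies with $\gamma a_{t+1}$. The main obstacle is purely algebraic bookkeeping — tracking how the Vieta relations collapse the cross-terms in the base-case computation — rather than any substantive difficulty; reducing to the recurrence at the start is what keeps the computation transparent.
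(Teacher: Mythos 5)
Your proof is correct, but it takes a genuinely different route from the paper's. The paper verifies the identity directly for each $t$: it writes $a_i=\frac{\delta_i}{\beta}(CA^i+DB^i)$ with $C=\frac{A-1}{A-B}$, $D=\frac{1-B}{A-B}$, evaluates $\sum_{i=0}^{t-1}a_i$ by the geometric-series formula, and then shows that the coefficient of $A^{t-1}$, the coefficient of $B^{t-1}$, and the leftover constant all vanish --- the first two because $A$ and $B$ are roots of $\gamma x^2-(1+\gamma+\eta\beta)x+1=0$ (equivalently the Vieta relations $A+B=\frac{1+\gamma+\eta\beta}{\gamma}$ and $AB=\frac{1}{\gamma}$), the last because $\frac{C}{A-1}+\frac{D}{B-1}=0$. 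You instead difference the identity at consecutive indices to expose the second-order recurrence $\gamma a_{t+1}=(1+\gamma+\eta\beta)a_t-a_{t-1}$, observe that any combination $c_AA^t+c_BB^t$ satisfies it because $A,B$ are the characteristic roots, and then check the single base case $t=1$ and induct. Both arguments rest on the same Vieta relations, but yours trades the geometric-sum bookkeeping and the auxiliary identity $\frac{C}{A-1}+\frac{D}{B-1}=0$ for a one-step induction, which is arguably cleaner and makes the role of the characteristic polynomial more transparent. Two small remarks: in your base-case computation the $AB$ cross-terms cancel identically ($-AB+AB=0$), so $AB=\frac{1}{\gamma}$ is not actually needed there, only in identifying the characteristic roots; and the closed form for $a_t$ presupposes $A\neq B$, which holds here since $(1+\gamma+\eta\beta)^2-4\gamma>(1-\gamma)^2\geq 0$ for $\eta\beta>0$ --- worth a line if you want the induction to be airtight.
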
 
\begin{proof}
	For convenience, we define $C\triangleq\frac{\frac{1+\eta \beta }{\gamma }-B}{A-B}=\frac{A-1}{A-B}$ and
	$D\triangleq\frac{A-\frac{1+\eta \beta }{\gamma }}{A-B}=\frac{1-B}{A-B}$. So $a_t=\frac{\delta_i}{\beta}(CA^t+DB^t)$.
	After eliminating B or A by \eqref{a+b} and \eqref{ab}, we get
	\begin{align}\label{a.b}
	&\gamma x^2-(1+\gamma+\eta\beta)x+1=0.
	\end{align} 
	Because $\eta\beta>0$, we always have $\Delta=(1+\gamma+\eta\beta)^2-4\gamma>0$. Thus $A$ and $B$ can be derived as follows:
	\begin{align}
	\label{defA}A&=\frac{(1\!+\!\gamma+\!\eta\beta)\!+\!\sqrt{(1\!+\!\gamma\!+\!\eta\beta)^2\!-\!4\gamma}}{2\gamma}\\
	\label{defB}B&=\frac{(1\!+\!\gamma+\!\eta\beta)\!-\!\sqrt{(1\!+\!\gamma\!+\!\eta\beta)^2\!-\!4\gamma}}{2\gamma}.
	\end{align}
	Then we have 
	\begin{align*}
	&a_{t-1}+\eta\beta\sum_{i=0}^{t-1}a_i-\gamma a_{t}\\
	=&\frac{\delta_i}{\beta}\left (CA^{t-1}+DB^{t-1}\right )+\eta\beta\frac{\delta_i}{\beta}C\frac{A^t-1}{A-1}\\
	&+\eta\beta\frac{\delta_i}{\beta}D\frac{B^t-1}{B-1}-\gamma\frac{\delta_i}{\beta}CA^{t}-\gamma\frac{\delta_i}{\beta}DB^{t}\\
	=&\frac{\delta_i}{\beta}\left [\frac{A^{t-1}C}{1-A}(\gamma A^2-(1+\gamma+\eta\beta)A+1)\right.\\
	&\left.+\frac{B^{t-1}D}{1-B}(\gamma B^2\!-\!(1\!+\!\gamma\!+\!\eta\beta)B\!+\!1)\!-\!\eta\beta\left (  \frac{C}{A-1}\!+\!\frac{D}{B-1}\right )\right ]\\
	=&-\eta\delta_i\left(\frac{C}{A-1}+\frac{D}{B-1}\right)\\
	&\rightline{\text{(because $A$, $B$ satisfy \eqref{a.b})}}\\
	=&0.
	\end{align*}
	So Lemma \ref{lemma2} has been proven.
\end{proof}
\textbf{Then we study the difference between $\widetilde{\textbf{w}}_i(t)$ and $\textbf{w}_{[k]}(t)$ for node $i$. We obtain the following lemma.}
\begin{lemma}\label{lemma3}
	For any interval $[k]$ with $t\in[(k-1)\tau,k\tau)$, we have
	\begin{align*}
	\|\widetilde{\textup{\textbf{w}}}_{i}(t)-\textup{\textbf{w}}_{[k]}(t)\|\leq f_{i}(t-(k-1)\tau)
	\end{align*}
	where we define the function $f_i(t)$ as
	\begin{align*}
	f_{i}(t)\triangleq\frac{\delta_i}{\beta}\left ( \gamma ^{t} ( CA^{t}+DB^{t} \right ) -1).
	\end{align*}
\end{lemma}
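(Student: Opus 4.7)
The plan is to reduce the parameter discrepancy to a single scalar, second-order linear recurrence whose characteristic roots turn out to be $\gamma A$ and $\gamma B$, and then identify its closed-form solution with $f_i(s)$. Let $s:=t-(k-1)\tau$ and set $\epsilon(s):=\|\widetilde{\textbf{w}}_i((k-1)\tau+s)-\textbf{w}_{[k]}((k-1)\tau+s)\|$ together with $\mu(s):=\|\widetilde{\textbf{d}}_i((k-1)\tau+s)-\textbf{d}_{[k]}((k-1)\tau+s)\|$. The global aggregation immediately before the interval, combined with the synchronization convention $\textbf{d}_{[k]}((k-1)\tau)=\textbf{d}((k-1)\tau)$ and $\textbf{w}_{[k]}((k-1)\tau)=\textbf{w}((k-1)\tau)$, gives the base case $\epsilon(0)=\mu(0)=0$, which matches $f_i(0)=\tfrac{\delta_i}{\beta}(C+D-1)=0$ once one notes $C+D=1$.

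First I would subtract the MFL updates \eqref{di}--\eqref{wi} from the centralized MGD updates \eqref{CMGDd}--\eqref{CMGDv} term by term. Adding and subtracting $\nabla F_i(\textbf{w}_{[k]}(\cdot))$ splits the resulting gradient discrepancy into a $\beta$-smoothness piece bounded by $\beta\,\epsilon(s-1)$ and a divergence piece bounded by $\delta_i$. Triangle inequality then yields the coupled one-step bounds
\begin{align*}
\mu(s)&\le \gamma\,\mu(s-1)+\beta\,\epsilon(s-1)+\delta_i,\\
\epsilon(s)&\le \epsilon(s-1)+\eta\,\mu(s).
\end{align*}
Taking the worst case (equality) and eliminating $\mu$ via $\eta\,\mu(s-1)=\epsilon(s-1)-\epsilon(s-2)$ collapses these into
\begin{align*}
\epsilon(s)\le (1+\gamma+\eta\beta)\,\epsilon(s-1)-\gamma\,\epsilon(s-2)+\eta\delta_i,
\end{align*}
a second-order linear recurrence whose characteristic polynomial $x^2-(1+\gamma+\eta\beta)x+\gamma=0$ has roots $\gamma A$ and $\gamma B$ by \eqref{defA}--\eqref{defB}.

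A constant particular solution is $-\delta_i/\beta$, so the general solution takes the form $\epsilon(s)\le C_1(\gamma A)^s+C_2(\gamma B)^s-\delta_i/\beta$. Matching $\epsilon(0)=0$ and the one-step value $\epsilon(1)\le \eta\delta_i$ (obtained directly from the coupled recurrence using $\mu(0)=0$) forces $C_1=\tfrac{\delta_i}{\beta}C$ and $C_2=\tfrac{\delta_i}{\beta}D$, reproducing $f_i(s)=\tfrac{\delta_i}{\beta}\bigl(\gamma^s(CA^s+DB^s)-1\bigr)$ exactly. I expect the main obstacle to be the coefficient bookkeeping, which rests on the identities $A+B=\tfrac{1+\gamma+\eta\beta}{\gamma}$, $AB=\tfrac{1}{\gamma}$, $C+D=1$, and $CA+DB=\tfrac{1+\eta\beta}{\gamma}$; Lemma \ref{lemma2} effectively packages precisely this algebra, so the cleanest execution is an induction on $s$ that carries $(\epsilon,\mu)$ simultaneously and invokes Lemma \ref{lemma2} at each step to close the inductive equality.
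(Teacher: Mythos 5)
Your proposal is correct, but it takes a genuinely different route from the paper. The paper first unrolls the momentum-discrepancy recursion completely, obtaining $\|\widetilde{\textbf{d}}_i(t)-\textbf{d}_{[k]}(t)\|$ as a $\gamma$-weighted sum over the \emph{entire history} of $G_i(\cdot)=\|\widetilde{\textbf{w}}_i(\cdot)-\textbf{w}_{[k]}(\cdot)\|$, substitutes this into the model-parameter recursion to get a full-history inequality, and then \emph{guesses} the bound $f_i$ and verifies it by induction using the prepared sequence identity $a_{t-1}+\eta\beta\sum_{j=0}^{t-1}a_j=\gamma a_t$ of its Lemma \ref{lemma2}. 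You instead keep the coupled first-order system in $(\epsilon,\mu)$, eliminate $\mu$ to obtain a single second-order linear recurrence with characteristic roots $\gamma A,\gamma B$ and constant particular solution $-\delta_i/\beta$, and read off the closed form by matching $\epsilon(0)=0$, $\epsilon(1)=\eta\delta_i$ (your initial-condition and coefficient checks, $C+D=1$ and $\gamma(CA+DB)=1+\eta\beta$, are exactly right). Your route is more transparent: it explains \emph{where} $\gamma A$ and $\gamma B$ come from rather than pulling them out of a verification lemma, and it makes the structure of $f_i$ inevitable rather than guessed. One caution on presentation: the second-order inequality $\epsilon(s)\le(1+\gamma+\eta\beta)\epsilon(s-1)-\gamma\epsilon(s-2)+\eta\delta_i$ does \emph{not} follow directly from the coupled inequalities, because the $-\gamma\epsilon(s-2)$ term has a negative coefficient and the reverse bound $\eta\mu(s-1)\ge\epsilon(s-1)-\epsilon(s-2)$ points the wrong way. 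The correct formalization is the one you gesture at in your last sentence: define majorant sequences $(\hat\epsilon,\hat\mu)$ satisfying the one-step recurrences with equality and the same zero initial data, prove $\epsilon(s)\le\hat\epsilon(s)$, $\mu(s)\le\hat\mu(s)$ by a joint induction (valid since all coefficients $\gamma,\beta,\eta$ are nonnegative), and perform the elimination only on the exact sequences $(\hat\epsilon,\hat\mu)$. With that made explicit, your argument is complete and, in my view, cleaner than the paper's.
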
 
\begin{proof}
	
	When $t=(k-1)\tau$, we know that $$\widetilde{\textbf{w}}_{i}((k-1)\tau)=\textbf{w}((k-1)\tau)=\textbf{w}_{[k]}((k-1)\tau)$$ by the definition of $\textbf{w}_{[k]}(t)$, and we get $\|\widetilde{\textbf{w}}_{i}(t)-\textbf{w}_{[k]}(t)\|=0$. While $t=0$, we can get $f_i(0)=0$ easily. Hence, when $t=(k-1)\tau$, Lemma \ref{lemma3} holds.
	
	
	\textbf{Firstly, we derive the upper bound of $\|\widetilde{\textbf{d}}_i(t)-\textbf{d}_{[k]}(t)\|$ }as follows. For $t\in((k-1)\tau,k\tau]$,
	\begin{align}\label{gapd}
	&\|\widetilde{\textbf{d}}_i(t)-\textbf{d}_{[k]}(t)\|\notag\\
	&\!=\!\|\!\gamma \!\widetilde{\textbf{d}}_i(t\!-\!1)\!+\!\nabla \!F_i(\widetilde{\textbf{w}}_{i}(t\!-\!1))\notag\!-\!(\!\gamma \!\textbf{d}_{[k]}(t\!-\!1)+\nabla F(\textbf{w}_{[k]}(t\!-\!1\!)\!)\!)\!\|\notag \\
	&\rightline{\text{(from (6) and (13))}}\notag\\
	&\leq\!\|\nabla F_i(\widetilde{\textbf{w}}_{i}(t\!-\!1))\!-\!\nabla F_i(\textbf{w}_{[k]}(t\!-\!1))\notag\\
	&\!+\!\nabla F_i(\textbf{w}_{[k]}(t\!-\!1))\!-\!\nabla F(\textbf{w}_{[k]}(t\!-\!1))\|
	\!+\!\gamma\!\|\!\widetilde{\textbf{d}}_i(t\!-\!1)\!-\!\textbf{d}_{[k]}(t\!-\!1)\!\|\notag\\
	&\rightline{\text{(from triangle inequality and adding a zero term)}}\notag\\
	&\leq\beta\|\widetilde{\textbf{w}}_{i}(\!t\!-\!1)-\textbf{w}_{[k]}(t\!-\!1)\|\!+\!\delta_i+\gamma\|\widetilde{\textbf{d}}_i(t-1)-\textbf{d}_{[k]}(t-1)\|.\\
	&\rightline{\text{(from the $\beta$-smoothness and (11))}}\notag    
	\end{align}
	
	
	We present \eqref{gapd} for $t, t-1,..., 1+(k-1)\tau$ with multiplying $1, \gamma,..., \gamma^{t-(k-1)\tau-1}$ respectively as the following:
	\begin{align*}
	&\|\widetilde{\textbf{d}}_i(t)\!-\!\textbf{d}_{[k]}(t)\|\!\leq\!\beta\|\widetilde{\textbf{w}}_{i}(t\!-\!1)\!-\!\textbf{w}_{[k]}(t\!-\!1)\|\!+\!\delta_i\\
	&\!+\!\gamma\|\widetilde{\textbf{d}}_i(t-1)\!-\!\textbf{d}_{[k]}(t-1)\|\\
	&\gamma\|\widetilde{\textbf{d}}_i(t-1)\!-\!\textbf{d}_{[k]}(t-1)\|\leq\gamma(\beta\|\widetilde{\textbf{w}}_{i}(t-2)\!-\!\textbf{w}_{[k]}(t\!-\!2)\|\!+\!\delta_i\\
	&\!+\!\gamma\|\widetilde{\textbf{d}}_i(t\!-\!2)\!-\!\textbf{d}_{[k]}(t\!-\!2)\|)\\
	&...\\
	&\gamma^{t-(k-1)\tau-1}\|\widetilde{\textbf{d}}_i(1+(k-1)\tau)-\textbf{d}_{[k]}(1+(k-1)\tau)\|\\
	&\leq\gamma^{t-(k-1)\tau-1}(\beta\|\widetilde{\textbf{w}}_{i}((k-1)\tau)-\textbf{w}_{[k]}((k-1)\tau)\|\\
	&+\delta_i+\gamma\|\widetilde{\textbf{d}}_i((k-1)\tau)-\textbf{d}_{[k]}((k-1)\tau)\|).
	\end{align*} 
	
	We define $G_i(t)\triangleq\|\widetilde{\textbf{w}}_{i}(t)-\textbf{w}_{[k]}(t)\|$ for convenience. Accumulating all of the above inequalities, we have
	\begin{align*}
	&\|\widetilde{\textbf{d}}_i(t)-\textbf{d}_{[k]}(t)\|\leq\beta(G_i(t-1)+\gamma G_i(t-2)+\gamma^2 G_i(t-3)\\
	&\!+\!...\!+\!\gamma^{t\!-(k\!-\!1)\tau\!-\!1}G_i((k\!-\!1)\tau)+\delta_i(1\!+\!\gamma\!+\!\gamma^{2}\!+\!...\!+\!\gamma^{t\!-\!(k\!-\!1)\tau\!-\!1})\\
	&+\gamma^{t-(k-1)\tau}\|\widetilde{\textbf{d}}_i((k-1)\tau)-\textbf{d}_{[k]}((k-1)\tau)\|.
	\end{align*}
	When $t=(k-1)\tau$, $\textbf{d}_{[k]}((k-1)=\textbf{d}((k-1)\tau)$ according to the definition and $\widetilde{\textbf{d}}_i((k-1)\tau)=\textbf{d}((k-1)\tau)$ by the aggregation rules. So $\|\widetilde{\textbf{d}}_i((k-1)\tau)-\textbf{d}_{[k]}((k-1)\tau)\|=0$ and the above inequality can be further expressed as
	\begin{align}\label{m4}
	&\|\widetilde{\textbf{d}}_i(t)\!-\!\textbf{d}_{[k]}(t)\|\leq\beta(G_i(t\!-\!1)\!+\!\gamma G_i(t\!-\!2)+\gamma^2 G_i(t\!-\!3)\notag\\
	&\!+\!...\!+\!\gamma^{t\!-\!(k\!-\!1)\tau\!-\!1}G_i((k\!-\!1)\tau)
	\!+\!\delta_i(\!1\!+\!\gamma\!+\!\gamma^{2}\!+\!...\!+\!\gamma^{t\!-(k\!-\!1)\tau\!-\!1}).
	\end{align}   
	
	\textbf{We now derive the closed-form expression of the gap between $ \widetilde{\textbf{w}}_i(t)$ and $\textbf{w}_{[k]}(t)$. }For $t\in((k-1)\tau,k\tau]$,
	we have
	\begin{align}
	&\|\widetilde{\textbf{w}}_{i}(t)-\textbf{w}_{[k]}(t)\|\notag\\
	&=\|\widetilde{\textbf{w}}_{i}(t-1)-\eta \widetilde{\textbf{d}}_i(t)-(\textbf{w}_{[k]}(t-1)-\eta \textbf{d}_{[k]}(t))\|\notag\\
	&\rightline{\text{(from (7) and (14))}}\notag\\
	&\label{mmmm}\leq\|\widetilde{\textbf{w}}_{i}(t-1)-\textbf{w}_{[k]}(t-1)\|+\eta\|\widetilde{\textbf{d}}_i(t)-\textbf{d}_{[k]}(t)\|.\\
	&\rightline{\text{(from triangle inequality)}}\notag
	\end{align}
	Substituting inequality \eqref{m4} into \eqref{mmmm} and using $G_i(t)$ to denote $\|\widetilde{\textbf{w}}_{i}(t)-\textbf{w}_{[k]}(t)\|$ for $t, t-1,..., 1+(k-1)\tau$, we have  
	\begin{align}\label{m3}
	&G_i(t)\!\leq\! G_i(t\!-\!1)\!+\!\eta\beta(G_i(t\!-\!1)\!+\!\gamma G_i(t\!-\!2)\!+\!\gamma^2 G_i(t\!-\!3)\!+\!...\!\notag\\
	&\!+\!\gamma^{t\!-\!(k\!-\!1)\tau\!-\!1}G_i((k\!-\!1)\tau))
	\!+\!\eta\delta_i(1\!+\!\gamma\!+\!\gamma^{2}\!+\!...\!+\!\gamma^{t\!-\!(k\!-\!1)\tau\!-\!1}).
	\end{align}
	
	We suppose that 
	\begin{align}
	g_i(t)&=\frac{\delta_i}{\beta}(CA^{t}+DB^{t})\\
	\label{f_i}f_i(t)&=\gamma^{t}g_i(t)-\frac{\delta_i}{\beta}
	\end{align}
	where $A$ and $B$ are expressed as \eqref{defA} and \eqref{defB} respectively, $C=\frac{A-1}{A-B}$ and $D=\frac{1-B}{A-B}.$

	\textbf{Next we use induction to prove that $G_i(t)\leq f_i(t-(k-1)\tau)$ satisfies inequality \eqref{m3} opportunely.}
	For the induction, we assume that
	\begin{align}\label{m5}
	G_i(p)\leq f_i(p-(k-1)\tau)
	\end{align}
	for some $p\in[(k-1)\tau,t-1]$. Therefore, according to \eqref{m3}, we have 
	\begin{align*}
	&G_i(t)\\
	&\!\leq \!f_i(t\!-\!1\!-\!(k\!-\!1)\tau)\!+\!\eta\beta(f_i(t\!-\!1\!-\!(k\!-\!1)\tau))\\
	&+\gamma f_i(t\!-\!2\!-\!(k\!-\!1)\tau)\!+\!\gamma^2 f_i(t\!-\!3\!-\!(k\!-\!1)\tau)\!+\!...\!\\
	&+\!\gamma^{t\!-\!(k\!-\!1)\tau\!-\!1}f_i(0))\!
	+\!\eta\delta_i(1\!+\!\gamma\!+\!\gamma^{2}\!+\!...\!+\!\gamma^{t\!-\!(k-\!1)\tau\!-1})\\
	&\rightline{\text{(from the induction assumption in \eqref{m5})}}\\
	&=\gamma^{t-1-(k-1)\tau}g_i(t\!-\!1\!-\!(k\!-\!1)\tau)\!-\!\frac{\delta_i}{\beta}\\
	&+\eta\beta\gamma^{t-1-(k-1)\tau}(g_i(t\!-\!1\!-\!(k\!-\!1)\tau)+g_i(t\!-\!2\!-\!(k\!-\!1)\tau)\\
	&+g_i(t\!-\!3\!-\!(k\!-\!1)\tau)\!+\!...\!+\!g_i(0))\\
	&\rightline{\text{(from \eqref{f_i})}}\\
	&=\gamma^{t-(k-1)\tau}g_i(t-(k-1)\tau)-\frac{\delta_i}{\beta}\\
	&\rightline{\text{(from Lemma \ref{lemma2} and $g_i(t)=a_t$)}}\\
	&=f_i(t-(k-1)\tau).
	\end{align*}
	Using the induction, we have proven that $\|\widetilde{\textbf{w}}_{i}(t)-\textbf{w}_{[k]}(t)\|\leq f_{i}(t-(k-1)\tau)$ for all $t\in[(k-1)\tau,k\tau]$.
\end{proof}

\textbf{Now, according to the result of lemma \ref{lemma3}, we extend the difference from $\widetilde{\textbf{w}}_i(t)$ to the global one $\textbf{w}(t)$ 
	to prove Proposition 1. }
\begin{proof}[Proof of Proposition 1]
	\textbf{First of all, we derive the gap between $\textbf{d}(t)$ and $\textbf{d}_{[k]}(t)$.}
	we define 
	\begin{gather}\label{p(t)}
	p(t)\triangleq\gamma ^{t} (CA^{t}+DB^{t}  ) -1.
	\end{gather}
	So we get
	\begin{align}\label{39}
	f_i(t)=\frac{\delta_i}{\beta}p(t).
	\end{align}
	
	From (6) and (8), we have
	\begin{align}\label{dtt_1}
	\textbf{d}(t)=\gamma \textbf{d}(t-1)-\frac{\sum_i |\mathcal{D}_i|\nabla F_i(\widetilde{\textbf{w}}_i(t-1))}{|\mathcal{D}|}.
	\end{align}
	
	For $t\in((k-1)\tau,k\tau]$, we have
	\begin{align}
	&\|\textbf{d}(t)-\textbf{d}_{[k]}(t)\|\notag\\
	&=\|\gamma \textbf{d}(t-1) - \frac{\sum_i |\mathcal{D}_i|\nabla F_i(\widetilde{\textbf{w}}_i(t-1))}{|\mathcal{D}|}\notag\\
	&-\gamma \textbf{d}_{[k]}(t-1)+\nabla F(\textbf{w}_{[k]}(t-1))\|\notag\\
	&\rightline{\text{(from \eqref{dtt_1} and (13))}}\notag\\
	&\leq\gamma\| \textbf{d}(t-1)-\textbf{d}_{[k]}(t-1)\|\notag\\
	&+\left (\frac{\sum_i |\mathcal{D}_i|\|\nabla F_i(\widetilde{\textbf{w}}_i(t-1))-\nabla F_i(\textbf{w}_{[k]}(t-1))\|}{|\mathcal{D}|}\right)\notag\\
	&\!\leq\gamma\| \textbf{d}(t-1)-\textbf{d}_{[k]}(t-1)\|+\!\beta\!\left (\frac{\sum_i |\mathcal{D}_i|f_i(t\!-\!1\!-\!(k\!-\!1)\tau)}{|\mathcal{D}|}\!\right)\notag\\
	&\rightline{\text{(from the $\beta$-smoothness and Lemma \ref{lemma3})}}\notag\\
	&\label{div_d}=\gamma\| \textbf{d}(t-1)-\textbf{d}_{[k]}(t-1)\|+\delta p(t-1-(t-1)\tau).\\
	&\rightline{\text{(from \eqref{39} and (12))}}\notag
	\end{align}
	We present \eqref{div_d} for $t, t-1,..., 1+(k-1)\tau$ with multiplying $1, \gamma,..., \gamma^{t-1-(k-1)\tau}$ respectively as the following:	
	\begin{align*}
	&\|\textbf{d}(t)-\textbf{d}_{[k]}(t)\|\\
	&\leq\gamma\| \textbf{d}(t-1)-\textbf{d}_{[k]}(t-1)\|+\delta p(t-1-(t-1)\tau)\\
	&\gamma\|\textbf{d}(t-1)-\textbf{d}_{[k]}(t-1)\|\\
	&\leq\gamma^2\|\textbf{d}(t-2)-\textbf{d}_{[k]}(t-2)\|+\gamma\delta p(t-2-(t-1)\tau)\\
	&...\\
	&\gamma^{t-1-(k-1\tau)}\|\textbf{d}((k-1)\tau+1)-\textbf{d}_{[k]}((k-1)\tau)\|\\
	&\leq\gamma^{t-(k-1)\tau}\|\textbf{d}((k-1)\tau)\!-\!\textbf{d}_{[k]}((k-1)\tau)\|\!+\!\gamma^{t-(k-1)\tau}\delta p(0).
	\end{align*}
	By summing up the left and right sides of the above inequalities respectively, we have
	\begin{align}
	&\|\textbf{d}(t)-\textbf{d}_{[k]}(t)\|\notag\\
	&\leq\delta(\gamma^{t-1-(k-1)\tau}p(0)+...\notag\\
	&+\gamma p(t-2-(k-1)\tau)+p(t-1-(k-1)\tau))\notag\\
	&\label{div_d1}=\delta\left[\left(\frac{C(\gamma A)^{t_0}}{\gamma(A-1)}+\frac{D(\gamma B)^{t_0}}{\gamma (B-1)}\right)-\frac{\gamma^{t_0}-1}{\gamma-1}\right] 
	\end{align}  
	where $t_0=t-(k-1)\tau$.
	
	\textbf{Then we can derive the upper bound between $\textbf{w}(t)$ and $\textbf{w}_{[k]}(t)$ by \eqref{div_d1}.}
	From (7), (8), (9) and (14), we have
	\begin{align*}
	&\|\textbf{w}(t)-\textbf{w}_{[k]}(t)\|\\
	&=\|\textbf{w}(t-1)-\eta \textbf{d}(t)-\textbf{w}_{[k]}(t-1)+\eta \textbf{d}_{[k]}(t)\|\\
	&\leq\|\textbf{w}(t-1)-\textbf{w}_{[k]}(t-1)\|+\eta\|\textbf{d}(t)-\textbf{d}_{[k]}(t)\|.
	\end{align*} 
	According to \eqref{div_d1}, 
	\begin{align}
	&\|\textbf{w}(t)-\textbf{w}_{[k]}(t)\|-\|\textbf{w}(t-1)-\textbf{w}_{[k]}(t-1)\|\notag\\
	&\label{fini}\leq\eta\delta\left[\left(\frac{C(\gamma A)^{t_0}}{\gamma(A-1)}+\frac{D(\gamma B)^{t_0}}{\gamma (B-1)}\right)-\frac{\gamma^{t_0}-1}{\gamma-1}\right]. 
	\end{align}
	When $t=(k-1)\tau$, we have $\|\textbf{w}(t)-\textbf{w}_{[k]}(t)\|=0$ according to the definition. By summing up \eqref{div_d1} over $t\in((k-1)\tau, k\tau]$, we have
	\begin{align*}
	&\|\textbf{w}(t)-\textbf{w}_{[k]}(t)\|\\
	&\leq\sum_{i=1}^{t_0}\eta\delta\left[\left(\frac{C(\gamma A)^{i}}{\gamma(A-1)}+\frac{D(\gamma B)^{i}}{\gamma (B-1)}\right)-\frac{\gamma^{i}-1}{\gamma-1}\right]\\
	&\!=\!\eta\delta\!\left[E((\gamma A)^{t_0}\!-\!1)\!+\!F((\gamma B)^{t_0}\!-\!1)\!-\!\frac{\gamma(\gamma^{t_0}\!-\!1)\!-\!(\gamma\!-\!1)t_0}{(\gamma-1)^2}\right]\\
	&=\!\eta\delta\left[E (\gamma A)^{t_0}\!+\!F (\gamma B)^{t_0}\!-\!\frac{1}{\eta\beta}\!-\!\frac{\gamma(\gamma^{t_0}-1)-(\gamma-1)t_0}{(\gamma-1)^2}\right]\\
	&=h(t_0) 
	\end{align*}
	where $E=\frac{A}{(A-B)(\gamma A-1)}$, $F=\frac{B}{(A-B)(1-\gamma B)}$ and $t_0=t-(k-1)\tau$. The last equality is because $E+F=\frac{1}{\eta\beta}$.

\end{proof}

\section{Proof of increasing of $h(x)$ }\label{C}
\textbf{Firstly, we introduce the following lemma.}
\begin{lemma}\label{jensen}
	Given $A$, $B$, $C$ and $D$ according to their definitions, then the following inequality  
	$$C(\gamma A)^i+D(\gamma B)^i\geq(1+\eta\beta)^i$$
	holds for $i=0,1,2,3,...$.
\end{lemma}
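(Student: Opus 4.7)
The plan is to recognize the left-hand side as a convex combination of $(\gamma A)^i$ and $(\gamma B)^i$, and then apply Jensen's inequality to the convex function $x \mapsto x^i$.

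First I would verify that $C$ and $D$ form a pair of convex weights, i.e.\ $C,D\ge 0$ and $C+D=1$. The identity $C+D=\frac{(A-1)+(1-B)}{A-B}=1$ is immediate. For positivity, I would check from the defining formulas \eqref{defA}--\eqref{defB} that $A>1$ and $B<1$ whenever $0<\gamma<1$ and $\eta\beta>0$: indeed, since $\gamma A=\frac{(1+\gamma+\eta\beta)+\sqrt{(1+\gamma+\eta\beta)^2-4\gamma}}{2}>1$ follows from $1+\gamma+\eta\beta>2\gamma$ (as $1-\gamma+\eta\beta>0$), and a symmetric computation shows $\gamma B<1$, hence $A>1/\gamma\cdot\gamma=1\cdot(\text{something})$ — more directly, squaring yields $B<1$ from the same estimate. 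Both $\gamma A$ and $\gamma B$ are strictly positive because $\gamma AB=1$.

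Next I would compute the weighted mean $C\cdot\gamma A+D\cdot\gamma B$ explicitly using the Vieta relations $A+B=(1+\gamma+\eta\beta)/\gamma$ and $AB=1/\gamma$:
\begin{align*}
C\gamma A+D\gamma B &=\gamma\cdot\frac{A(A-1)+B(1-B)}{A-B}\\
&=\gamma\cdot\frac{(A^2-B^2)-(A-B)}{A-B}\\
&=\gamma(A+B-1)=(1+\gamma+\eta\beta)-\gamma=1+\eta\beta.
\end{align*}
So the weighted mean collapses exactly to $1+\eta\beta$.

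Finally, since $x\mapsto x^i$ is convex on $[0,\infty)$ for every integer $i\ge 0$ (trivially an equality for $i=0,1$), Jensen's inequality gives
\begin{equation*}
C(\gamma A)^i+D(\gamma B)^i\ \geq\ \bigl(C\gamma A+D\gamma B\bigr)^i=(1+\eta\beta)^i,
\end{equation*}
which is the claim. The only ``obstacle'' is the bookkeeping for the sign and size of $A$ and $B$; once $C,D\in[0,1]$ and $\gamma A,\gamma B>0$ are in place, the identity $C\gamma A+D\gamma B=1+\eta\beta$ is what makes Jensen land precisely on the desired right-hand side, so the proof is essentially a one-line application of convexity.
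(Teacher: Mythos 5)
Your proposal is correct and follows essentially the same route as the paper: verify that $C,D$ are convex weights with $C+D=1$, compute $C\gamma A+D\gamma B=\gamma(A+B-1)=1+\eta\beta$ via the Vieta relations, and apply Jensen's inequality to the convex map $x\mapsto x^i$. Your justification that $B<1$ is a bit garbled — the cleanest way is to note that the quadratic $\gamma x^2-(1+\gamma+\eta\beta)x+1$ equals $-\eta\beta<0$ at $x=1$, so its roots satisfy $B<1<A$ — but this is a minor bookkeeping point and the argument is otherwise complete.
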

\begin{proof}
	Note that $C+D=1$ and $C>0, D>0$. When $i=0$, $C(\gamma A)^i+D(\gamma B)^i=(1+\eta\beta)^i=0$ and the inequality holds. When $i=1$, 
	\begin{align*}
	&C(\gamma A)^i+D(\gamma B)^i\\
	=&\gamma (CA+DB)\\
	=&\gamma\left(\frac{A-1}{A-B}A+\frac{1-B}{A-B}B\right)\\
	&\rightline{\text{(from the definition of $C,D$)}}\\
	=&\gamma (A+B-1)\\
	&\rightline{\text{(from $A+B=\frac{1+\gamma+\eta\beta}{\gamma}$)}}\\
	=&1+\eta\beta.
	\end{align*}
	The equality still  holds.
	When $i>1$($i$ is an integer), according to Jensen's inequality, we have
	\begin{align*}
	&C(\gamma A)^i+D(\gamma B)^i\\
	>& (\gamma CA+\gamma DB)^i\\
	&\rightline{\text{(because $f(x)=x^i$ is concave)}}\\
	=&(1+\eta\beta)^i.\\
	&\rightline{\text{(from $i=1$)}}
	\end{align*}
	In summary, we have proven the above lemma.
\end{proof}
\textbf{Then we start to proof $h(x)$ increases with $x$.}
\begin{proof}[Proof of increasing of h(x)]
	That $h(x)$ increases with $x$ is equivalent to $$h(x)-h(x-1)\geq0$$ for $x\geq1$ ($x$ is an integer).
	Considering $x=0$ or 1, we have
	\begin{align}
	\label{h0=0}&h(0)=\eta\delta(E+F-\frac{1}{\eta\beta})=0\\
	\label{h1=0}&h(1)=\eta\delta\left[\gamma(EA+FB)-\frac{1}{\eta\beta}-1\right]=0.\\
	&\rightline{\text{(because $EA+FB=\frac{1+\eta\beta}{\gamma\eta\beta}$ which can be derived easily)}}\notag
	\end{align}
	So when $x=1$, $h(x)-h(x-1)=0$.
	
	When $x>1$, we first prove $p(x)\geq 0$ as follows. According to Lemma \ref{jensen}, 
	\begin{align}
	p(x)&=C(\gamma A)^x+D(\gamma B)^x-1\notag \\
	&\geq(1+\eta\beta)^x-1\notag\\
	&\label{p>0}\geq 0.
	\end{align}
	Then we have
	\begin{align*}
	&h(x)-h(x-1)\\
	&=\eta\delta\left[\left(\frac{C(\gamma A)^{x}}{\gamma(A-1)}+\frac{D(\gamma B)^{x}}{\gamma (B-1)}\right)-\frac{\gamma^{x}-1}{\gamma-1}\right]\\
	&=\eta\delta(\gamma^{x-1}p(0)+...+\gamma p(x-2)+p(x-1))\\
	&\rightline{\text{(from \eqref{div_d1} and $x=t-(k-1)\tau$)}}\\
	&\geq 0.\\
	&\rightline{\text{(because $\gamma>0$ and $p(x)\geq0$ according to \eqref{p>0})}}
	\end{align*}

	Therefore, $h(x)$ increases with $x$. 
\end{proof}

\section{Proof of Lemma 2}\label{D}

\begin{proof}[Proof of Lemma 2]
	\textbf{To prove Lemma 2, we present some necessary definitions firstly}.
	
	Given $1\leq k\leq K$ where $K=\frac{T}{\tau}$, we define  
	$$c_{[k]}(t)\triangleq F(\textbf{w}_{[k]}(t))-F(\textbf{w}^*). $$	
	According to the lower bound of any gradient methods for a $\beta$-smooth convex function from \citep[Theorem 3.14]{bubeck2015convex}, we have $$c_{[k]}(t)>0$$ for any $k$ and $t$. 
	According to the definition of $\theta_{[k]}(t)$, $\theta_{[k]}(t)$ denotes the angle between vector $\nabla F(\textbf{w}_{[k]}(t)$ and $\textbf{d}_{[k]}(t)$ so that
	$$\cos\theta_{[k]}(t)=\frac{\nabla F(\textbf{w}_{[k]}(t))^\mathrm{T}\textbf{d}_{[k]}(t)}{\|\nabla F(\textbf{w}_{[k]}(t)\|\|\textbf{d}_{[k]}(t)\|}$$ for $t\in[k]$. We assume $\cos\theta_{[k]}(t)\geq0$ for $1\leq k\leq K$ with $t\in[k]$ and 
	define $\theta=\max_{1\leq k\leq K,t\in[k]}\theta_{[k]}(t)$.
	From the definition of $p$, we have 
	$$p=\max_{1\leq k\leq K,t\in[k]}\frac{\|\textbf{d}_{[k]}(t)\|}{\|\nabla F(\textbf{w}_{[k]}(t))\|}.$$
	Because condition 1 of Lemma 2 and Assumption 1, the sequence generated by MGD converges to the optimum point \citep[Theorem 9]{polyak1964some}.
	
	\textbf{Then we derive the upper bound of $c_{[k]}(t+1)-c_{[k]}(t)$.} 
	
	Because $F(\cdot)$ is $\beta$-smooth, we have 
	\begin{align*}
	F(\textbf{x})-F(\textbf{y})\leq \nabla F(\textbf{y})^\mathrm{T}(\textbf{x}-\textbf{y})+\frac{\beta}{2}\|\textbf{x}-\textbf{y}\|^2
	\end{align*}
	according to \citep[Lemma 3.4]{bubeck2015convex}. Then,
	\begin{align}
	c_{[k]}&(t+1)-c_{[k]}(t)\notag\\
	&=F(\textbf{w}_{[k]}(t+1))-F(\textbf{w}_{[k]}(t))\notag\\
	&\leq\nabla F(\textbf{w}_{[k]}(t))^\mathrm{T}(\textbf{w}_{[k]}(t+1)-\textbf{w}_{[k]}(t))\notag\\
	&+\frac{\beta}{2}\|\textbf{w}_{[k]}(t+1)-\textbf{w}_{[k]}(t)\|^2\notag\\
	&=-\eta\nabla F(\textbf{w}_{[k]}(t))^\mathrm{T}\textbf{d}_{[k]}(t+1)+\frac{\beta\eta^2}{2}\|\textbf{d}_{[k]}(t+1)\|^2\notag\\
	&=-\eta\nabla F(\textbf{w}_{[k]}(t))^\mathrm{T}(\gamma \textbf{d}_{[k]}(t)+\nabla F(\textbf{w}_{[k]}(t)))\notag\\
	&+\frac{\beta\eta^2}{2}\|\gamma \textbf{d}_{[k]}(t)+\nabla F(\textbf{w}_{[k]}(t))\|^2\notag\\
	&=-\eta(1-\frac{\beta\eta}{2})\|\nabla F(\textbf{w}_{[k]}(t))\|^2+\frac{\beta\eta^2\gamma^2}{2}\|\textbf{d}_{[k]}(t)\|^2\notag\\
	&-\eta\gamma(1-\beta\eta)\nabla F(\textbf{w}_{[k]}(t))^{T}\textbf{d}_{[k]}(t)\notag\\
	&\label{c_[k]}\leq\!(-\!\eta(1\!-\!\frac{\beta\eta}{2})\!+\!\frac{\beta\eta^2\gamma^2 p^2}{2}\!\notag\\
	&-\!\eta\gamma(1\!-\!\beta\eta)\cos\theta\!)\|\nabla F(\textbf{w}_{[k]}(t))\|\!^2
	\end{align}
	where the second term in \eqref{c_[k]} is because
	\begin{align*}
	&\|\textbf{d}_{[k]}(t)\|\leq p\|\nabla F(\textbf{w}_{[k]}(t))\|
	\end{align*}
	according to the definition of $p$. Because $\eta\beta\in(0,1)$, the third term in \eqref{c_[k]} is because 
	\begin{align*}
	&\nabla F(\textbf{w}_{[k]}(t))^\mathrm{T}\textbf{d}_{[k]}(t)\\
	&=\|\nabla F(\textbf{w}_{[k]}(t))\|\|\textbf{d}_{[k]}(t)\|\cos\theta_{[k]}(t)\\
	&=\|\nabla F(\textbf{w}_{[k]}(t))\|\|\gamma \textbf{d}_{[k]}(t-1)+\nabla F(\textbf{w}_{[k]}(t-1)\|\cos\theta_{[k]}(t)\\
	&\geq\|\nabla F(\textbf{w}_{[k]}(t))\|\|\nabla F(\textbf{w}_{[k]}(t-1)\|\cos\theta_{[k]}(t)\\
	&\rightline{\text{(because of $\cos\theta_{[k]}(t-1)>0$)}}\\
	&\geq\|\nabla F(\textbf{w}_{[k]}(t))\|^2\cos\theta_{[k]}(t)\\
	&\rightline{\text{(MGD converges \citep[Theorem 9]{polyak1964some})}}\\
	&\geq\|\nabla F(\textbf{w}_{[k]}(t))\|^2\cos\theta.
	\end{align*}

	
	
	According to the definition of $\alpha$, we get 
	$$\alpha=\!\eta(1\!-\!\frac{\beta\eta}{2})\!-\!\frac{\beta\eta^2\gamma^2 p^2}{2}\!
	+\!\eta\gamma(1\!-\!\beta\eta)\cos\theta.$$
	Due to condition 4 of Lemma 2 that $\omega\alpha-\frac{\rho h(\tau)}{\tau\varepsilon^2}>0$, we have  $\alpha>0$.
	Then from \eqref{c_[k]}, we have
	\begin{align}\label{ck}
	c_{[k]}(t+1)\leq c_{[k]}(t)-\alpha \|\nabla F(\textbf{w}_{[k]}(t))\|^2.
	\end{align} 
	
	\textbf{Using this upper bound of $c_{[k]}(t+1)-c_{[k]}(t)$ in \eqref{ck}, we can attain \eqref{c[k]} which can be summed up over $t=0,1,2,...,T-1$ easily. }
	
	According to the convexity condition, 
	\begin{align*}
	c_{[k]}(t)&=F(\textbf{w}_{[k]}(t))-F(\textbf{w}^*)\leq \nabla F(\textbf{w}_{[k]}(t))^\mathrm{T}(\textbf{w}_{[k]}(t)-\textbf{w}^*)\\
	&\leq\|\nabla F(\textbf{w}_{[k]}(t))\|\|\textbf{w}_{[k]}(t)-\textbf{w}^*\|
	\end{align*}
	where the last inequality is from the Cauchy-Schwarz inequality.
	Thus, 
	\begin{align}\label{fv}
	\|\nabla F(\textbf{w}_{[k]}(t))\|\geq\frac{c_{[k]}(t)}{\|\textbf{w}_{[k]}(t)-\textbf{w}^*\|}.
	\end{align}
	Substituting \eqref{fv} to \eqref{ck}, we get
	\begin{align*}
	c_{[k]}(t+1)&\leq c_{[k]}(t)-\frac{\alpha c_{[k]}(t)^2}{\|\textbf{w}_{[k]}(t)-\textbf{w}^*\|^2}\\
	&\leq c_{[k]}(t)-\omega\alpha c_{[k]}(t)^2
	\end{align*}
	where we define 
	\begin{align*}
	\omega\triangleq\min_k\frac{1}{\|\textbf{w}_{[k]}((k-1)\tau)-\textbf{w}^*\|^2}.
	\end{align*}

	Due to the convergence of MGD \citep[Theorem 9]{polyak1964some}, we have $\|\textbf{w}_{[k]}((k-1)\tau)-\textbf{w}^*\|\geq\|\textbf{w}_{[k]}(t)-\textbf{w}^*\|$.
	Hence, $\omega\leq\frac{1}{\|\textbf{w}_{[k]}(t)-\textbf{w}^*\|^2}$.
	So we can get the last inequality.
	
	Because $c_{[k]}(t)>0$, $c_{[k]}(t)c_{[k]}(t+1)>0$. The above inequality is divided by $c_{[k]}(t)c_{[k]}(t+1)$ on both sides. Then we get
	\begin{align*}
	\frac{1}{c_{[k]}(t)}\leq\frac{1}{c_{[k]}(t+1)}-\frac{\omega\alpha c_{[k]}(t)}{c_{[k]}(t+1)}.
	\end{align*}
	From \eqref{ck}, $c_{[k]}(t)\geq c_{[k]}(t+1)$. So we get
	\begin{align}\label{c[k]}
	\frac{1}{c_{[k]}(t+1)} -\frac{1}{c_{[k]}(t)}\geq\frac{\omega\alpha c_{[k]}(t)}{c_{[k]}(t+1)}\geq \omega\alpha.
	\end{align}
	
	\textbf{Finally, we derive the final result of Lemma 2 by summing up \eqref{c[k]} over all intervals and different values of $t$.}
	
	For $(k-1)\tau\leq t\leq k\tau$, 
	\begin{align*}
	&\frac{1}{c_{[k]}(k\tau)}-\frac{1}{c_{[k]}((k-1)\tau)}\\
	=&\sum_{t=(k-1)\tau}^{k\tau-1}\left(\frac{1}{c_{[k]}(t+1)} -\frac{1}{c_{[k]}(t)}\right)\\
	\geq&\sum_{t=(k-1)\tau}^{k\tau-1}\omega\alpha\\
	=&\tau\omega\alpha.
	\end{align*}
	Note that $T=K\tau$. For all $k=1,2,3,...,K$, summing up the above inequality, we have
	\begin{align}
	&\sum_{k=1}^{K}\left(\frac{1}{c_{[k]}(k\tau)}-\frac{1}{c_{[k]}((k-1)\tau)}\right)\notag\\
	=&\frac{1}{c_{[K]}(T)}-\frac{1}{c_{[1]}(0)}-\sum_{k=1}^{K-1}\left(\frac{1}{c_{[k+1]}(k\tau)}-\frac{1}{c_{[k]}(k\tau)}\right)\notag\\
	\geq& K\tau\omega\alpha	=\label{45}T\omega\alpha.
	\end{align}
	
	Considering the right summing terms, 
	\begin{align}
	&\frac{1}{c_{[k+1]}(k\tau)}-\frac{1}{c_{[k]}(k\tau)}\notag\\
	=&\frac{c_{[k]}(k\tau)-c_{[k+1]}(k\tau)}{c_{[k]}(k\tau)c_{[k+1]}(k\tau)}\notag\\
	=&\frac{F(\textbf{w}_{[k]}(k\tau))-F(\textbf{w}_{[k+1]}(k\tau))}{c_{[k]}(k\tau)c_{[k+1]}(k\tau)}\notag\\
	=&\frac{F(\textbf{w}_{[k]}(k\tau))-F(\textbf{w}(k\tau))}{c_{[k]}(k\tau)c_{[k+1]}(k\tau)}\notag\\
	&\rightline{\text{(from the definition $\textbf{w}_{[k+1]}(k\tau)=\textbf{w}(k\tau)$)}}\notag\\
	\label{46}\geq&\frac{-\rho h(\tau)}{c_{[k]}(k\tau)c_{[k+1]}(k\tau)}.\\
	&\rightline{\text{(from Proposition 1)}}\notag
	\end{align}
	As we assume that $F(\textbf{w}_{[k]}(k\tau))-F(\textbf{w}^*)\geq\varepsilon$ for all $k$ and $F(\textbf{w}_{[k]}(t))>F(\textbf{w}_{[k]}(t+1))$ from \eqref{ck}, we
	have $F(\textbf{w}_{[k+1]}(k\tau))-F(\textbf{w}^*)\geq F(\textbf{w}_{[k+1]}((k+1)\tau))-F(\textbf{w}^*)\geq\varepsilon$. Thus, we derive 
	\begin{align}\label{47}
	c_{[k]}(k\tau)c_{[k+1]}(k\tau)\geq\varepsilon^2.
	\end{align}
	Combining \eqref{47} with \eqref{46}, we substitute the combination into \eqref{45} to get
	\begin{align}
	\frac{1}{c_{[K]}(T)}-\frac{1}{c_{[1]}(0)}&\geq -\sum_{k=1}^{K-1}\frac{\rho h(\tau)}{\varepsilon^2}+T\omega\alpha\notag\\
	&\label{48}=T\omega\alpha-(K-1)\frac{\rho h(\tau)}{\varepsilon^2}.
	\end{align}
	Because $T$ is the limited number of local updates, it concerns to the power constraint, not the mathematical constraint. So extending the definition form $K$ to $K+1$ intervals, we can get $F(\textbf{w}(T))-F(\textbf{w}^*)=F(\textbf{w}_{[K+1]}(K\tau))-F(\textbf{w}^*)\geq F(\textbf{w}_{[K+1]}((K+1)\tau))-F(\textbf{w}^*)\geq\varepsilon$.
	According to \eqref{47} for $k=K$, we have
	\begin{align}\label{49}
	(F(\textbf{w}(T))-F(\textbf{w}^*))c_{[K]}(T)>\varepsilon^2.
	\end{align}
	Also according to \eqref{46} for $k=K$, we have
	\begin{align}\label{50}
	\frac{1}{F(\textbf{w}(T))-F(\textbf{w}^*)}-\frac{1}{c_{[K]}(T)}\geq &\frac{-\rho h(\tau)}{(F(\textbf{w}(T))-F(\textbf{w}^*))c_{[K]}(T)}\notag\\
	\geq& - \frac{\rho h(\tau)}{\varepsilon^2}
	\end{align}
	where the last inequality is from \eqref{49}. Summing up \eqref{50} and \eqref{48}, we get
	\begin{align*}
	\frac{1}{F(\textbf{w}(T))-F(\textbf{w}^*)}-\frac{1}{c_{[1]}(0)}\geq T\omega\alpha-K\frac{\rho h(\tau)}{\varepsilon^2}.
	\end{align*}
	Note that $c_{[1]}(0)>0$ and $K=\frac{T}{\tau}$, we have
	\begin{align}\label{51}
	\frac{1}{F(\textbf{w}(T))-F(\textbf{w}^*)}\geq T\left(\omega\alpha-\frac{\rho h(\tau)}{\tau\varepsilon^2}\right).
	\end{align}
	Due to the assumption that $\omega\alpha-\frac{\rho h(\tau)}{\tau\varepsilon^2}>0$, we take the reciprocal of \eqref{51}. The final result follows that 
	\begin{align*}
	F(\textbf{w}(T))-F(\textbf{w}^*)\leq\frac{1}{T\left(\omega\alpha-\frac{\rho h(\tau)}{\tau\varepsilon^2}\right)}.
	\end{align*}
\end{proof}

\section{Proof of Proposition 2}\label{E}
\begin{proof}
	Given $\cos\theta\geq0$, $0<\eta\beta<1$ and $0\leq\gamma<1$, condition $1$ in Lemma 2 always holds. 
	
	Considering $\rho h(\tau)\!=\!0$, because $\alpha>0$, condition $4$ in Lemma 2 is satisfied. By choosing an arbitrarily small $\varepsilon$, conditions $2$ and $3$ in Lemma 2 can be satisfied. Thus, according to the definition of $\textbf{w}^\mathrm{f}$ and Lemma 2, we have
	$$F(\textbf{w}^\mathrm{f})-F(\textbf{w}^*)\leq F(\textbf{w}(T))-F(\textbf{w}^*)\leq \frac{1}{T\omega\alpha}.$$
	So we prove Proposition 2 when $\rho h(\tau)=0$.
	
	Considering $\rho h(\tau)>0$, we set 
	\begin{align}\label{53}
	\varepsilon_0=\frac{1}{T\left(\omega\alpha-\frac{\rho h(\tau)}{\tau\varepsilon_0^2}\right)}.
	\end{align}
	After solving the above equation, we get
	\begin{align}\label{54}
	\varepsilon_0=\frac{1}{2T\omega\alpha}+\sqrt{\frac{1}{4T^2\omega^2\alpha^2 }+\frac{\rho h(\tau)}{\omega\alpha\tau}}
	\end{align}
	where because $\varepsilon>0$, we remove the negative solution and $\varepsilon_0>0$. For any $\varepsilon>\varepsilon_0$, it holds that
	$$\omega\alpha-\frac{\rho h(\tau)}{\tau\varepsilon^2}\geq\omega\alpha-\frac{\rho h(\tau)}{\tau\varepsilon_0^2}.$$
	Because $\omega\alpha-\frac{\rho h(\tau)}{\tau\varepsilon_0^2}=\frac{1}{\varepsilon_0 T}>0$ from \eqref{53} and \eqref{54}, condition $4$ in Lemma 2 is satisfied.
	Assume that $\exists \varepsilon>\varepsilon_0$ satisfies condition $2$ and $3$ in Lemma 2, then all conditions have been satisfied. So we have 
	$$F(\textbf{w}(T))-F(\textbf{w}^*)\!\leq\!\frac{1}{T\left(\omega\alpha-\frac{\rho h(\tau)}{\tau\varepsilon^2}\right)}\!\leq\!\frac{1}{T\left(\omega\alpha-\frac{\rho h(\tau)}{\tau\varepsilon_0^2}\right)}\!=\!\varepsilon_0.$$
	We derive $F(\textbf{w}(T))-F(\textbf{w}^*)<\varepsilon$, that contradicts with condition $3$ in Lemma 2. Hence, we know that condition $2$ and $3$ can not be satisfied at the same time for $\varepsilon>\varepsilon_0$. That means either $\exists k$ so that $F(\textbf{w}_{[k]}(k\tau))-F(\textbf{w}^*)\leq\varepsilon_0$ or $F(\textbf{w}(T))-F(\textbf{w}^*)\leq\varepsilon_0$. We combine this two cases and get
	\begin{align}\label{55}
	\min\left\{\min_{k} F(\textbf{w}_{[k]}(k\tau)); F(\textbf{w}(T)) \right\}-F(\textbf{w}^*)\leq\varepsilon_0.
	\end{align}
	According to Proposition 1, $F(\textbf{w}_{[k]}(k\tau))\geq F(\textbf{w}(k\tau))-\rho h(\tau)$.
	So \eqref{55} is rewritten as
	$$\min\left\{\min_{k} F(\textbf{w}(k\tau))-\rho h(\tau); F(\textbf{w}(T)) \right\}-F(\textbf{w}^*)\leq\varepsilon_0.$$
	When $k=K$, $F(\textbf{w}(K\tau))-\rho h(\tau)=F(\textbf{w}(T))-\rho h(\tau)\leq F(\textbf{w}(T))$. Combining the definition of $\textbf{w}^\mathrm{f}$, we obtain
	$$F(\textbf{w}^\mathrm{f})-F(\textbf{w}^*)\leq \varepsilon_0+\rho h(\tau)$$ that is the result in Proposition 2.
\end{proof}
\section{}\label{hospital}
\begin{lemma}\label{lem_hospital}
	$\lim_{\tau\to0}h(\tau)=0$.	
\end{lemma}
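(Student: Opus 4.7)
The plan is straightforward: although $h$ is used in the paper with integer argument, the formula \eqref{hx} defines $h$ as a continuous function of a real variable $x$ (each summand is either an exponential of the form $r^{x}$, a linear function of $x$, or a constant in $x$). Hence $\lim_{\tau\to 0}h(\tau)=h(0)$, and the lemma reduces to the purely algebraic claim $h(0)=0$.

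Evaluating \eqref{hx} at $x=0$: the two exponential terms give $E(\gamma A)^{0}+F(\gamma B)^{0}=E+F$; the trailing fraction collapses to $\frac{\gamma(\gamma^{0}-1)-(\gamma-1)\cdot 0}{(\gamma-1)^{2}}=0$; and there is a standalone $-\tfrac{1}{\eta\beta}$. So
\[
h(0)=\eta\delta\Bigl(E+F-\tfrac{1}{\eta\beta}\Bigr),
\]
and everything reduces to showing the identity $E+F=\tfrac{1}{\eta\beta}$.

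This identity is a direct algebraic consequence of the Vieta relations for $A,B$ (the roots of $\gamma x^{2}-(1+\gamma+\eta\beta)x+1=0$, as established in Lemma \ref{lemma2}), namely $A+B=(1+\gamma+\eta\beta)/\gamma$ and $AB=1/\gamma$. Combining the defining fractions of $E$ and $F$ over a common denominator, the numerator becomes $A(1-\gamma B)+B(\gamma A-1)=A-B$ (the two $\gamma AB$ cross-terms cancel), and one factor of $A-B$ cancels, leaving $E+F=1/[(\gamma A-1)(1-\gamma B)]$. Expanding the remaining denominator and substituting Vieta, $(\gamma A-1)(1-\gamma B)=\gamma(A+B)-\gamma^{2}AB-1=(1+\gamma+\eta\beta)-\gamma-1=\eta\beta$, which finishes the calculation.

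The main obstacle here is essentially only notational: one must resist the temptation to treat the constants $A,B,E,F$ as independent parameters and instead exploit the hidden quadratic structure. Once the Vieta identities are invoked in the numerator (producing the $\gamma AB$ cancellation) and again in the denominator (producing the collapse to $\eta\beta$), the identity $E+F=1/(\eta\beta)$ drops out and $h(0)=0$ is immediate; continuity of $h$ in $\tau$ then yields the lemma.
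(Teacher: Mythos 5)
Your algebra is correct: $E+F=1/(\eta\beta)$ does follow from the Vieta relations exactly as you compute, and with $A,B,E,F$ held fixed the function $h$ in \eqref{hx} is continuous in its argument, so $\lim_{x\to0}h(x)=h(0)=\eta\delta\bigl(E+F-\tfrac{1}{\eta\beta}\bigr)=0$. The paper itself already records $h(0)=0$ via this very identity in Appendix \ref{C} (and invokes $E+F=\tfrac{1}{\eta\beta}$ at the end of Appendix \ref{B}), so as a proof of the statement read literally your argument is fine and essentially cost-free.

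The difficulty is that the statement as printed is a typo, and your proof establishes the wrong limit. The lemma is cited exactly once, inside Lemma \ref{lem}, to justify ``$h(\tau)\to0$'' as $\eta\to0$ with $\tau$ fixed; and the paper's own proof computes $\lim A=1/\gamma$, $\lim B=1$, $\lim F=\gamma/(1-\gamma)^2$ --- quantities that do not depend on $\tau$ at all, only on $\eta$, $\gamma$, $\beta$ --- before finishing with L'Hospital's rule applied to $\eta/(\gamma A-1)$. So the intended claim is $\lim_{\eta\to0}h(\tau)=0$ for each fixed $\tau$. That claim is not a consequence of $h(0)=0$ plus continuity in $\tau$: for a fixed $\tau\geq 2$, $h(\tau)$ is a fixed positive number for each $\eta$, and the question is whether it vanishes as $\eta\to0$. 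The delicate point there is that $E=\frac{A}{(A-B)(\gamma A-1)}\to\infty$ as $\eta\to0$ (since $\gamma A\to1$), so the term $\eta\delta E(\gamma A)^\tau$ is an indeterminate $0\cdot\infty$; the paper resolves it by showing $\eta/(\gamma A-1)\to(1-\gamma)/\beta$, whence $\eta\delta E\to\delta/\beta$, which cancels against $-\eta\delta\cdot\tfrac{1}{\eta\beta}=-\delta/\beta$. None of this appears in your proposal, so the version of the lemma that the surrounding argument actually needs remains unproven.
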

\begin{proof}
	If $\tau\to0$, we can get $\lim_{\tau\to0}A=\frac{1}{r}$, $\lim_{\tau\to0}B=1$ and $\lim_{\tau\to0}F=\frac{\gamma}{(1-\gamma)^2}$ straightly.
	Then we have
	\begin{align*} 
	&\lim_{\tau\to0}h(\tau)\\
	=&\lim_{\tau\to0}\!\eta\delta\left[ E (\gamma A)^\tau\!+\!F (\gamma B)^\tau\!-\!\frac{1}{\eta\beta}\!-\!\frac{\gamma(\gamma^\tau-1)-(\gamma-1)\tau}{(\gamma-1)^2}\right]\\
	=&\lim_{\tau\to0}\!\eta\delta\left( E -\!\frac{1}{\eta\beta}\right)=\frac{\delta}{1-\gamma}\lim_{\tau\to0}\frac{\eta}{\gamma A-1}-\frac{\delta}{\beta}=0
	\end{align*}
	where the last equality is because by using L'Hospital's rule, we have
	$$\lim_{\tau\to0}\frac{\eta}{\gamma A-1}=\frac{1-\gamma}{\beta}.$$
\end{proof}
\end{document}